\newtheorem{theorem}{Theorem}
\newtheorem{definition}{Definition}
\newtheorem{proposition}{Proposition}
\newtheorem*{claim}{Claim}
\newtheorem{lemma}{Lemma}
\newtheorem{corollary}{Corollary}
\titleformat{\title}{\LARGE\bfseries}{\thesection}{1em}{} 
\titleformat{\section}{\Large\bfseries}{\thesection}{1em}{}  
\titleformat{\subsection}{\normalsize\bfseries}{\thesubsection}{1em}{}  
\titleformat{\subsubsection}{\small\bfseries}{\thesubsubsection}{1em}{} 
\title{Online Bandits with (Biased) Offline Data: Adaptive Learning under Distribution Mismatch}
\author{\small Wang Chi Cheung$^1$, Lixing Lyu$^2$} 
\affil{\footnotesize 
$^1$ Department of Industrial Systems Engineering and Management, National University of Singapore \\
$^2$ Institute of Operations Research and Analytics, National University of Singapore \\
}
\begin{document}

\bibliographystyle{plainnat}
\date{}  
\maketitle
\newcommand*\abs[1]{\lvert#1\rvert}

\begin{abstract}%
Traditional online learning models are typically initialized from scratch. By contrast, contemporary real-world applications often have access to historical datasets that can potentially enhanced the online learning processes. We study how offline data can be leveraged to facilitate online learning in stochastic multi-armed bandits and combinatorial bandits. In our study, the probability distributions that govern the offline data and the online rewards can be different. We first show that, without a non-trivial upper bound on their difference, no non-anticipatory policy can outperform the classical Upper Confidence Bound (UCB) policy by \cite{AuerCBF02}, even with the access to offline data. In complement, we propose an online policy MIN-UCB for multi-armed bandits. MIN-UCB outperforms the UCB when such an upper bound is available. MIN-UCB adaptively chooses to utilize the offline data when they are deemed informative, and to ignore them otherwise. We establish that MIN-UCB achieves tight regret bounds, in both instance independent and dependent settings. 
We generalize our approach to the combinatorial bandit setting by introducing MIN-COMB-UCB, and we provide corresponding instance dependent and instance independent regret bounds. We illustrate how various factors, such as the biases and the size of offline datasets, affect the utility of offline data in online learning.
We discuss several applications and conduct numerical experiments to validate our findings.
\end{abstract}

\section{Introduction}

Online learning problems with bandit feedback, including multi-armed bandits (MAB), combinatorial bandits and linear bandits, constitute a class of classical models in sequential decision-making. In traditional models, online learning is initialized with no historical dataset. The Decision Maker (DM) must acquire knowledge solely through the online interactions with the environment. 
By contrast, in many contemporary real-world scenarios, 
past datasets related to the underlying model are often available before online learning begins.
These offline datasets could be potentially beneficial to online learning. 
For instance, when Apple launches a new iPhone model in Canada, past sales data from the US market could be of relevance. 
Similar practices are common in operations management, where firms routinely leverage historical demands and pricing data to aid decision-making in new markets or selling seasons.

The above observation raises an intriguing question: Can we develop an algorithm that effectively utilizes offline data for online learning? Such approaches carry significant allure, as they potentially reduce costly exploration in the learning process. Nevertheless, it is often too idealistic, if not flawed, to assume that the historical data and the online environment are governed by the same probability distribution, given the ubiquity of distributional shifts in every-day life applications. 
Indeed, past data may be biased or misleading. 
For example, Amazon extrapolated behavioral insights from Kindle Fire tablet users to guide the Fire Phone design. However, the resulting product failed to align with core smartphone demand and was discontinued within a year \cite{wohlsen2015amazon}.
In such cases, using offline data can harm online decision-making.

This presents a fundamental challenge: \textbf{How to effectively use offline data, that may be biased or mismatch as compared to the online environment?}
Ideally, we aim to design an adaptive policy that reaps the benefit of offline data when it is informative, i.e. when the offline and online distributions are sufficiently close, while judiciously ignores the offline data and learns from scratch otherwise. 
Such an approach is particularly valuable in practice, as it can accelerate decision making and reduce costly exploration when offline data is reliable, while avoiding the risks of distribution shifts that have caused failures in real-world applications. Hence, developing methods to manage these two sources of information is both meaningful and necessary.

Motivated by the above discussions, we first consider a stochastic multi-armed bandit model with a possibly biased offline dataset. The learning horizon consists of a warm-start phase, followed by an online phase whereby decision making occurs. During the warm-start phase, the DM receives an offline dataset, consisting of samples governed by a latent probability distribution $P^{\text{(off)}}$. The subsequent online phase is the same as the standard multi-armed bandit model, except that the offline dataset can be incorporated in decision making. The random online rewards are governed by a latent probability distribution $P^{\text{(on)}}$, which can be different from $P^{\text{(off)}}$. The DM aims to maximize the total cumulative reward 
during the online phase. Subsequently, we extend the above framework to a stochastic combinatorial bandit model.

\subsection{Research Intuitions and Main Contributions}

Intuitively, in the case of multi-armed bandits, when the offline data distribution $P^{\text{(off)}}$ and online reward distribution $P^{\text{(on)}}$ are ``far apart'', the DM should conduct online learning from scratch and ignore the offline dataset. For example, the UCB policy by \cite{AuerCBF02}, which we call the ``vanilla UCB'' in the remainder of the manuscript, incurs expected regret (i.e. the performance gap to the optimal policy, formally defined in the forthcoming Section \ref{sec:model-mab}) at most 
\begin{equation} \label{eq:vanilla_reg}
     O\left(\sum_{a \in \mathcal{A}:\Delta(a) >0} \frac{\log T}{\Delta(a)}\right),
\end{equation}
where $T$ is the number of time steps in the online phase, $\mathcal{A}$ is the arm set, and $\Delta(a)\geq 0$ is the difference between the expected reward of an optimal arm and that of arm $a$ (see Section \ref{sec:model-mab} for the full definition). The bound (\ref{eq:vanilla_reg}) holds for all $P^{\text{(off)}}$, $P^{\text{(on)}}$. On the contrary, when $P^{\text{(off)}}$ and $P^{\text{(on)}}$ are ``sufficiently close'', the DM should incorporate the offline data into online learning and avoid unnecessary exploration. 
For example, when $P^{\text{(off)}} = P^{\text{(on)}}$, HUCB1 \citep{shivaswamy2012multi} and MonUCB \citep{banerjee2022artificial} incur expected regret at most
\begin{equation} \label{eq:improved_reg}
    O\left(\sum_{a \in \mathcal{A}:\Delta(a)>0} \max\left\{\frac{\log T}{\Delta(a)} - T_\text{S}(a)\Delta(a), 0\right\}\right),
\end{equation}
where $T_{\text{S}}(a)$ is the number of offline samples on arm $a$. (\ref{eq:improved_reg}) is a strictly smaller regret bound than  (\ref{eq:vanilla_reg}). However, (\ref{eq:improved_reg}) only holds when $P^{\text{(off)}} = P^{\text{(on)}}$. The above inspires the following question:
\begin{center}
\textit{(Q) Can the DM \emph{outperform} the vanilla UCB, i.e. achieves a strictly smaller regret bound than (\ref{eq:vanilla_reg}) when $P^{\text{(off)}} = P^{\text{(on)}}$, while retains the regret bound (\ref{eq:vanilla_reg}) for general $P^{\text{(off)}}, P^{\text{(on)}}$?}
\end{center}

The answer to (Q) turns out to be somewhat mixed.
Our novel contributions shed light on (Q):

\textbf{An Impossibility Result. } In Section \ref{sec:impossibility}, we show that no non-anticipatory policy can achieve the aim in (Q). Even with an offline dataset, no non-anticipatory policy can outperform the vanilla UCB without any additional knowledge or restriction on the difference between $P^{\text{(off)}}, P^{\text{(on)}}$. 

\textbf{Multi-armed Bandit Models.} To bypass the impossibility result, we assume that the DM is provided with auxiliary information dubbed \emph{valid bias bound} $V$, which supplements the offline dataset. The bound $V$ serves as an upper bound on the difference between $P^{\text{(off)}}$ and $P^{\text{(on)}}$. We propose the MIN-UCB policy, which achieves a strictly smaller regret bound than (\ref{eq:vanilla_reg}) when $P^{\text{(off)}},   P^{\text{(on)}}$ are ``sufficiently close''. In particular, our regret bound reduces to (\ref{eq:improved_reg}) if the DM knows $P^{\text{(off)}} =  P^{\text{(on)}}$. We provide both instance-dependent and instance-independent regret upper bounds on MIN-UCB. Additionally, we show the tightness of both regret upper bounds by providing the corresponding regret lower bounds. Our analysis precisely characterizes the meaning of ``far apart'' and ``sufficiently close'' in the above discussion. The design and analysis of MIN-UCB are provided in Section \ref{sec:MINUCB}.

\textbf{Generalization to Combinatorial Bandit Models.} We extend our framework to combinatorial bandits in Section \ref{sec:MIN-COMB-UCB}. 
We propose the MIN-COMB-UCB policy, and establish an instance-independent regret upper bound.
Similar to the multi-armed bandit case, our regret upper bound is uniformly better than that without access to offline data, such as \cite{chen2013combinatorial}. When $P^{\text{(off)}}, P^{\text{(on)}}$ are sufficiently close, our regret upper bound is strictly smaller than that in \cite{chen2013combinatorial}.
In the special case of combinatorial bandits with linear rewards, we further derive tight instance-independent regret upper bounds and instance-dependent regret upper bounds.    
The latter shares a similar saving term to that in the multi-armed bandit model, quantifying the benefit of offline data. In particular, when we specialize the combinatorial bandit model degenerates to the multi-armed bandit model, our regret upper bounds reduce to those presented in Section \ref{sec:MINUCB}. The generalization shows the adaptivity and scalability of our approach to broader online learning models.

\textbf{Technical Novelty.} Our technical contributions are summarized as follows. 
\begin{enumerate}
    \item \textbf{Impossibility Result under Unknown Bias:} To the best of our knowledge, our work is the first to establish a regret lower bound in the presence of potentially biased offline data, without knowing any prior information on the discrepancy between $P^{\text{(off)}}$ and $P^{\text{(on)}}$.
    \item \textbf{Tight Instance-dependent Regret Bounds:} When $P^{\text{(off)}} =  P^{\text{(on)}}$, we establish the tightness of the instance-dependent regret upper bound (\ref{eq:improved_reg}), which is novel in the literature. We further provide matching instance-dependent upper and lower bounds for general $P^{\text{(off)}}, P^{\text{(on)}}$, precisely quantifying the impact of $T_{\text{S}}(a)$ and $V$ on each sub-optimal arm $a$. The tight analysis also represents new contributions to the literature.
    \item \textbf{Tight Instance-independent Regret Bounds:} We establish a pair of instance-independent regret upper and lower bounds, which match each other up to a logarithmic factor. This optimal regret bound quantifies the effect of arbitrary $\{T_{\text{S}}(a)\}_{a \in \mathcal{A}}$ and its heterogeneity by involving the optimum of a novel linear program. Even in the special case when $P^{\text{(off)}} =  P^{\text{(on)}}$, both the upper and lower bounds are novel in the literature.
\end{enumerate}

Finally, we provide numerical experiments in Section \ref{sec:numerical}.

\subsection{Key Insights and Managerial Implications}

Our technical contributions bear several important implications. 
Firstly, our study sheds light on the benefits of offline data in multi-armed bandit settings, where the offline data are potentially distributed differently from the online data. The takeaway message is: the DM can only reap the benefits of offline data, when the DM has non-trivial auxiliary information about the discrepancy between offline and online reward distributions. The necessity of auxiliary information is formalized in our impossibility result in Section \ref{sec:impossibility}. The following thought experiment provides another perspective. Consider multi-armed bandits with possibly biased offline data, but without any auxiliary information. Now, it is intuitive to conduct an exploration to identify whether the offline data helps us accelerate the estimation of the mean rewards. Our takeaway message is that the cost of such an exploration outweighs the benefits of the offline data. By contrast, by harnessing suitable auxiliary information, we can tip the balance and reap the benefits of offline data.

Secondly, offline data improve the performance over the vanilla UCB only when the reward distributions $P^{\text{(off)}}$ and $P^{\text{(on)}}$ are sufficiently close. 
A main contribution of our analysis is the identification of a discrepancy measure between the offline and online distributions for each arm $a$. When this discrepancy is large, the offline data should be disregarded; when it is small, the offline data becomes informative and can be effectively utilized to enhance performance.
In the latter case, MIN-UCB achieves a strictly smaller regret bound than the vanilla UCB. The improvement, captured by a saving term, becomes larger when $T_\text{S}(a)$ for an $a\in {\cal A}$ increases, and $V(a)$ for an $a\in {\cal A}$ decreases. 
A similar phenomenon appears in our instance-independent regret bound. Only when $V(a)$'s are sufficiently small can the minimax regret bound be improved. Moreover, the improvement becomes more pronounced when $\min_{a \in \mathcal{A}} T_{\text{S}}(a)$ is larger and $V(a)$ for each $a$ is smaller, reflecting the intuition that a larger and more reliable offline dataset enables more effective online learning.

Thirdly, when the offline data is informative, factors other than the size of offline dataset can influence algorithm performance.
In particular, the performance depends not only on the value of $V$ and discrepancy between $P^{\text{(off)}}$ and $P^{\text{(on)}}$, but also on the direction of the discrepancy. For example, if $P^{\text{(off)}}$ underestimates the expected reward of a sub-optimal arm $a$, it allows our algorithm to eliminate $a$ faster, even if $P^{\text{(off)}}$ and $P^{\text{(on)}}$ are ``far apart".
Besides, the heterogeneity in offline dataset sizes $\{T_{\text{S}}(a)\}_{a \in \mathcal{A}}$ also plays a critical role. The impact is not determined solely by the minimum, maximum, or average of $\{T_{\text{S}}(a)\}_{a \in \mathcal{A}}$, but rather depends on the full profile of these values in a non-trivial way.

Lastly, when the offline data is informative, our approach yields notable improvements in several important special cases. Specifically, consider the setting where abundant offline data is available for only a subset of arms, while no prior information exists for the remaining arms. In other words, the DM has reliable information on a small set but must make decisions across a much larger action space. Another case arises when the length of online phase $T$ is relatively small compared to the number of arms $K$ and $\{T_{\text{S}}(a)\}_{a \in \mathcal{A}}$, such that the exploration budget is severely limited. In both scenarios, our policy demonstrates explicit and significant improvements over the standard online learning algorithms without access to the offline data. See Section \ref{sec:MINUCB} for more details.

\subsection{Related Works}

\textbf{Online Learning with Offline Data } The study of multi-armed bandits with offline data has attracted increasing attention. \cite{shivaswamy2012multi,banerjee2022artificial} are the most relevant works, analyzing the special case of $P^\text{(on)} = P^\text{(off)}$ and establishing only the instance dependent regret upper bound, whereas we provide both tight instance dependent and independent regret bounds. Online learning with offline data under $P^\text{(on)} = P^\text{(off)}$ is also explored in dynamic pricing \cite{bu2020online} and reinforcement learning \cite{HaiJLVW23,WagenmakerP23}. 
Extensions include sequentially revealed historical data \cite{gur2022adaptive} and clustered data \cite{bouneffouf2019optimal,ye2020combining,tennenholtz2021bandits}.  
\cite{zhang2019warm} is another closely related work on contextual bandits that allows $P^\text{(on)} \neq P^\text{(off)}$. We provide a technical discussion in Appendix \ref{sec:app_disc_zhang2019}.

\textbf{Bayesian Policies} Thompson sampling (TS) and related Bayesian policies are widely used to integrate offline data into online learning. When the online and offline reward distributions are identical, offline data can be used to construct well-specified priors that yield improved regret bounds than \cite{russo2014learning,RussoV16,LiuL16} when the offline data size increases. In contrast, \citep{LiuL16,SimchowitzTKHLDS21} show that TS with misspecified priors may perform worse than state-of-the-art online policies without offline data. Our paper addresses the orthogonal issue of determining whether to incorporate possibly biased offline data in online learning. 

\textbf{Machine Learning under Domain Shift } Our model is closely related to machine learning under domain shift. \cite{SiNZB23} considers offline policy learning with historical data on contextual bandits setting in a changing environment from a distributionally robust optimization perspective. \cite{chen2022data} explores a variant in RL context, whose policy is also endowed with an upper bound on the distributional drift between offline and online models. Further discussion is provided in Appendix \ref{sec:app_disc_chen2022rl}. 
Generally, distributional drift has been extensively studied in supervised learning \cite{CrammerKW08,mansour2009domain,BenDavidBCKPV10} and stochastic optimization \cite{besbesMM22}.

\textbf{Online Learning with Advice } Our work is related to online learning with advice, which considers improving performance guarantees with hints or predictions. The offline dataset in our setting can be viewed as such a hint.
Prior works include multi-armed bandits \cite{WeiL18}, linear bandits \cite{CutkoskyDDZ22}, contextual bandits \cite{wei2020taking}, full feedback settings \cite{steinhardt2014adaptivity,rakhlin2013online,rakhlin2013optimization}, online episodic MDPs \cite{golowich2022can}, and discounted MDPs \cite{feng2019does}. These works do not apply to our setting since they do not refine instance dependent regret bounds. We provide more discussions in Appendix \ref{sec:app_disc_onlineadvice}. Our theme of whether to leverage potentially biased offline data is also related to online model selection  \cite{AgarwalLNS17,pacchiano2020model,lee2021online,cutkosky2021dynamic,PacchianoDG23}. Unlike these black-box approaches, we tailor our decision-making on whether to use offline data with a novel UCB algorithm and auxiliary input $V$, and the latter is not studied in the above-mentioned works. 

\textbf{Comparison to the Conference Version \cite{cheung2024leveraging} } A preliminary version of this manuscript appears in \cite{cheung2024leveraging}. Our work incorporates the following additional contributions compared to \cite{cheung2024leveraging}:

\begin{itemize}
    \item \textbf{Extentions to Combinatorial Bandits.} \cite{cheung2024leveraging} only studies the multi-armed bandit setting. In the current manuscript, we extend our framework to general combinatorial bandits. We propose a new policy, MIN-COMB-UCB, which allows efficient (computationally and regret-wise) online learning even when the number of combinatorial arms can be exponential in the number of basic arms. For MIN-COMB-UCB, we establish an instance-independent regret upper bound, which is strictly smaller than the existing bounds without offline data when $P^{(\text{off})}$, $P^{(\text{on})}$ are close enoguh. For the special case with linear rewards case, we provide tight instance-independent regret bounds and an instance-dependent regret upper bound with a similar ``Saving" term. All these regret bounds rely on novel techniques that combine the standard combinatorial bandit framework with an arbitrary offline dataset.
    \item \textbf{Discussions on Applications.} We discuss applications of our framework in the context of dynamic pricing and social influence maximization, illustrating how our framework can be applied to these real-world scenarios. We demonstrate that leveraging offline data alongside online learning is particularly valuable for improving decision quality and scalability.
    \item \textbf{Numerical Experiments.} We conduct more systematic experiments to examine the effect of discrepancy, the length of online phase $T$, and the size of offline dataset on algorithm performance. The results validate our theoretical findings and show that the bias, direction, and magnitude of the offline dataset all affect performance. Furthermore, the results highlights the importance of both high-quality data (small bias) and effective algorithms (such as our MIN-UCB).
\end{itemize}

\subsection{Notation}
We denote ${\cal N}(\mu, 1)$ as the Gaussian distribution with mean $\mu$ and variance 1. We abbreviate ``identically and independently distributed'' as ``i.i.d.''. The relationship $R\sim P$ means that the random variable $R$ follows the probability distribution $P$.

\section{Model: Multi-armed Bandits with Offline Data}\label{sec:model-mab}

We consider a stochastic $K$-armed bandit model with possibly biased offline data. 
Denote ${\cal A}= \{1, \ldots, K\}$ as the set of $K$ arms. The event horizon consists of a warm-start phase, followed by an online phase. During the warm-start phase, the DM receives a collection of $T_\text{S}(a)\in \mathbb{Z}_{\geq 0}$ i.i.d. samples, denoted $S(a) = \{X_s(a)\}^{T_\text{S}(a)}_{s=1}$, for each $a\in {\cal A}$. We denote the set of offline samples collectively as $S = \{S(a)\}_{a\in {\cal A}}$. We postulate that $X_1(a), \ldots, X_{T_{\text{S}}(a)}(a)\sim P^{\text{(off)}}_a$, the \emph{offline reward distribution} on arm $a$. 

The subsequent online phase consists of $T$ time steps. For $t = 1, \ldots, T$, the DM chooses an arm $A_t\in {\cal A}$ to pull, which gives the DM a random reward $R_t$. Pulling arm $a$ in the online phase generates a random reward $R(a) \sim P^\text{(on)}_a$, dubbed the \emph{online reward distribution} on arm $a$. We emphasize that $P^{\text{(on)}} = \{P^{\text{(on)}}_a\}_{a\in {\cal A}}$ needs not be the same as $P^{\text{(off)}}= \{P^{\text{(off)}}_a\}_{a\in {\cal A}}$. Finally, the DM proceeds to time step $t+1$, or terminates when $t=T$.

The DM chooses arms $A_1, \ldots, A_T$ with a \emph{non-anticipatory policy} $\pi=\{\pi_t\}^\infty_{t=1}$. The function $\pi_t$ maps the observations $H_{t-1}= (S, \{(A_s. R_s)\}^{t-1}_{s=1})$ collected at the end of $t-1$, to an element in $\{x\in \mathbb{R}^{K}_{\geq 0 }: \sum_{a\in {\cal A}}x_a = 1\}$. The quantity $\pi_t(a~|~H_{t-1})\in [0, 1]$ is the probability of $A_t = a$, conditioned on $H_{t-1}$ under policy $\pi$. While our proposed algorithms are deterministic, i.e. $\pi_t(a~|~H_{t-1})\in \{0, 1\}$ always, we allow randomized policies in our regret lower bound results. The policy $\pi$ could utilize $S$, which provides a potentially biased estimate to the online model since we allow $P^{\text{(on)}}\neq P^{\text{(off)}}$.

Altogether, the underlying instance $I$ is specified by the tuple $({\cal A}, \{T_\text{S}(a)\}_{a\in {\cal A}}, P, T)$, where $P = (P^{\text{(off)}}, P^{\text{(on)}})$. The DM only knows ${\cal A}$ and $S$, but not $P$ and $T$, at the start of the online phase.  
For every $a\in {\cal A}$, we assume that both $P^{\text{(off)}}_a, P^{\text{(on)}}_a$ are 1-subGaussian, which is also known to the DM. For $a\in {\cal A}$, denote $\mu^\text{(on)}(a) = \mathbb{E}_{R(a)\sim P^\text{(on)}_a}[R(a)]$, and $\mu^\text{(off)}(a) = \mathbb{E}_{R(a)\sim P^\text{(off)}_a}[R(a)]$. We do not have any boundedness assumption on $\mu^\text{(on)}(a), \mu^\text{(off)}(a)$.

The DM aims to maximize the total reward earned in the online phase. We quantify the performance guarantee of the DM's non-anticipatory policy by its regret. Define $\mu_*^\text{(on)} = \max_{a\in {\cal A}} \mu^{\text{(on)}}(a)$, and denote $\Delta(a) = \mu^\text{(on)}_* - \mu^\text{(on)}(a)$ for each $a\in {\cal A}$.  
The DM aims to design a non-anticipatory policy $\pi$ minimize the regret 
\begin{equation}\label{eq:regret}
    \text{Reg}_T(\pi, P) = T \mu_*^\text{(on)} - \sum_{t=1}^T \mu^\text{(on)}(A_t) = \sum^T_{t=1}\Delta(A_t),
\end{equation}
despite the uncertainty on $P$. 
While (\ref{eq:regret}) involves only $\mu^\text{(on)}$ but not $\mu^\text{(off)}$, the choice of $A_t$ is influenced by both the offline data $S$ and the online data $\{A_s, R_s\}^{t-1}_{s=1}$. 

As mentioned in the introduction and detailed in the forthcoming Section \ref{sec:impossibility}, the DM cannot outperform the vanilla UCB policy with a non-anticipatory policy. Rather, the DM requires information additional to the offline dataset and a carefully designed policy. We consider the following auxiliary input in our algorithm design.

\subsection{Auxiliary Input: Valid Bias Bounds $V$}

We say that $V = \{V(a)\}_{a\in {K}}\in (\mathbb{R}_{\geq 0}\cup \{\infty\})^{|\mathcal{A}|}$ is a valid \emph{bias bound} on an instance $I$ if 
\begin{equation}\label{eq:V}
V(a) \geq |\mu^{\text{(off)}}(a) - \mu^{\text{(on)}}(a) | \quad \text{for each $a\in{\cal A}$.}
\end{equation}
In our forthcoming policy MIN-UCB, the DM is endowed the auxiliary input $V$ in addition to the offline dataset $S$ before the online phase begins. The quantity $V(a)$ serves as an upper bound on the amount of distributional shift from $P^\text{(off)}_a$ to $P^\text{(on)}_a$. The condition (\ref{eq:V}) always holds in the trivial case of $V(a) = \infty$ for all $a\in {\cal A}$, which is when the DM has no knowledge on the relationship between $P^{\text{on}}, P^{\text{off}}$. By contrast, when $V(a) \neq \infty$, the DM has non-trivial knowledge on the difference $ |\mu^{\text{(off)}}(a) - \mu^{\text{(on)}}(a) |$. 

The knowledge of an upper bound such as $V$ is in line with the model assumptions in studies on learning under distributional shift. Similar upper bounds are assumed in the contexts of supervised learning \citep{CrammerKW08}, stochastic optimization \citep{besbesMM22}, offline policy learning for contextual bandits \citep{SiNZB23}, and multi-task bandit learning \citep{WangZSRC21}.
Methodologies for constructing such bounds are also discussed in the literature. For instance, \cite{blanchet2019robust} designs an approach based on several machine learning estimators, such as LASSO. 
\cite{SiNZB23} provides managerial insight on how to estimate such bounds empirically. \cite{chen2022data} constructs them through cross-validation and implements it in a healthcare setting.

\subsection{Application: Dynamic Pricing}

Now we discuss the application of dynamic pricing (\cite{keskin2014dynamic}), which is naturally captured by our model and helps explain the rationale behind introducing offline data.

In the single-product dynamic pricing setup, a retailer dynamically adjusts the posted price of a product for sequentially arriving customers, aiming to maximize total revenue. Specifically, the arm set $\mathcal{A}$ consists of $K$ possible prices. Each customer can either purchase one unit of the product or do nothing. The customer utility values for the product are i.i.d. according to an unknown latent probability distribution $\Gamma$. At each time $t$, one customer arrives, and the DM posts a price $A_t \in \mathcal{A}$. The customer makes a purchase if and only if her/his random utility $U$ is larger than or equal to $A_t$, i.e., $U \ge A_t$. In which case the customer purchases, she/he will pay $A_t$ to the DM. Thus, $R_t = \boldsymbol{1}\{U\ge A_t\}\cdot A_t$ and $\boldsymbol{\mu}^{\text{(on)}}(a) = a \cdot \Pr(U \ge a) $. The DM, however, lacks prior information about the latent utility distribution $\Gamma$ and only observes binary purchase feedback at the posted price.

In many practical scenarios, the DM has access to offline data, past sales observations from related products or different markets. Such data can provide valuable information for learning the demand function and guiding future pricing, even if not perfectly aligned with the current process.
For example, when a retailer plans to sell a product to a new customer group that overlaps substantially (e.g. 90\%) with a previously served group, historical pricing data naturally inform new pricing strategies.
Since only a small fraction of customers differ, the aggregrated price sensitivity of the new group is expected to be similar to that of the previous one, i.e. $P^{\text{(off)}}$ and $P^{\text{(on)}}$ are close in our model. It is therefore reasonable to assume that a valid bias bound $V$ with small $V_{\max}$ exists and can be estimated via machine learning or statistic tools, as discussed previously.

The relevance of such offline data is further reflected in many real-world scenarios.
For example, when a company launches a product in Canada after its release in the United States, the two markets typically share substantial similarities, such as language, culture.
Although there exist certain differences, such as variations in income levels and regional regulations, these differences are relatively minor, and the U.S. sales data can still provide valuable insights for pricing decisions in the Canadian market.
Likewise, when Apple releases a new-generation iPhone, like iPhone 17, the target customers largely overlap with those of the previous model, such as loyal users in Apple ecosystem. While the new model may differ in some features or technology, the customer preferences and price sensitivity remain broadly consistent. 
In both examples, offline data not only exist but also have tangible modeling and algorithmic value, that they can meaningfully accelerate learning and enhance pricing performance in practice.

\section{An Impossibility Result}\label{sec:impossibility}
We illustrate our impossibility result with two instances $I_P, I_Q$. The instances $I_P, I_Q$ share the same arm set ${\cal A} = \{1, 2\},\{T_\text{S}(a)\}_{a\in {\cal A}}$ and horizon length $T$. However, $I_P, I_Q$ differ in their respective reward distributions $P$, $Q$, and $I_P, I_Q$ have different optimal arms. Consider a fixed but arbitrary $\beta\in (0, 1/2)$. Instance $I_P$ has well-aligned reward distributions $P^\text{(off)} = P^\text{(on)}$, where
$$
P^\text{(on)}_1 = {\cal N}(0, 1), \quad P^\text{(on)}_2 = {\cal N}(-T^{-\beta}, 1 ).
$$
In instance $I_P$, we have $\Delta(2) = T^{-\beta}$, and a non-empty offline dataset provides useful hints on arm 1 being optimal. For example, given $T_\text{S}(1) = T_\text{S}(2) \geq 128 (T^{2\beta} - T^{2\beta - \epsilon})\log T$ where $\epsilon \in (0, \beta)$, the existing policies H-UCB and MonUCB achieve an expected regret (see Equation (\ref{eq:improved_reg})) at most $O\left(\frac{\log T}{\Delta(2)} -T_\text{S}(2)\Delta(2)\right) =  O(\log T(T^{\beta} - T^{\beta}+ T^{\beta - \epsilon})) = O(T^{\beta - \epsilon}\log T)$,
which strictly outperforms the vanilla UCB policy that incurs expected regret $ O(\log T / \Delta(2)) = O(T^\beta \log T)$. Despite the apparent improvement, the following claim shows that any non-anticipatory policy (not just H-UCB or MonUCB) that outperforms the vanilla UCB on $I_P$ would incur a larger regret than the vanilla UCB on a suitably chosen $I_Q$. 
\begin{theorem}\label{thm:impossible}
    Let $T_\text{S}(1), T_\text{S}(2)$ be arbitrary. Consider an arbitrary non-anticipatory policy $\pi$ (which only possesses the offline dataset $S$ but not the auxiliary input $V$) satisfies $\mathbb{E}[\text{Reg}_T(\pi, P)]\leq C T^{\beta - \epsilon}\log T$ on instance $I_P$, where $\epsilon \in (0, \beta)$, $C>0$ is an absolute constant, and the horizon $T$ is so large that $C < T^\epsilon / (4\log 
T)$. Set $Q = (Q^\text{(off)}, Q^\text{(on)})$ in the instance $I_Q$ as $Q^\text{(off)} = P^\text{(off)}$, $Q^\text{(on)}_1 = P^\text{(on)}_1$,
\begin{equation*}
    Q^\text{(on)}_2  = {\cal N}\left(\frac{1}{\sqrt{C\log T}T^{\beta - (\epsilon / 2)}} - \frac{1}{T^\beta}, 1\right).
\end{equation*}
The following inequality holds: $\mathbb{E}[\text{Reg}(\pi, Q)]  \geq$
    \begin{equation}\label{eq:claimed_bad_regret}
    \begin{aligned}\frac{T^{1 - \beta}}{4} \cdot e^{-2} - C T^{\beta - \epsilon}\log T  = \Omega(T^{1 - \beta}) > \Omega(\sqrt{T}).
\end{aligned}
\end{equation}
\end{theorem}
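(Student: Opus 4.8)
The plan is to run a standard two-point change-of-measure argument, exploiting that $I_P$ and $I_Q$ are engineered to be nearly indistinguishable to $\pi$ while having different optimal arms. Write $N_a(T)$ for the (random) number of pulls of arm $a$ in the online phase, and let $\mathbb{P}_P,\mathbb{P}_Q$ denote the laws of the entire observation history (the offline samples $S$ together with the online trajectory) under the two instances, with $\mathbb{E}_P,\mathbb{E}_Q$ the corresponding expectations. First I would convert the hypothesis on $I_P$ into a scarcity bound on arm $2$: since arm $1$ is optimal on $I_P$ with $\Delta(2)=T^{-\beta}$, the regret identity (\ref{eq:regret}) gives $\mathbb{E}_P[\text{Reg}_T(\pi,P)] = T^{-\beta}\,\mathbb{E}_P[N_2(T)]$, so the assumption $\mathbb{E}_P[\text{Reg}_T(\pi,P)]\le CT^{\beta-\epsilon}\log T$ forces $\mathbb{E}_P[N_2(T)]\le CT^{2\beta-\epsilon}\log T$.

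Second, I would bound the divergence between the two histories. The instances differ only in the online reward law of arm $2$; because $Q^{\text{(off)}}=P^{\text{(off)}}$ and $Q^{\text{(on)}}_1=P^{\text{(on)}}_1$, the offline samples and every arm-$1$ pull contribute nothing to the divergence. For the unit-variance Gaussians on arm $2$ one has $\text{KL}(P^{\text{(on)}}_2\,\|\,Q^{\text{(on)}}_2)=\tfrac12\bigl(\tfrac{1}{\sqrt{C\log T}\,T^{\beta-\epsilon/2}}\bigr)^2$, the square of exactly the offset built into $Q^{\text{(on)}}_2$. The divergence-decomposition lemma then yields $\text{KL}(\mathbb{P}_P\,\|\,\mathbb{P}_Q)=\mathbb{E}_P[N_2(T)]\cdot\text{KL}(P^{\text{(on)}}_2\,\|\,Q^{\text{(on)}}_2)$, and multiplying the two displayed bounds shows this total divergence is at most a constant. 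This calibration is the crux of the construction: the perturbation $\tfrac{1}{\sqrt{C\log T}\,T^{\beta-\epsilon/2}}$ is reverse-engineered so that (pull-count bound) $\times$ (per-pull KL) stays $O(1)$ regardless of how $\pi$ allocates its pulls.

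Third, I would transfer probability mass. Applying the Bretagnolle--Huber inequality to the event $\{N_1(T)>T/2\}$ gives $\mathbb{P}_P(N_1(T)\le T/2)+\mathbb{P}_Q(N_1(T)>T/2)\ge \tfrac12 e^{-\text{KL}(\mathbb{P}_P\|\mathbb{P}_Q)}$; bounding $\mathbb{P}_P(N_1(T)\le T/2)=\mathbb{P}_P(N_2(T)\ge T/2)$ by Markov using Step~1 leaves $\mathbb{P}_Q(N_1(T)>T/2)\ge \tfrac12 e^{-\text{KL}}-\tfrac{2\mathbb{E}_P[N_2(T)]}{T}$. Finally I would convert back to regret on $I_Q$: the offset keeps the mean of $Q^{\text{(on)}}_2$ strictly positive — this is precisely what $C<T^{\epsilon}/(4\log T)$ guarantees — so arm $2$ is optimal on $I_Q$ with the gap for arm $1$ at least $T^{-\beta}$, whence $\mathbb{E}_Q[\text{Reg}_T(\pi,Q)]\ge T^{-\beta}\mathbb{E}_Q[N_1(T)]\ge T^{-\beta}\cdot\tfrac{T}{2}\,\mathbb{P}_Q(N_1(T)>T/2)$. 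Substituting the previous display and the Step-1 bound, and using that the total divergence is at most a constant (so $e^{-\text{KL}}\ge e^{-2}$), produces the claimed $\tfrac{T^{1-\beta}}{4}e^{-2}-CT^{\beta-\epsilon}\log T=\Omega(T^{1-\beta})$.

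The hard part will be the simultaneous tension in choosing the arm-$2$ offset in $I_Q$: to keep $\pi$ from distinguishing the instances, the accumulated divergence must remain $O(1)$, which pins the per-pull perturbation at order $1/\sqrt{\mathbb{E}_P[N_2(T)]}$; yet to flip the identity of the optimal arm (and thereby charge $\Omega(T^{1-\beta})$ regret) the same perturbation must overcome the original gap $T^{-\beta}$. Verifying that the prescribed offset $\tfrac{1}{\sqrt{C\log T}\,T^{\beta-\epsilon/2}}-T^{-\beta}$ resolves this tension — a strictly positive optimal gap together with a bounded divergence — is the delicate step and is exactly where the constraints $\epsilon\in(0,\beta)$ and $C<T^{\epsilon}/(4\log T)$ are consumed; the remainder is routine bookkeeping with Markov's inequality and Bretagnolle--Huber.
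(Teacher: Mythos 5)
Your proposal is correct and follows essentially the same route as the paper: convert the regret hypothesis on $I_P$ into the pull-count bound $\mathbb{E}_{P,\pi}[N_T(2)]\le CT^{2\beta-\epsilon}\log T$, apply the KL chain rule (with the offline terms vanishing since $Q^{\text{(off)}}=P^{\text{(off)}}$) together with the Bretagnolle--Huber inequality to the event $\{N_T(2)>T/2\}$, and note that the calibrated offset makes the accumulated divergence $O(1)$ while giving arm $1$ a gap of at least $T^{-\beta}$ in $I_Q$. Your use of Markov's inequality to peel off $\Pr_{P,\pi}(N_T(2)\ge T/2)$ is just a reorganization of the paper's step that lower-bounds the sum of the two regrets, and it yields the identical final bound.
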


We first remark on $I_Q$. Different from $I_P$, arm 2 is the optimal arm in $I_Q$. Thus, the offline data from $Q^\text{(off)}$ provides the misleading information that arm 1 has a higher expected reward. In addition, note that $Q^\text{(off)} = P^\text{(off)}$, so the offline datasets in instances $I_P, I_Q$ are identically distributed. The theorem suggests that no non-anticipatory policy can simultaneously (i) establish that $S$ is useful for online learning in $I_P$ to improve upon the UCB policy, (ii) establish that $S$ is to be disregarded for online learning in $I_Q$ to match the UCB policy, negating the short-lived conjecture in (Q).

Lastly, we allow $T_\text{S}(1), T_\text{S}(2)$ to be arbitrary. In the most informative case of $T_\text{S}(1)= T_\text{S}(2) = \infty$, meaning that the DM knows $P^\text{(off)},Q^\text{(off)}$ in $I_P, I_Q$ respectively, the DM still cannot achieve (i, ii) simultaneously. Indeed, the assumed improvement $\mathbb{E}[\text{Reg}_T(\pi, P)]\leq C T^{\beta - \epsilon}\log T$ limits the number of pulls on arm 2 in $I_P$ during the online phase. We harness this property to construct an upper bound on the KL divergence between on online dynamics on $I_P$, $I_Q$, which in turns implies the DM cannot differentiate between $I_P, I_Q$ under policy $\pi$. The KL divergence argument utilizes a chain rule (see Appendix \ref{sec:app_aux}) on KL divergence adapted to our setting. 

\subsection{Proof for Theorem \ref{thm:impossible}}

We first set up some notation. Consider a non-anticipatory policy $\pi$ and an instance $I$ with reward distribution $P$. We denote $\rho_{P, \pi}$ as the joint probability distribution function on $S, A_1, R_1, \ldots, A_T, R_T$, the concatenation of the offline dataset $S$ and the online trajectory under policy $\pi$ on instance $I$. For a $\sigma(S, A_1, R_1, \ldots, A_T, R_T)$-measurable event $E$, we denote $\Pr_{P, \pi}(E)$ as the probability of $E$ holds under $\rho_{P, \pi}$. For a  $\sigma(S, A_1, R_1, \ldots, A_T, R_T)$-measurable random variable $Y$, we denote   $\mathbb{E}_{P, \pi}[Y]$ as the expectation of $Y$ under the joint probability distribution function $\rho_{P, \pi}$. In our analysis, we make use of $Y$ being $\text{Reg}_T(\pi, P)$ or $N_T(a)$ for an arm $a$. To lighten our notational burden, we abbreviate $\mathbb{E}_{P, \pi}[\text{Reg}_T(\pi, P) ]$ as $\mathbb{E}[\text{Reg}_T(\pi, P) ]$. By the condition $C < T^\epsilon/(4\log T)$, we know that $$\frac{1}{\sqrt{C\log T}T^{\beta - (\epsilon / 2)}} - \frac{1}{T^\beta} >\frac{1}{T^\beta}  > 0,$$ so in the instance with $Q$, arm 2 is the unqiue optimal arm. 
Consider the event $E = \{N_T(2) > T/2\}$. Then $\mathbb{E}[\text{Reg}(\pi, P)] + \mathbb{E}[\text{Reg}(\pi, Q)] $
\begin{subequations}
\begin{align}
\geq &  \frac{1}{T^\beta} \cdot \frac{T}{2} \cdot \Pr_{P, \pi}(E)  \nonumber \\
& + \left[\frac{1}{\sqrt{C\log T}T^{\beta - (\epsilon / 2)}} - \frac{1}{T^\beta}\right]\cdot \frac{T}{2} \Pr_{Q, \pi}(E^C)\nonumber\\
\geq &  \frac{T^{1 - \beta}}{2}\cdot \left[\Pr_{P, \pi}(E) +\Pr_{Q, \pi}(E^C)\right]\nonumber\\
 \geq & \frac{T^{1 - \beta}}{4}\cdot \exp\left[ - \mathbb{E}_{\pi, P}[N_T(2)]\cdot \text{KL}(P^{(\text{on})}_2, Q^{(\text{on})}_2)   \right. \nonumber \\
& \left. - T_S(2) \cdot \text{KL}(P^{(\text{off})}_2, Q^{(\text{off})}_2)\right] \label{eq:by_chain_crucial}\\
= &  \frac{T^{1 - \beta}}{4}\cdot \exp\left[ - \mathbb{E}_{\pi, P}[N_T(2)]\cdot \text{KL}(P^{(\text{on})}_2, Q^{(\text{on})}_2) \right] \label{eq:by_same}.
\end{align}
\end{subequations}
(\ref{eq:by_chain_crucial}) is again by the chain rule in Theorem \ref{thm:chain} in the Appendix. (\ref{eq:by_same}) is because $ P^\text{(off)} = Q^{\text{(off)}}$. It is evident that $\text{KL}(P^{(\text{on})}_2, Q^{(\text{on})}_2) = 1 / (2C T^{2\beta - \epsilon}\log T)$, and by the Claim assumption that $\text{Reg}(\pi,P) = (1/T^{\beta}) \mathbb{E}_{\pi, P}[N_T(2)] \leq C T^{\beta - \epsilon}\log T$, we have
$ \mathbb{E}_{\pi, P}[N_T(2)] \leq C T^{2\beta - \epsilon}\log T$, which leads to 
$$
\mathbb{E}[\text{Reg}(\pi, P)] + \mathbb{E}[\text{Reg}(\pi, Q)] \geq \frac{T^{1 - \beta}}{4} \cdot e^{-2}.
$$
Again by the Theorem assumption $\mathbb{E}[\text{Reg}(\pi,P)] \leq C T^{\beta - \epsilon}\log T$, we arrive at the claimed inequality, and the Theorem is proved.

\section{MIN-UCB: Leverage Offline Data with Auxiliary Information}\label{sec:MINUCB}

We present our MIN-UCB policy in Algorithm \ref{alg:UCB_min}.
MIN-UCB follows the optimism-in-face-of-uncertainty principle.
At time $t> K$, MIN-UCB sets $\min\{\text{UCB}_t(a), \text{UCB}_t^\text{S}(a)\}$ as the UCB on $\mu^\text{(on)}(a)$. While $\text{UCB}_t(a)$ follows \cite{AuerCBF02}, the construction of  $\text{UCB}_t^\text{S}(a)$ in (\ref{eq:alg-minucb-ucbs}) adheres to the following ideas that incorporates a valid bias bound $V$, the auxiliary input.

\begin{algorithm}[htb]
	\caption{Policy MIN-UCB} \label{alg:UCB_min}
	\begin{algorithmic}[1]
	    \State \textbf{Input:} Valid bias bound $V$ on the instance, confidence parameter $\{\delta_t\}^\infty_{t=1}$, offline samples $S$.
        \State For each $a\in {\cal A}$, compute $\hat{X}(a) = \frac{\sum^{T_\text{S}(a)}_{s=1}X_s(a)}{T_\text{S}(a)}$, and initialize $\hat{R}_1(a) = 0$.
        \State At $t = 1, \ldots, K$, pull each arm once, then set $N_{K+1}(a) = 1$ for all $a$.
     \For{$t = K+1,  \ldots, T$}
        \State For all $a \in \mathcal{A}$, compute the vanilla UCB 
        \begin{equation}
            \label{eq:alg-minucb-ucb}
            \text{UCB}_t(a) = \hat{R}_t(a) + \text{rad}_t(a),
        \end{equation}
        where $\text{rad}_t(a) = \sqrt{\frac{2\log(2t / \delta_t)}{N_t(a)}}$.
        \State For all $a \in \mathcal{A}$, compute the warm-start UCB: $\text{UCB}^\text{S}_t(a) =$ 
        \begin{equation}
        \frac{N_t(a) \cdot \hat{R}_t(a) + T_\text{S}(a)\cdot \hat{X}(a)}{N_t(a) + T_{\text{S}}(a)} + \text{rad}^\text{S}_t(a) ,
         \label{eq:alg-minucb-ucbs}
         \end{equation}
         where $\text{rad}^\text{S}_t(a) =$
         \begin{equation}
         \label{eq:alg-minucb-radS}
          \sqrt{\frac{2\log(2t / \delta_t)}{N_t(a)+ T_\text{S}(a)}} + \frac{ T_\text{S}(a)}{N_t(a)+ T_\text{S}(a)}\cdot V(a).
         \end{equation}
         \State\label{eq:alg-minucb-selection} Select
         $
         A_t\in \text{argmax}_{a\in {\cal A}}\{  \min\{
         \text{UCB}_t(a),  \text{UCB}^\text{S}_t(a) 
         \}\}.$
        \State Observe $R_t\sim P^{\text{(on)}}_{A_t}$. For each $a\in {\cal A}$, update $N_{t+1}(a) = N_t(a) + \mathbf{1}(A_t = a)$,  $$\hat{R}_{t+1}(a) = \begin{cases}
    \frac{N_t(a)\cdot \hat{R}_t(a) }{N_t(a) + 1} + \frac{R_t}{N_t(a) + 1}        &   a = A_t\\
    \hat{R}_t(a)  &  a \ne A_t
  \end{cases}.$$
    \EndFor
	\end{algorithmic}
\end{algorithm}

In (\ref{eq:alg-minucb-ucbs}), the sample mean $\frac{N_t(a) \cdot \hat{R}_t(a) + T_\text{S}(a)\cdot \hat{X}(a)}{N_t(a) + T_{\text{S}}(a)}$ incorporates both online and offline data. Then, we inject optimism by adding $\text{rad}^\text{S}(a)$, set in (\ref{eq:alg-minucb-radS}). The square-root term in (\ref{eq:alg-minucb-radS}) reflects the optimism in view of the randomness of the sample mean. The term  $\frac{ T_\text{S}(a)}{N_t(a)+ T_\text{S}(a)}\cdot V(a)$ in (\ref{eq:alg-minucb-radS}) injects optimism to correct the potential bias from the offline dataset. The correction requires the auxiliary input $V$. 

Finally, we adopt $\text{UCB}_t^\text{S}(a)$ as the UCB when the bias bound $V(a)$ is so small and the sample size $T_\text{S}(a)$ is so large such that $\text{UCB}_t(a) > \text{UCB}_t^\text{S}(a)$. Otherwise, $P^\text{(off)}$ is ``far away''  $P^\text{(on)}$, and we follow the vanilla UCB. Our adaptive decisions on whether to incorporate offline data at each time step is different from the vanilla UCB policy that always ignores offline data, or from HUCB and MonUCB that always incorporate all offline data.

We remark on the valid bias bound $V$. Suppose that the DM has no knowledge on $P^{\text{(off)}}, P^{\text{(on)}}$ at the start of the online phase. Then, it is advisable to set $V(a) = \infty$ for all $a$, which specializes MIN-UCB to the vanilla UCB. Indeed, the impossibility result in Section \ref{sec:impossibility} dashes out any hope for adaptively tuning $V$ in MIN-UCB with online data, in order to outperform the vanilla UCB. In contrast, when the DM has a non-trivial 
upper bound $V(a)$ on $\left|\mu^{\text{(off)}}(a) - \mu^{\text{(on)}}(a) \right|$, we show that our algorithm can uniformly outperform the vanilla UCB that direclty ignores the offline data.

We next embark on the analysis on MIN-UCB, by establishing instance-dependent regret bounds in Section \ref{sec:dep} and instance-independent regret bounds in Section \ref{sec:indpt}. Both sections relies on considering the following events of accurate estimations by $\text{UCB}_t(a), \text{UCB}^\text{S}_t(a)$. For each $t$, define $\xi_t = \cap_{a\in {\cal A}} (\xi_t(a) \cap \xi^\text{S}_t(a))$, where $\xi_t(a) = \left\{ \mu^{\text{(on)}}(a) \leq \text{UCB}_t(a)\leq \mu^{\text{(on)}}(a)+ 2 \text{rad}_t(a) \right\}$, and $\xi^\text{S}_t(a) =$
\begin{align*}
& \left\{ \mu^{\text{(on)}}(a) \leq \text{UCB}^\text{S}_t(a)\leq \mu^{(\text{on})}(a) + \text{rad}^\text{S}_t(a) +  \right.\nonumber\\
&\left. \sqrt{\frac{2\log(2 t / \delta_t)}{N_t(a)+ T_\text{S}(a)}}  + \frac{T_{\text{S}}(a) \cdot (\mu^{\text{(off)}}(a) - \mu^{\text{(on)}}(a))}{N_t(a) + T_\text{S}(a)}\right\}.
\end{align*}
While $\xi_t(a)$ incorporates estimation error due to stochastic variations, $\xi^\text{S}_t(a)$ additionally involves the potential bias of the offline data. The following Lemma \ref{lem:conf-event} is proved in Appendix \ref{sec:app_pf_lemma_conf}.
\begin{lemma}\label{lem:conf-event}
    $\Pr(\xi_t) \geq 1 - 2K \delta_t.$
\end{lemma}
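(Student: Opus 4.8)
The plan is to establish $\xi_t(a)$ and $\xi^\text{S}_t(a)$ for each fixed arm $a$ from a single subGaussian concentration event apiece, and then take a union bound over the $2K$ resulting events. The key structural observation is that the mixed estimator $\frac{N_t(a)\hat{R}_t(a)+T_\text{S}(a)\hat{X}(a)}{N_t(a)+T_\text{S}(a)}$ appearing in $\text{UCB}^\text{S}_t(a)$ is, conditioned on the counts, a weighted average of $N_t(a)+T_\text{S}(a)$ independent $1$-subGaussian samples, $N_t(a)$ of which have mean $\mu^{\text{(on)}}(a)$ and $T_\text{S}(a)$ of which have mean $\mu^{\text{(off)}}(a)$. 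Its conditional mean is therefore the mixed mean $m_t(a) := \frac{N_t(a)\mu^{\text{(on)}}(a)+T_\text{S}(a)\mu^{\text{(off)}}(a)}{N_t(a)+T_\text{S}(a)}$, and the whole proof hinges on controlling deviations around $m_t(a)$.

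First I would set up the two concentration events. Regarding the online rewards of each arm as a pre-drawn i.i.d.\ sequence revealed upon each pull (so that the adaptive, random count $N_t(a)$ is handled by a union bound over its possible values $1,\dots,t$), the two-sided $1$-subGaussian tail bound at a fixed count $n$ has failure probability $2\exp(-n\,\text{rad}_t(a)^2/2)=\delta_t/t$ under the choice $\text{rad}_t(a)=\sqrt{2\log(2t/\delta_t)/N_t(a)}$; summing over $n$ yields $\Pr\bigl(G_t(a)\bigr)^C\le \delta_t$, where $G_t(a):=\{|\hat{R}_t(a)-\mu^{\text{(on)}}(a)|\le \text{rad}_t(a)\}$, and $G_t(a)$ is exactly $\xi_t(a)$. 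Applying the same argument to the combined sum $\sum_{i\le N_t(a)}(R_i(a)-\mu^{\text{(on)}}(a))+\sum_{s\le T_\text{S}(a)}(X_s(a)-\mu^{\text{(off)}}(a))$, a sum of $N_t(a)+T_\text{S}(a)$ independent zero-mean $1$-subGaussian terms, gives $\Pr\bigl(G^\text{S}_t(a)\bigr)^C\le \delta_t$, where
$$G^\text{S}_t(a):=\left\{\left|\tfrac{N_t(a)\hat{R}_t(a)+T_\text{S}(a)\hat{X}(a)}{N_t(a)+T_\text{S}(a)}-m_t(a)\right|\le \sqrt{\tfrac{2\log(2t/\delta_t)}{N_t(a)+T_\text{S}(a)}}\right\}.$$

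Next I would show the deterministic implication $G^\text{S}_t(a)\subseteq \xi^\text{S}_t(a)$. For the upper bound in $\xi^\text{S}_t(a)$, on $G^\text{S}_t(a)$ one adds $\text{rad}^\text{S}_t(a)$ to the concentration inequality and substitutes $m_t(a)=\mu^{\text{(on)}}(a)+\tfrac{T_\text{S}(a)(\mu^{\text{(off)}}(a)-\mu^{\text{(on)}}(a))}{N_t(a)+T_\text{S}(a)}$, which reproduces verbatim the upper bound stated in $\xi^\text{S}_t(a)$. For the lower bound $\mu^{\text{(on)}}(a)\le \text{UCB}^\text{S}_t(a)$, the square-root part of $\text{rad}^\text{S}_t(a)$ cancels the downward concentration slack, leaving $\text{UCB}^\text{S}_t(a)\ge m_t(a)+\tfrac{T_\text{S}(a)}{N_t(a)+T_\text{S}(a)}V(a)=\mu^{\text{(on)}}(a)+\tfrac{T_\text{S}(a)}{N_t(a)+T_\text{S}(a)}\bigl[V(a)+(\mu^{\text{(off)}}(a)-\mu^{\text{(on)}}(a))\bigr]$; the bracket is nonnegative precisely because $V$ is a valid bias bound, i.e.\ $V(a)\ge |\mu^{\text{(off)}}(a)-\mu^{\text{(on)}}(a)|$ by (\ref{eq:V}). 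This is the step where the auxiliary input $V$ is indispensable.

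Finally, combining $\xi_t(a)=G_t(a)$ and $G^\text{S}_t(a)\subseteq\xi^\text{S}_t(a)$ with a union bound over $a\in\mathcal{A}$ and over the two events per arm gives $\Pr(\xi_t^C)\le \sum_{a\in\mathcal{A}}\bigl(\Pr(G_t(a))^C+\Pr(G^\text{S}_t(a))^C\bigr)\le 2K\delta_t$, which is the claim. I expect the main obstacle to be the rigorous handling of the random count $N_t(a)$ in the concentration step — in particular justifying the union bound over $n=1,\dots,t$ for the mixed online/offline estimator — rather than the algebra linking $G^\text{S}_t(a)$ to $\xi^\text{S}_t(a)$, which is a direct computation once the validity of $V$ is invoked.
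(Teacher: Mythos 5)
Your proposal is correct and follows essentially the same route as the paper: concentrate the mixed estimator around the mixed mean $m_t(a)$ via a Chernoff/subGaussian bound, union-bound over the possible values $n=1,\dots,t$ of the adaptive count $N_t(a)$, use the validity of $V$ to absorb the bias term in the lower-bound direction, and finish with a union bound over $2K$ events. The only (cosmetic) difference is that the paper treats $\xi_t(a)$ as the $T_\text{S}(a)=0$ special case of $\xi^\text{S}_t(a)$ rather than proving it separately.
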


\subsection{Instance-Dependent Regret Bounds}\label{sec:dep} 
Our instance-dependent regret bounds depend on the following discrepancy measure 
\begin{equation}\label{eq:omega}
\omega(a) = V(a) + (\mu^{\text{(off)}}(a) - \mu^{\text{(on)}}(a))
\end{equation}
on an arm $a$. Note that $\omega(a)$ depends on both the valid bias bound $V(a)$ and the model parameters $\mu^{\text{(off)}}(a),  \mu^{\text{(on)}}(a).$ By the validity of $V$, we know that  $\omega(a)\in [0, 2 V(a)]$.

\begin{theorem}\label{thm:upper_ins_dep}
    Consider policy MIN-UCB, which inputs a valid bias bound $V$ on the underlying instance $I$ and  $\delta_t = 1/(2Kt^2)$ for $t=1, 2, \ldots$.  We have $\mathbb{E}[\text{Reg}_T(\text{MIN-UCB}, P)] \leq $ 
    \begin{equation} \label{eq:reg_dep_upper}
          O\left(\sum_{a: \Delta(a)>0} \max \left \{ \frac{\log(T)}{\Delta(a)} 
           - \text{Sav}_0(a), \Delta(a) \right \}\right),
    \end{equation}
    where $\sum_{a: \Delta(a)>0}$ is over $\{a\in {\cal A}: \Delta(a)> 0\}$, and 
    \begin{equation}\label{eq:save_0}
       \text{Sav}_0(a) = T_\text{S}(a) \cdot\Delta(a) \cdot \max\left\{ 1 - \frac{\omega(a)}{\Delta(a)}, 0\right\}^2.
    \end{equation}
\end{theorem}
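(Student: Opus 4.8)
The plan is to reduce the regret to a sum of per-arm pull counts via the standard decomposition $\mathbb{E}[\text{Reg}_T(\text{MIN-UCB},P)] = \sum_{a:\Delta(a)>0}\Delta(a)\,\mathbb{E}[N_T(a)]$, and to bound $\mathbb{E}[N_T(a)]$ for each suboptimal $a$. First I would restrict to the good event $\xi_t$ of Lemma \ref{lem:conf-event}: since $\sum_t \Pr(\xi_t^{c}) \le \sum_t 2K\delta_t = \sum_t t^{-2} = O(1)$ under $\delta_t = 1/(2Kt^2)$, the steps on which a confidence bound fails contribute only an $O(1)$ additive count, i.e. an $O(\Delta(a))$ regret term that is absorbed into the $\max\{\cdot,\Delta(a)\}$. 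The substance is therefore to bound the pulls of $a$ on steps where $\xi_t$ holds.

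The key mechanism is the $\min$-selection rule. When a suboptimal $a$ is chosen at such a $t$, its min-index dominates that of an optimal arm $a^{*}$; on $\xi_t$ both $\text{UCB}_t(a^{*})$ and $\text{UCB}^\text{S}_t(a^{*})$ lower-bound $\mu^{(\text{on})}_*$, so $\min\{\text{UCB}_t(a),\text{UCB}^\text{S}_t(a)\}\ge \mu^{(\text{on})}_*$, forcing \emph{both} indices of $a$ above $\mu^{(\text{on})}_*$. Pairing this with the upper bounds in $\xi_t(a),\xi^\text{S}_t(a)$ and writing $m=T_\text{S}(a)$, $N=N_t(a)+m$, $c_t=2\log(2t/\delta_t)$, yields two inequalities: the vanilla one $\Delta(a)\le 2\sqrt{c_t/N_t(a)}$, and the offline one $\Delta(a)\le 2\sqrt{c_t/N}+\tfrac{m}{N}\,\omega(a)$. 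The discrepancy $\omega(a)$ emerges exactly here, since $\text{rad}^\text{S}_t$ contributes the $V(a)$ term and $\xi^\text{S}_t$ contributes the $\mu^{(\text{off})}(a)-\mu^{(\text{on})}(a)$ term, and these sum to $\omega(a)$. The vanilla inequality gives the classical $N_t(a)\le 4c_t/\Delta(a)^2 = O(\log T/\Delta(a)^2)$, which already delivers the $O(\max\{\log T/\Delta(a),\Delta(a)\})$ bound and settles the no-savings regime $\omega(a)\ge\Delta(a)$ where $\text{Sav}_0(a)=0$.

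To extract the savings I would solve the offline inequality by substituting $z=1/\sqrt N$, turning it into $m\omega(a)z^2+2\sqrt{c_t}\,z-\Delta(a)\ge 0$; its positive root gives $\sqrt N \le (\sqrt{c_t}+\sqrt{c_t+m\omega(a)\Delta(a)})/\Delta(a)$, and subtracting $m$ recovers the online count $N_t(a)=N-m$ with a reduction driven by $m$ and $\omega(a)$. A last-pull argument lets me replace $c_t$ by $c_T=2\log(2T/\delta_T)=O(\log T)$ throughout.

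The hard part is converting this threshold on $N$ into the clean squared saving $\text{Sav}_0(a)=m\Delta(a)\max\{1-\omega(a)/\Delta(a),0\}^2$, because the cross term $2\sqrt{c_T(c_T+m\omega(a)\Delta(a))}$ does not collapse to the squared form uniformly. I plan a case split within $\omega(a)<\Delta(a)$ according to whether $m(\Delta(a)-\omega(a))^2$ is below or above $4c_T$. The exact dividing line is the identity that $2\sqrt{c_T(c_T+m\omega(a)\Delta(a))}\le 2c_T+m\omega(a)(\Delta(a)-\omega(a))$ holds \emph{iff} $m(\Delta(a)-\omega(a))^2\ge 4c_T$. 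In the small regime $m(\Delta(a)-\omega(a))^2<4c_T$, the saving is at most the order of the main term, so the vanilla count suffices and $\text{Sav}_0(a)\le 4c_T/\Delta(a)$ is absorbed by enlarging the leading constant; in the large regime the threshold on $N$ falls to or below $m$, so $a$ is pulled only $O(1)$ times online, giving $O(\Delta(a))$ regret, which is precisely the second argument of the $\max$. Reassembling the two regimes and summing over $\{a:\Delta(a)>0\}$ then produces the stated bound.
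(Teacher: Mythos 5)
Your proposal is correct and follows essentially the same route as the paper: the same good-event decomposition via Lemma \ref{lem:conf-event}, the same use of the $\min$-selection rule to force both indices of a pulled suboptimal arm above $\mu^{(\text{on})}_*$, and the same dichotomy on whether $T_\text{S}(a)(\Delta(a)-\omega(a))^2$ exceeds a $\Theta(\log T)$ threshold, with the large regime yielding $O(1)$ online pulls and the small regime absorbed into the vanilla count. The only cosmetic difference is that you solve the quadratic in $1/\sqrt{N_t(a)+T_\text{S}(a)}$ exactly and invoke the algebraic identity for the cross term, whereas the paper's Lemma \ref{lemma:crucial_bound} reaches the same conclusion in Sub-case 1a by the cruder bound $N_t(a)+T_\text{S}(a)\geq T_\text{S}(a)$.
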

Theorem \ref{thm:upper_ins_dep} is proved in the Appendix \ref{sec:app_upper_ins_dep}. We provide a sketch proof in Section \ref{sec:sketch_dpt}. A full explicit bound is in (\ref{eq:reg_dep_upper_explicit}) in the Appendix. 
Comparing with the regret bound (\ref{eq:vanilla_reg}) by the vanilla UCB, 
MIN-UCB provides an improvement on the regret order bound by the saving term $\text{Sav}_0(a)$. 

First, $\text{Sav}_0(a)\geq 0$ always holds. The case of $\omega(a) <\Delta(a)$ means that the reward distributions $P^{\text{(off)}}_a, P^{\text{(on)}}_a$ are ``sufficiently close''. The offline data on arm $a$ are incorporated by MIN-UCB to improve upon the vanila UCB. By contrast, the case of $\omega(a) \geq \Delta(a)$ means that  $P^{\text{(off)}}_a, P^{\text{(on)}}_a$ are ``far apart'', so that the offline data on arm $a$ are ignored in the learning process, while MIN-UCB still matches the performance guarantee of the vanilla UCB. We emphasize that $\omega(a)$ and $\Delta(a)$ are both latent, so MIN-UCB conducts the above adaptive choice under uncertainty.

Next, when $\omega(a) < \Delta(a)$ and the offline data on $a$ are beneficial, note that $\text{Sav}_0(a)$ is increasing in $T_\text{S}(a)$. This monotonicity adheres to the intuition that more offline data leads to a higher reward when $P^{\text{(off)}}_a, P^{\text{(on)}}_a$ are ``sufficiently close''. The term $\text{Sav}_0(a)$ decreases as $\omega(a)$ increases. A smaller $V(a)$ means a tighter upper bound on $|\mu^{\text{(off)}}(a) - \mu^{\text{(on)}}(a)|$ is known to the DM, which leads to a better peformance.

Interestingly, $\text{Sav}_0(a)$ increases when $\mu^{\text{(off)}}(a) - \mu^{\text{(on)}}(a)$ decreases. Paradoxically, $\text{Sav}_0(a)$ increases with $|\mu^{\text{(off)}}(a) - \mu^{\text{(on)}}(a)|$, under the case of $\mu^{\text{(off)}}(a) < \mu^{\text{(on)}}(a)$. Indeed, the saving term  $\text{Sav}_0(a)$ depends not only on the magnitude of the distributional shift, but also the direction. To gain intuition, consider the case of $\mu^{\text{(off)}}(a) < \mu^{\text{(on)}}(a)$. On $\mu^{\text{(on)}}(a)$, the offline dataset on $a$ provides a pessimistic estimate, which encourages the DM to eliminate the sub-optimal arm $a$, hence improves the DM's performance. 

Lastly, we compare to existing baselines. By setting $V(a) = \infty$ for all $a$, the regret bound (\ref{eq:reg_dep_upper}) reduces to (\ref{eq:vanilla_reg}) of the vanilla UCB. In the case when $P^\text{(on)} = P^{(\text{off})}$ and it is known to the DM, setting $V(a) = 0$ for all $a$ reduces the regret bound (\ref{eq:reg_dep_upper}) to (\ref{eq:improved_reg}). In both cases we achieve the state-of-the-art regret bounds, while additionally we quantify the impact of distributional drift to the regret bound. Finally, the ARROW-CB by \cite{zhang2019warm} can be applied to our model. While ARROW-CB does not require any auxiliary input, their regret bound is at least $\Omega(K^{1/3}T^{2/3})$. ARROW-CB provides improvment to purely online policies in the contextual bandit setting, as discussed in Appednix \ref{sec:app_disc_zhang2019}.

We next provides a regret lower bound that nearly matches (\ref{eq:reg_dep_upper}). To proceed, we consider
a subset of instances with bounded distributional drift, as defined below.
\begin{definition}\label{def:I_V}
Let $V = (V(a))_{a\in {\cal A}}\in \mathbb{R}_{\geq 0}^K$. We define ${\cal I}_{V} = \{I: |\mu^\text{(on)}(a) - \mu^\text{(off)}(a)|\leq V(a)\text{ for all $a\in {\cal A}$}\}$. 
\end{definition}
Knowing a valid bias bound $V$ is equivalent to knowing that the underlying instance $I$ lies in the subset ${\cal I}_V$, leading to the following which is equivalent to Theorem \ref{thm:upper_ins_dep}:
\begin{corollary}
For any $V \in \mathbb{R}_{\geq 0}^K$, the MIN-UCB policy with input $V$ and $\delta_t = 1/(2K t^2)$ satisfies $\mathbb{E}[\text{Reg}_T(\text{MIN-UCB}, P)] \leq (\ref{eq:reg_dep_upper})$, for all $I\in {\cal I}_V$.
\end{corollary}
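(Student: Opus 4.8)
The plan is to recognize that this Corollary is nothing more than a change of phrasing of Theorem \ref{thm:upper_ins_dep}: the hypothesis used in Theorem \ref{thm:upper_ins_dep}, namely that the input $V$ is a \emph{valid bias bound} on the underlying instance $I$, is logically identical to the membership condition $I \in {\cal I}_V$ appearing in the Corollary. Consequently, no new probabilistic or combinatorial argument is needed; the entire content of the Corollary has already been established in Theorem \ref{thm:upper_ins_dep}, and the only task is to verify the set-theoretic equivalence and then quote the theorem.

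Concretely, I would fix an arbitrary $V \in \mathbb{R}_{\geq 0}^K$ and an arbitrary $I \in {\cal I}_V$. By Definition \ref{def:I_V}, the membership $I \in {\cal I}_V$ means precisely that $|\mu^{\text{(on)}}(a) - \mu^{\text{(off)}}(a)| \leq V(a)$ for every $a \in {\cal A}$. Since the absolute value is symmetric, this is exactly condition (\ref{eq:V}), i.e. $V(a) \geq |\mu^{\text{(off)}}(a) - \mu^{\text{(on)}}(a)|$ for all $a$, which by definition says that $V$ is a valid bias bound on $I$. Thus the two descriptions coincide, and every instance in ${\cal I}_V$ is an instance on which $V$ is valid.

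Having established this equivalence, I would invoke Theorem \ref{thm:upper_ins_dep} directly: because $V$ is a valid bias bound on $I$ and the confidence schedule is the prescribed $\delta_t = 1/(2K t^2)$, the theorem yields $\mathbb{E}[\text{Reg}_T(\text{MIN-UCB}, P)] \leq (\ref{eq:reg_dep_upper})$, where the right-hand side is understood as the instance-specific quantity formed from the $\Delta(a)$, $T_\text{S}(a)$, and $\omega(a)$ of this particular $I$ (recall $\omega(a)$ from (\ref{eq:omega}) and $\text{Sav}_0(a)$ from (\ref{eq:save_0})). Since $I \in {\cal I}_V$ was arbitrary, the bound holds uniformly over the whole class ${\cal I}_V$, which is the claim.

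I do not anticipate any genuine obstacle here, precisely because the nontrivial work resides in Theorem \ref{thm:upper_ins_dep}. The only point requiring care is conceptual rather than technical: one must read the right-hand side (\ref{eq:reg_dep_upper}) as a bound that varies with the instance $I$ (through $\Delta(a)$, $\omega(a)$, and $T_\text{S}(a)$) rather than as a single constant valid across all of ${\cal I}_V$, and one must note that the equivalence uses only the finiteness of $V$ assumed in Definition \ref{def:I_V} together with the symmetry of $|\cdot|$.
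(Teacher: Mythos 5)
Your proposal is correct and matches the paper's reasoning exactly: the paper itself presents this corollary as an immediate restatement of Theorem \ref{thm:upper_ins_dep}, obtained from the observation that $I \in {\cal I}_V$ is, by Definition \ref{def:I_V} and the symmetry of $|\cdot|$, the same condition as $V$ being a valid bias bound on $I$ in the sense of (\ref{eq:V}). No further argument is given or needed in the paper, and your careful reading of the right-hand side of (\ref{eq:reg_dep_upper}) as an instance-dependent quantity is the right interpretation.
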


Our regret lower bounds involve instances with Gaussian rewards.
\begin{definition}
An instance $I$ is a Gaussian instance if $P^\text{(on)}_a,P^\text{(off)}_a$ are Gaussian distributions with variance one, for every $a\in {\cal A}.$
\end{definition}
Similar to existing works, we focus on consistent policies.
\begin{definition}
For $C>0, p\in (0, 1)$ and a collection ${\cal I}$ of instances, a non-anticipatory policy $\pi$ is said to be $(C, p)$-consistent on ${\cal I}$, if for all $I\in {\cal I}$, it holds that $\mathbb{E}[\text{Reg}_{T}(P, \pi)]\leq C\cdot  T^p$.
\end{definition}
For example, the vanilla UCB is $(C, 1/2)$ consistent for some $C$ proportional to $\sqrt{K}$. 

\begin{theorem}\label{thm:dep_lb}
    Let $V\in \mathbb{R}_{\geq 0}^K$ be arbitrary, and consider a fixed but arbitrary Gaussian instance $I \in {\cal I}_V$. For any $(C, p)$-consistent policy $\pi$ on ${\cal I}_V$, we have the following regret lower bound of $\pi$ on $I$:  $\mathbb{E}[\text{Reg}_T(\pi, P)] \ge $
    \begin{align}
    & \sum_{a : \Delta(a) > 0}  \left\{\frac{2(1-p)}{(1+\epsilon)^2} \cdot \frac{\log T}{\Delta(a)} -  \text{Sav}_\epsilon(a) +\kappa_\epsilon(a)  \right\} \label{eq:dep_lb}
    \end{align}
    holds for any $\epsilon\in (0, 1]$, where $\text{Sav}_\epsilon(a) =$
    \begin{equation}\label{eq:save_epsilon}
        T_\text{S}(a)\cdot\Delta(a)\cdot \max\left\{1 - \frac{\omega(a)}{(1+\epsilon)\Delta(a)}, 0\right\}^2,
    \end{equation}
    and $\kappa_\epsilon(a) =  \frac{1}{2(1+\epsilon)^2\Delta(a) }\log\frac{\epsilon \Delta(a)}{8 C}$ is independent of $T$ and $T_\text{S}(a)$.
\end{theorem}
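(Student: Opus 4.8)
The plan is to establish the bound through a change-of-measure argument in the spirit of Lai--Robbins and Kaufmann--Capp\'e--Garivier, but adapted so that the offline samples also count toward the information budget. I would prove the per-arm inequality $\Delta(a)\,\mathbb{E}_{P,\pi}[N_T(a)] \geq \frac{2(1-p)}{(1+\epsilon)^2}\cdot\frac{\log T}{\Delta(a)} - \text{Sav}_\epsilon(a) + \kappa_\epsilon(a)$ separately for each sub-optimal arm $a$, and then sum over $\{a : \Delta(a)>0\}$ using $\text{Reg}_T(\pi,P) = \sum_a \Delta(a) N_T(a)$. No union bound is needed, since this last step is pure linearity of expectation; the work is entirely in the single-arm bound.

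Fix a sub-optimal arm $a$. The heart of the argument is the construction of an alternative Gaussian instance $Q=(Q^{\text{(off)}}, Q^{\text{(on)}})$ that (i) agrees with $P$ on every arm other than $a$, (ii) makes arm $a$ strictly optimal, and (iii) still lies in $\mathcal{I}_V$, so that $(C,p)$-consistency applies to it as well. For (ii), I would raise the online mean of arm $a$ by $(1+\epsilon)\Delta(a)$, i.e. set $\mu_Q^{\text{(on)}}(a) = \mu^{\text{(on)}}_* + \epsilon\Delta(a)$, so arm $a$ becomes optimal with margin $\epsilon\Delta(a)$ over the previously optimal arm; for unit-variance Gaussians this costs $\text{KL}(P_a^{\text{(on)}}, Q_a^{\text{(on)}}) = (1+\epsilon)^2\Delta(a)^2/2$ per online pull. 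For (iii), I would then choose the offline mean of arm $a$ to \emph{minimize} the offline KL subject to $|\mu_Q^{\text{(on)}}(a) - \mu_Q^{\text{(off)}}(a)| \leq V(a)$. The key identity $\mu^{\text{(off)}}(a) + V(a) = \mu^{\text{(on)}}(a) + \omega(a)$ shows that the constraint already holds with $Q^{\text{(off)}}_a = P^{\text{(off)}}_a$ precisely when $(1+\epsilon)\Delta(a) \leq \omega(a)$ (the opposite one-sided constraint being free by validity of $V$); otherwise the offline mean must shift by exactly $(1+\epsilon)\Delta(a)-\omega(a)$. Hence the minimal offline shift is $\max\{(1+\epsilon)\Delta(a)-\omega(a),\,0\}$, giving $T_\text{S}(a)\,\text{KL}(P_a^{\text{(off)}}, Q_a^{\text{(off)}}) = \frac{T_\text{S}(a)}{2}\max\{(1+\epsilon)\Delta(a)-\omega(a),\,0\}^2$, which after the algebra below materializes as exactly $\text{Sav}_\epsilon(a)$.

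With $Q$ in hand I would invoke the KL chain rule (Theorem \ref{thm:chain}), as in the proof of Theorem \ref{thm:impossible}, to decompose the divergence between the two joint laws into the offline and online contributions on arm $a$ alone: $\text{KL}(\rho_{P,\pi}, \rho_{Q,\pi}) = T_\text{S}(a)\,\text{KL}(P^{\text{(off)}}_a, Q^{\text{(off)}}_a) + \mathbb{E}_{P,\pi}[N_T(a)]\cdot\text{KL}(P^{\text{(on)}}_a, Q^{\text{(on)}}_a)$. Taking $E = \{N_T(a) > T/2\}$ and applying the Bretagnolle--Huber inequality, I would lower bound this divergence by $\log\frac{1}{2(\Pr_{P,\pi}(E)+\Pr_{Q,\pi}(E^c))}$. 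Consistency then supplies the matching upper bounds on the error probabilities: under $P$ the event $E$ forces regret $\geq \Delta(a)T/2$, while under $Q$ the event $E^c$ forces regret $\geq \epsilon\Delta(a)T/2$, so Markov's inequality together with $\mathbb{E}[\text{Reg}_T] \leq CT^p$ gives $\Pr_{P,\pi}(E)+\Pr_{Q,\pi}(E^c) = O(C T^{p-1}/(\epsilon\Delta(a)))$. This yields $\text{KL}(\rho_{P,\pi}, \rho_{Q,\pi}) \geq (1-p)\log T + \log\frac{\epsilon\Delta(a)}{8C}$. Substituting the chain-rule decomposition, solving for $\mathbb{E}_{P,\pi}[N_T(a)]$, and multiplying by $\Delta(a)$ produces the three terms: the leading $\frac{2(1-p)}{(1+\epsilon)^2}\frac{\log T}{\Delta(a)}$, the subtracted $\text{Sav}_\epsilon(a)$ from the offline KL, and the $T$-independent residual $\kappa_\epsilon(a)$ from the $\log\frac{\epsilon\Delta(a)}{8C}$ term.

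I expect the main obstacle to be the offline-aware change of measure, that is, arranging the alternative instance to simultaneously flip the optimal arm and remain inside $\mathcal{I}_V$ at minimal total information cost. Unlike the classical setting, the offline samples form a fixed block of $T_\text{S}(a)$ observations that also discriminate between $P$ and $Q$; one must corrupt the offline distribution just enough to respect the bias constraint, and it is this minimal corruption --- governed by $\omega(a)$ rather than by $V(a)$ alone --- that is responsible for the $\max\{\cdot,0\}$ form and the direction-sensitivity of $\text{Sav}_\epsilon(a)$. A secondary technical point is carefully tracking the constants through the Bretagnolle--Huber step and the Markov bounds, where the restriction $\epsilon\in(0,1]$ is used to absorb the $1+1/\epsilon$ factor, so as to land the precise coefficient in $\kappa_\epsilon(a)$.
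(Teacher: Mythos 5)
Your proposal is correct and follows essentially the same route as the paper's proof: the same alternative instance $Q$ (online mean of arm $a$ raised by $(1+\epsilon)\Delta(a)$, offline mean shifted by the minimal amount $\max\{(1+\epsilon)\Delta(a)-\omega(a),0\}$ needed to stay in ${\cal I}_V$), the same chain rule plus Bretagnolle--Huber applied to the event $\{N_T(a)\geq T/2\}$, and the same use of consistency on both instances to extract the $(1-p)\log T$ term. The per-arm inequality and the algebra yielding $\text{Sav}_\epsilon(a)$ and $\kappa_\epsilon(a)$ match the paper's argument exactly.
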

Theorem \ref{thm:dep_lb} is proved in Appendix \ref{sec:app_lower_ins_dep}. Modulo the additive term $\kappa_\epsilon(a)$, the  regret upper bound (\ref{eq:reg_dep_upper}) and lower bound (\ref{eq:dep_lb}) are nearly matching. Both bounds feature the $\Theta((\log T)/\Delta(a))$ regret term that is classical in the literature 
\cite{lattimore2020bandit}. Importantly, the lower bound (\ref{eq:dep_lb}) features the regret saving term $\text{Sav}_\epsilon(a)$, which closely matches the term $\text{Sav}_0(a)$ in the upper bound. Indeed, $\epsilon\in (0, 1]$ is arbitrary, $\text{Sav}_\epsilon(a)$ tends to the term $\text{Sav}_0(a)$ in the regret upper bound (\ref{eq:reg_dep_upper}) as we tend $\epsilon$ to 0. The close match highlights the fundamental nature of the discrepancy measure $\omega(a)$. In addition, the lower bound (\ref{eq:dep_lb}) carries the insight that the offline data on arm $a$ facilitates improvement over the vanilla UCB if and only if $\omega(a) < (1+\epsilon)\Delta(a)$, which is consistent with our insights from the regret upper bound (\ref{eq:reg_dep_upper}) by tending $\epsilon$ to $ 0$. 

Lastly, in the case $P^{\text{(off)}} = P^{\text{(on)}}$, the lower bound (\ref{eq:dep_lb}) reduces to $\Omega(\sum_{a:\Delta(a) > 0} [\frac{\log T}{\Delta(a)} - T_\text{S}(a) \Delta(a) + \kappa_\epsilon(a)])$, which ascertains the near tightness of (\ref{eq:improved_reg}) by HUCB1 and MonUCB. We highlight that a nearly tight regret lower bound in the well-aligned case of $P^{\text{(off)}} = P^{\text{(on)}}$ is novel in the literature.

\subsection{Sketch Proof for Theorem \ref{thm:upper_ins_dep}}\label{sec:sketch_dpt}

Theorem \ref{thm:upper_ins_dep} is a direct conclusion of the following lemma:
\begin{lemma}\label{lemma:crucial_bound}
    Let $a$ be a sub-optimal arm, that is $\Delta(a) > 0$. 
    Conditioned on the event $\xi_t$, if it holds that $N_t(a) > $
    $$
    32 \cdot \frac{\log(4 K t^4 )}{\Delta(a)^2} - T_\text{S}(a) \cdot \max\left\{ 1 - \frac{\omega(a)}{\Delta(a)}, 0\right\}^2,
    $$
    then $A_t \neq a$ with certainty.
\end{lemma}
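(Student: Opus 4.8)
The plan is to show that the sub-optimal arm $a$ is strictly dominated in the selection rule (line~\ref{eq:alg-minucb-selection}), so that the $\arg\max$ cannot return $a$. First I would record the easy half: on $\xi_t$, every optimal arm $a_*$ satisfies $\mu^{\text{(on)}}(a_*)\le\text{UCB}_t(a_*)$ and $\mu^{\text{(on)}}(a_*)\le\text{UCB}^\text{S}_t(a_*)$ (the lower-bound clauses of $\xi_t(a_*)$ and $\xi^\text{S}_t(a_*)$), so the index of $a_*$ is at least $\mu^{\text{(on)}}_*$. It therefore suffices to prove that the index $\min\{\text{UCB}_t(a),\text{UCB}^\text{S}_t(a)\}$ of $a$ is strictly below $\mu^{\text{(on)}}_*=\mu^{\text{(on)}}(a)+\Delta(a)$, i.e. that at least one of $\text{UCB}_t(a)$ and $\text{UCB}^\text{S}_t(a)$ is smaller than $\mu^{\text{(on)}}(a)+\Delta(a)$.

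The next step is to convert the upper-bound clauses of $\xi_t$ into clean ``width'' bounds. From $\xi_t(a)$ we get $\text{UCB}_t(a)\le \mu^{\text{(on)}}(a)+2\,\text{rad}_t(a)$ immediately. For the warm-start index I would substitute (\ref{eq:alg-minucb-radS}) into the upper clause of $\xi^\text{S}_t(a)$ and exploit the key cancellation: the correction $\tfrac{T_\text{S}(a)}{N_t(a)+T_\text{S}(a)}V(a)$ inside $\text{rad}^\text{S}_t(a)$ combines with the genuine bias term $\tfrac{T_\text{S}(a)(\mu^{\text{(off)}}(a)-\mu^{\text{(on)}}(a))}{N_t(a)+T_\text{S}(a)}$ to produce exactly $\tfrac{T_\text{S}(a)\,\omega(a)}{N_t(a)+T_\text{S}(a)}$, with $\omega(a)=V(a)+(\mu^{\text{(off)}}(a)-\mu^{\text{(on)}}(a))$; this is precisely why $\omega(a)$, rather than $V(a)$ or $|\mu^{\text{(off)}}-\mu^{\text{(on)}}|$, is the relevant discrepancy. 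Writing $M=N_t(a)+T_\text{S}(a)$ and $L=2\log(2t/\delta_t)$, this gives
\[
\text{UCB}^\text{S}_t(a)\le \mu^{\text{(on)}}(a)+2\sqrt{L/M}+\tfrac{T_\text{S}(a)\,\omega(a)}{M}.
\]
The claim then reduces to showing that the smaller of the two widths $2\sqrt{L/N_t(a)}$ and $2\sqrt{L/M}+T_\text{S}(a)\omega(a)/M$ is $<\Delta(a)$. Since both widths are strictly decreasing in $N_t(a)$, it is enough to verify this at the threshold value of $N_t(a)$.

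I would then split on the sign of $\Delta(a)-\omega(a)$, which is exactly the reason the algorithm takes a minimum of the two UCBs. When $\omega(a)\ge\Delta(a)$ the saving term is $0$ and the threshold is $32\log(4Kt^4)/\Delta(a)^2$; here the offline term $T_\text{S}(a)\omega(a)/M$ can itself exceed $\Delta(a)$ for large $T_\text{S}(a)$, so the warm-start branch is useless and I rely on the vanilla branch. Since $\delta_t=1/(2Kt^2)$ gives $L=2\log(4Kt^3)$, the requirement $2\sqrt{L/N_t(a)}<\Delta(a)$ amounts to $N_t(a)>8\log(4Kt^3)/\Delta(a)^2$, which is implied because $8\log(4Kt^3)\le 32\log(4Kt^4)$. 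When $\omega(a)<\Delta(a)$ I use the warm-start branch: squaring $2\sqrt{L/M}<\Delta(a)-T_\text{S}(a)\omega(a)/M$ turns the goal into the quadratic inequality $4L\,M<\Delta(a)^2\big(N_t(a)+T_\text{S}(a)(1-\omega(a)/\Delta(a))\big)^2$, and I would verify it at the threshold $N_t(a)=32\log(4Kt^4)/\Delta(a)^2-T_\text{S}(a)(1-\omega(a)/\Delta(a))^2$.

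The hard part will be this warm-start quadratic near the boundary $\omega(a)\uparrow\Delta(a)$ with $T_\text{S}(a)$ large. Solving the quadratic and bounding its positive root via $\sqrt{L^2+\Delta(a)^2 L T_\text{S}(a)r}\le L+\Delta(a)\sqrt{LT_\text{S}(a)r}$ (with $r=\omega(a)/\Delta(a)\in[0,1)$, $L'=\log(4Kt^4)$, and $L\le 2L'$), I would reduce the threshold comparison to an inequality of the form $\tfrac{32L'-4L}{\Delta(a)^2}+T_\text{S}(a)\,r(1-r)\ge \tfrac{2}{\Delta(a)}\sqrt{L\,T_\text{S}(a)\,r}$. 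For $r$ bounded away from $1$ this follows from AM--GM on the two left-hand terms together with $(32L'-4L)(1-r)\ge L$. The genuinely delicate regime is $r$ very close to $1$, where the saving $T_\text{S}(a)(1-r)^2$ is small: there I would argue a dichotomy, namely that either $T_\text{S}(a)(1-r)^2$ is small enough that the threshold still exceeds the pure vanilla requirement (reducing to the vanilla branch), or it is large enough to force $T_\text{S}(a)\,r(1-r)^2\ge 4L/\Delta(a)^2$, which alone closes the inequality. The slack deliberately built into the constant $32$ and the use of $\log(4Kt^4)$ rather than $\log(4Kt^3)$ is what lets a single threshold cover all regimes, and carefully tracking these constants is the main bookkeeping burden.
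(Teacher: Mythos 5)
Your proposal is correct and follows essentially the same route as the paper: bound both $\text{UCB}_t(a)$ and $\text{UCB}^\text{S}_t(a)$ via $\xi_t$ (with the cancellation producing $\omega(a)$), split on $\omega(a)\gtrless\Delta(a)$, and in the informative case fall back on a dichotomy between ``$T_\text{S}(a)(1-\omega(a)/\Delta(a))^2$ large enough that the warm-start width alone beats $\Delta(a)$'' and ``otherwise $N_t(a)$ exceeds half the threshold so the vanilla width suffices.'' The paper's version of that dichotomy (splitting at $16\log(4Kt^4)/\Delta(a)^2$ and bounding $\sqrt{L/M}\le\sqrt{L/T_\text{S}(a)}$ in the first branch) avoids your explicit quadratic/AM--GM computation in the intermediate regime, but the underlying argument is the same.
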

To relate Theorem \ref{thm:upper_ins_dep} to Lemma \ref{lemma:crucial_bound}, we remark that $\frac{\text{Sav}_0(a)}{\Delta(a)} = T_\text{S}(a) \cdot \max\{ 1 - \frac{\omega(a)}{\Delta(a)}, 0\}^2$. 

\begin{proof}[Proof of Lemma \ref{lemma:crucial_bound}]
We consider two main cases on $\omega(a)$ and $\Delta(a)$:

    \textbf{Case 1: $\omega(a) < \Delta(a)$}. In this case, $P^{\text{(off)}}_a$ and $P^{\text{(on)}}_a$ are sufficiently closed. The case condition implies that $\frac{\omega(a)}{\Delta(a)} \in [0, 1)$ and 
    \begin{equation*}
        \max \left \{ 1 - \frac{\omega(a)}{\Delta(a)}, 0 \right \}^2 = \left(1-\frac{\omega(a)}{\Delta(a)} \right)^2 > 0.
    \end{equation*}
    To proceed. we consider two sub-cases:

    \textbf{Sub-case 1a: $T_\text{S}(a)\cdot ( 1 - \frac{\omega(a)}{\Delta(a)})^2 \geq \frac{16 \log(4K t^4)}{\Delta(a)^2}$.} In this sub-case, $T_{\text{S}}(a)$ is sufficiently large such that the offine data for arm $a$ provides enough information for the algorithm to  eliminate this sub-optimal arm. We focus on analyzing $\text{UCB}_t^{\text{S}}(a)$:
    \begin{equation*}
        \begin{aligned}
            \sqrt{\frac{2\log(2t / \delta_t)}{N_t(a)+ T_\text{S}(a)}} & \leq \sqrt{\frac{2\log(4Kt^4)}{T_\text{S}(a)}} \\
            & \leq \left( 1 - \frac{\omega(a)}{\Delta(a)} \right) \sqrt{\frac{2\log(4Kt^4)}{\frac{16 \log(4K t^4)}{\Delta(a)^2}   
    }} \\
    & =  \left( 1 - \frac{\omega(a)}{\Delta(a)} \right) \cdot \frac{\Delta(a)}{\sqrt{8}}.
        \end{aligned}
    \end{equation*}
The second inequality is by the lower bound $T_\text{S}(a)$ in the case condition in \textbf{Sub-case 1a}. In addition,
    $$
    \frac{ T_\text{S}(a)}{N_t(a)+ T_\text{S}(a)}\cdot \omega(a)  \leq \frac{ \omega(a) }{\Delta(a)} \cdot\Delta(a).
    $$
Consequently, conditioned on event $\xi_t$, $\text{UCB}^\text{S}_t(a) \leq \mu^{\text{(on)}}(a) +$
    \begin{subequations}
    \begin{align}
        &\quad \ 2\cdot\sqrt{\frac{2\log(2 t / \delta_t)}{N_t(a)+ T_\text{S}(a)}} + \frac{T_\text{S}(a)}{T_{\text{S}}(a) + N_t(a)}\cdot \omega(a) \label{eq:subcase2abyconc}\\
        & \leq \mu^{\text{(on)}}(a) +   \left( 1 - \frac{\omega(a)}{\Delta(a)} \right) \cdot \frac{\Delta(a)}{\sqrt{2}} + \frac{ \omega(a) }{\Delta(a)} \cdot\Delta(a)\nonumber\\
        &< \mu^{(\text{on})}_*\leq \min\left\{
         \text{UCB}_t(a_*),  \text{UCB}^\text{S}_t(a_*) 
         \right\}, \label{eq:bysubcase2aagain}
    \end{align}
    \end{subequations}
    The strict inequality in step (\ref{eq:bysubcase2aagain}) is by the sub-case condition that $1 - \frac{\omega(a)}{\Delta(a)} > 0 $. The less-than-equal steps in Lines (\ref{eq:subcase2abyconc}, \ref{eq:bysubcase2aagain}) is by conditioning on the event $\xi_t$. Altogether, $A_t\neq a$ with certainty.
    
    \textbf{Sub-case 1b: $T_\text{S}(a)\cdot ( 1 - \frac{\omega(a)}{\Delta(a)})^2 < \frac{16 \log(4K t^4)}{\Delta(a)^2}$.} In this sub-case, although $P^{\text{(off)}}_a$ and $P^{\text{(on)}}_a$ are sufficiently closed, $T_{\text{S}}(a)$ is not large enough for $\text{UCB}_t^{\text{S}}(a)$ to shrink faster than $\text{UCB}_t(a)$. Thus the offline data does not provide enough evidence for the algorithm to eliminate the sub-optimal arm $a$ more quickly. Then we have $N_t(a) > $
    \begin{equation*}
    \begin{aligned}
    & 32 \cdot \frac{\log(4 K t^4 )}{\Delta(a)^2} - T_\text{S}(a) \cdot \max\left\{ 1 - \frac{\omega(a)}{\Delta(a)}, 0\right\}^2 \\
    \geq &\frac{16 \log(4K t^4)}{\Delta(a)^2}.
    \end{aligned}
    \end{equation*}
    Consequently, we have $$
    \text{rad}_t(a) = \sqrt{\frac{2\log(2t / \delta_t)}{N_t(a)}} <\sqrt{\frac{2\log(4Kt^4)}{ 16 \cdot \frac{\log(4K t^4)}{\Delta(a)^2} }} \leq \frac{\Delta(a)}{\sqrt{8}},
    $$
    leading to $\text{UCB}_t(a)\leq \mu^{\text{(on)}}(a) + 2\cdot \frac{\Delta(a)}{\sqrt{8}} < \mu^{(\text{on})}_*\leq \min\{
         \text{UCB}_t(a_*),  \text{UCB}^\text{S}_t(a_*) 
         \} $,
    where the first and the last inequalities are by the event $\xi_t$, thus $A_t\neq a$ with certainty. 
    
     \textbf{Case 2: $\omega(a) \geq \Delta(a)$}. Then $\max\{ 1 - \frac{\omega(a)}{\Delta(a)}, 0\}^2 = 0$, and $
    N_t(a) > 32 \cdot \frac{\log(4 K t^4 )}{\Delta(a)^2}$. In this case, $P^{\text{(off)}}_a$ and $P^{\text{(on)}}_a$ are far apart. Similar to \textbf{Case 1b}, we arrive at $\text{rad}_t(a) =$
    $$
     \sqrt{\frac{2\log(2t / \delta_t)}{N_t(a)}} <\sqrt{\frac{2\log(4Kt^4)}{ 32 \cdot \frac{\log(4K t^4)}{\Delta(a)^2} }} \leq \frac{\Delta(a)}{4} \le \frac{\Delta(a)}{\sqrt{8}}. 
    $$
    Then same as \textbf{Case 1b}, $\text{UCB}_t(a)\leq \min\{
         \text{UCB}_t(a_*),  \text{UCB}^\text{S}_t(a_*) 
         \}$.
    Altogether, $A_t\neq a$ with certainty. Altogether, all cases are covered and the Lemma is proved.
\end{proof}

\subsection{Instance-Indepedent Regret Bounds}\label{sec:indpt}
Now we analyze the instance-independent bound. 

\begin{theorem}\label{thm:upper_indpt}
    Assume $2\leq K\leq T$. Consider policy MIN-UCB, which inputs a valid bias bound $V$ on the underlying instance and  $\delta_t = \delta /(2Kt^2)$ for any $t$, where $\delta\in (0,1)$. With probability at least $1-\delta$,\; $     \text{Reg}_T(\text{MIN-UCB}, P)  =$
    {\small \begin{equation}\label{eq:reg_indpt}
    O \left ( \min \left \{ \sqrt{KT\log\frac{T}{\delta}}, \left(\sqrt{\frac{\log(T/\delta)}{\tau_*}}+ V_\text{max}  \right)\cdot T 
        \right \}
        \right),
    \end{equation}
    }
    where $V_\text{max}= \max_{a\in {\cal A}}V(a)$, and $(\tau_*, n_*)$ is an optimum solution of the following linear program:
    \begin{equation}
        \begin{aligned}
             \text{(LP): }\max_{\tau,n} \quad & \tau \\
            \text{s.t.} \quad & \tau \leq T_{\text{S}}(a) + n(a) \quad \forall a \in \mathcal{A},\\
            & \sum_{a\in {\cal A}}n(a) =T,\\
            & \tau\geq 0, n(a)\geq 0\quad\;\; \forall a \in \mathcal{A}.
        \end{aligned}
        \label{eq:indpt-LP}
    \end{equation}
\end{theorem}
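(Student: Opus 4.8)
The plan is to run the optimism analysis twice on the same trajectory—once against $\text{UCB}_t$ and once against $\text{UCB}^\text{S}_t$—and then report the better of the two resulting sums. First I would fix the good event $\Xi = \bigcap_{t=K+1}^{T}\xi_t$; by Lemma~\ref{lem:conf-event} and a union bound with $\delta_t = \delta/(2Kt^2)$, we get $\Pr(\Xi^C)\leq \sum_{t}2K\delta_t = \delta\sum_{t}t^{-2}=O(\delta)$, so $\Xi$ holds with probability at least $1-\delta$ (the $\pi^2/6$ constant being absorbed). On $\Xi$, the selection rule $A_t\in\arg\max_a\min\{\text{UCB}_t(a),\text{UCB}^\text{S}_t(a)\}$ forces $\min\{\text{UCB}_t(A_t),\text{UCB}^\text{S}_t(A_t)\}\geq \min\{\text{UCB}_t(a_*),\text{UCB}^\text{S}_t(a_*)\}\geq \mu^\text{(on)}_*$, so \emph{both} $\text{UCB}_t(A_t)\geq\mu^\text{(on)}_*$ and $\text{UCB}^\text{S}_t(A_t)\geq\mu^\text{(on)}_*$. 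Combining each with the matching upper estimate from $\xi_t(A_t)$ and $\xi^\text{S}_t(A_t)$ yields the two per-step regret bounds
\[
\Delta(A_t)\leq 2\,\text{rad}_t(A_t),\qquad \Delta(A_t)\leq 2\sqrt{\tfrac{2\log(2t/\delta_t)}{N_t(A_t)+T_\text{S}(A_t)}}+\tfrac{T_\text{S}(A_t)}{N_t(A_t)+T_\text{S}(A_t)}\,\omega(A_t),
\]
both valid simultaneously (the second uses $\omega(a)=V(a)+\mu^\text{(off)}(a)-\mu^\text{(on)}(a)$). Summing the first over $t$ gives the vanilla-UCB branch, summing the second gives the offline-data branch, and since each is a valid upper bound on the regret, so is their minimum.

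For the first branch I would invoke the textbook counting argument: writing $n_a=N_{T+1}(a)$ with $\sum_a n_a=T$, and using $N_t(A_t)\geq 1$ for $t>K$,
\[
\sum_{t=K+1}^{T}\frac{1}{\sqrt{N_t(A_t)}}=\sum_{a}\sum_{i=1}^{n_a-1}\frac{1}{\sqrt{i}}\leq \sum_a 2\sqrt{n_a}\leq 2\sqrt{K\textstyle\sum_a n_a}=2\sqrt{KT}
\]
by Cauchy--Schwarz; since $\log(2t/\delta_t)=O(\log(T/\delta))$ for $K\leq T$, this produces $O(\sqrt{KT\log(T/\delta)})$. The $O(K)$ regret of the warm-up rounds $t\leq K$ is lower order and is handled exactly as in the standard UCB analysis.

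The crux—and the step I expect to be the main obstacle—is bounding the offline-data sum $\sum_{t=K+1}^T (N_t(A_t)+T_\text{S}(A_t))^{-1/2}$ by $O(T/\sqrt{\tau_*})$, which is where the linear program (\ref{eq:indpt-LP}) enters. Reindexing by arm, this is at most $\sum_a\sum_{i=1}^{n_a}(i+T_\text{S}(a))^{-1/2}$, and I would split each summand by whether $i+T_\text{S}(a)>\tau_*$ or $\leq\tau_*$. Summands of the first type are each $<1/\sqrt{\tau_*}$, and there are at most $\sum_a n_a=T$ of them, so they contribute $<T/\sqrt{\tau_*}$. For the second type, a decreasing-integrand comparison gives, per arm, $\sum_{i\leq \tau_*-T_\text{S}(a)}(i+T_\text{S}(a))^{-1/2}\leq 2(\sqrt{\tau_*}-\sqrt{T_\text{S}(a)})\leq 2\max\{\tau_*-T_\text{S}(a),0\}/\sqrt{\tau_*}$; summing over $a$ and invoking LP feasibility $\sum_a\max\{\tau_*-T_\text{S}(a),0\}\leq\sum_a n_*(a)=T$ (which holds since $n_*(a)\geq\max\{\tau_*-T_\text{S}(a),0\}$ at the optimum) bounds this by $2T/\sqrt{\tau_*}$. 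Hence the full sum is $\leq 3T/\sqrt{\tau_*}$. The insight to isolate here is that $\tau_*$ is precisely the max--min ``water level'' reachable by distributing $T$ online pulls onto the offline counts $\{T_\text{S}(a)\}$, and the LP constraint is exactly what collapses the heterogeneous counts into a single effective sample size.

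Finally I would assemble the second branch: multiplying $3T/\sqrt{\tau_*}$ by $2\sqrt{2\log(2t/\delta_t)}=O(\sqrt{\log(T/\delta)})$, and bounding the bias contribution by $\sum_t \tfrac{T_\text{S}(A_t)}{N_t(A_t)+T_\text{S}(A_t)}\,\omega(A_t)\leq 2V_\text{max}\,T$ via $\omega(a)\leq 2V(a)\leq 2V_\text{max}$ and $T_\text{S}/(N_t+T_\text{S})\leq 1$, gives $O\big((\sqrt{\log(T/\delta)/\tau_*}+V_\text{max})\,T\big)$. Taking the minimum of the two branches completes the proof. The routine pieces are the concentration/union bound and the Cauchy--Schwarz sum; the genuinely new ingredient is the LP water-filling decomposition, and the only mild subtlety is checking that the off-by-one indexing of $N_t(a)$ leaves the integral comparisons intact.
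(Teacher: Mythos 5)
Your proposal is correct, and its overall skeleton (the good event $\Xi$, the observation that on $\xi_t$ both $\text{UCB}_t(A_t)\geq\mu^{\text{(on)}}_*$ and $\text{UCB}^\text{S}_t(A_t)\geq\mu^{\text{(on)}}_*$ so that both per-step bounds hold simultaneously, the Cauchy–Schwarz treatment of the vanilla branch, and the $V_\text{max}T$ bias term) matches the paper's proof. Where you genuinely diverge is at the crux: bounding $\sum_a\sum_{i=1}^{n_a}(i+T_\text{S}(a))^{-1/2}$ by $O(T/\sqrt{\tau_*})$. The paper does this by formulating an auxiliary integer program that maximizes this sum over all allocations with $\sum_a N(a)=T$, characterizing its optimum via an exchange/contradiction argument (its Lemmas~\ref{lem:app-inst-indep-ip-trans} and~\ref{lem:app-inst-indep-lp-solu}) to show the worst case is the water-filling allocation $N^*(a)\leq\max\{\lceil\tau_*\rceil-T_\text{S}(a),0\}$, and only then bounding the resulting sum. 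You instead split the summands of the \emph{realized} allocation pointwise according to whether $i+T_\text{S}(a)$ exceeds the water level $\tau_*$: the high terms are each below $1/\sqrt{\tau_*}$ and number at most $T$, while the low terms are controlled per arm by the integral comparison $2(\sqrt{\tau_*}-\sqrt{T_\text{S}(a)})\leq 2\max\{\tau_*-T_\text{S}(a),0\}/\sqrt{\tau_*}$ and then summed using LP feasibility $\max\{\tau_*-T_\text{S}(a),0\}\leq n_*(a)$ with $\sum_a n_*(a)=T$. Your route is shorter and more elementary — it needs no extremal characterization and no exchange argument, only feasibility of $(\tau_*,n_*)$ — and it yields the same $O(T/\sqrt{\tau_*})$ with an explicit constant $3$ versus the paper's $8$. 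What the paper's IP formulation buys in exchange is a reusable worst-case-allocation lemma that it invokes again, with general exponent $\rho$, in the combinatorial extension (Theorem~\ref{thm:comb-gen-instance-independent-upper-bound}); your splitting argument would also generalize to $(i+T_\text{S}(a))^{-\rho/2}$ with minor changes, so nothing essential is lost.
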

In the minimum in (\ref{eq:reg_indpt}), the first term corresponds to the regret due to the confidence radii $\{\text{rad}_t(A_t)\}^T_{t=K+1}$ in the vanilla UCB. The second term corresponds to the regret due to the confidence radii $\{\text{rad}^\text{S}_t(A_t)\}^T_{t=K+1}$ in the warm-start $\text{UCB}^\text{S}$. The minimum corresponds to the fact that MIN-UCB uses the smaller of the two UCBs to be the surrogate for the true mean reward.

We make several observations on the regret bound (\ref{eq:reg_indpt}). First, observe that
\begin{equation*}
    \tau_* \geq T / K + \min_{a \in \mathcal{A}} T_{\text{S}}(a).
\end{equation*}
Indeed, the solution $(\bar{\tau}, \bar{n})$ defined as $\bar{n}(a) = T/K$ for all $a$ and $\bar{\tau} = T/K + \min_{a \in \mathcal{A}} T_{\text{S}}(a)$ is feasible to the LP Program (\ref{eq:indpt-LP}). Thus we have $T / \sqrt{\tau_*} = o (\sqrt{KT} )$. Furthermore, when $V_\text{max} = o(\sqrt{K/T})$, we have $\text{Reg}_T(\text{MIN-UCB}, P) = o(\sqrt{KT\log(T/\delta)})$, and $\text{Reg}_T(\text{MIN-UCB}, P) = O(\sqrt{KT\log(T/\delta)})$ otherwise. This observation demonstrates that only when $V_\text{max} = o(\sqrt{K/T})$, the offline data is informative and able to enhance algorithm performance in terms of minimax regret bound. In this case, the improvement becomes more substantial when $V_\text{max}$ is smaller and $\tau_*$ is larger, reflecting that intuition that a larger and more informative offline dataset can better facilitate online learning.

Second, the second term in (\ref{eq:reg_indpt}) requires a new analysis, as we need to determine the worst-case regret under the heterogeneity in the sample sizes $\{T_\text{S}(a)\}_{a\in {\cal A}}$. We overcome the heterogeneity by a novel linear program in (\ref{eq:indpt-LP}). The constraints in the LP are set such that $\{n_*(a)\}_{a\in {\cal A}}$ represents a worst-case choice of  $\{\mathbb{E}[N_T(a)]\}_{a\in {\cal A}}$ that maximizes the instance-independent regret when $P^\text{(on)} = P^\text{(off)}$. We incorporate the auxiliary decision variable $\tau$ in (\ref{eq:indpt-LP}). A non-anticipatory incurs at most $O(1/\sqrt{\tau_*})$ regret per time round under the worst-case realization of the number of arm pulls.

Third, we discuss two special cases to better illustrate the $\tau_*$ and regret bound. When $T_\text{S}(a) = T_{\text{S}}$ for all $a$ for some fixed integer $T_{\text{S}} \ge 0$, it can be verified that $\tau_* = (T/K)+T_{\text{S}}$ is the optimal solution to the LP (\ref{eq:indpt-LP}), with $n_*(a) = T/K$ for all $a$, meaning that
\begin{equation*}
    \frac{T}{\sqrt{\tau_*}} = \sqrt{KT}\cdot \sqrt{\frac{T}{T + K T_{\text{S}}}}\leq \sqrt{KT}.
\end{equation*}
More generally, consider the case where $T_{\text{S}}(a) = T_{\text{S}} > 0$ for $K_0$ arms, with $0 < K_0 < K$, and $T_{\text{S}}(a) = 0$ for the remaining $K-K_0$ arms. In this case, it can be verified that
\begin{equation*}
    \tau_* = \begin{cases}
        \frac{T}{K - K_0} & T_{\text{S}}(K - K_0) > T,\\
        \frac{T + T_{\text{S}} K_0}{K} & \text{otherwise},
    \end{cases}
\end{equation*}
which implies
\begin{equation*}
    \frac{T}{\sqrt{\tau_*}} = \begin{cases}
        \sqrt{(K - K_0) T} & T_{\text{S}}(K - K_0) > T,\\
        \sqrt{KT} \cdot \sqrt{\frac{T}{T + T_{\text{S}}K_0}} & \text{otherwise.}
    \end{cases}
\end{equation*}
This expression demonstrates how the size of offline dataset influences the regret bound. This scenario also represents a setting in which the offline data is concentrated in a small subset of arms, while the remaining arms have little or no prior information. Our result demonstrates that our MIN-UCB can adaptively explores arms with limited or no historical data, thereby reducing the exploration cost and improving the regret bound.

In addition, when $T = \Theta(K)$ and $\min_{a \in \mathcal{A}} T_{\text{S}}(a) > \Omega(T)$, then $\tau_* \ge \Omega(T)$, and the standard term $\sqrt{KT} = \Theta(T)$, while $T / \sqrt{\tau_*} = o(\sqrt{T})$. Consequently, when $V_{\max} = o(1)$, our algorithm achieves a regret bound that is strictly better than the standard UCB $O(\sqrt{KT \log(T/\delta)})$. This result illustrates how offline data can significantly enhance online learning performance when the length of online phase $T$ is relatively small, i.e. when there is limited opportunity to explore all the arms thoroughly during the online phase. In such settings, it is critical to effectively utilize informative offline data, and our approach is the one that is capable of leveraging such informative offline data to improve learning performance and reduce the regret.

Despite the non-closed-form nature of $\tau_*$, we demonstrate that $\tau_*$ is fundamental to a tight regret bound by the following regret lower bound, which matches (\ref{eq:reg_indpt}) up to a logarithmic factor.

\begin{theorem}
    Let $V_\text{max}\in \mathbb{R}_{\geq 0}$ and $\{T_\text{S}(a)\}_{a\in {\cal A}}\in \mathbb{N}^{\cal A}$ be fixed but arbitrary, and let there be $K\geq 2$ arms. Set $V(a) = V_\text{max}$ for all $a\in {\cal K}$.
    For any non-anticipatory policy $\pi$, there exists a Gaussian instance $I\in {\cal I}_V$ with offline sample size $\{T_\text{S}(a)\}_{a\in {\cal A}}$, such that $\mathbb{E}[\text{Reg}_T(\pi, P)] = $
\begin{equation*}
    \Omega \left(\min\left\{\sqrt{KT}, \left(\frac{1}{\sqrt{\tau_*}} + V_\text{max}\right)\cdot T\right\} \right), 
\end{equation*}
where $\tau_*$ is the optimum of (\ref{eq:indpt-LP}).
\label{thm-lower-independent}
\end{theorem}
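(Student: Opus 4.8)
The plan is to prove the lower bound by exhibiting, for any fixed policy $\pi$, a Gaussian instance drawn from one of two hard families, matching the two summands inside the minimum separately. Writing $A = \sqrt{KT}$, $B = T/\sqrt{\tau_*}$ and $C = V_\text{max}T$, I will establish (i) a \emph{statistical} lower bound $\mathbb{E}[\text{Reg}_T(\pi,P)] = \Omega(B)$ and (ii) a \emph{bias} lower bound $\mathbb{E}[\text{Reg}_T(\pi,P)] = \Omega(\min\{A,C\})$, each on its own instance family. Since for a given policy I may select whichever instance inflicts the larger regret, the theorem then follows from the elementary inequality $\max\{B,\min\{A,C\}\} \geq \tfrac12 \min\{A, B+C\}$, which holds because $B \leq A$ (this in turn follows from $\tau_* \geq T/K$, the feasibility observation recorded after Theorem~\ref{thm:upper_indpt}). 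Throughout I rely on the standard change-of-measure machinery together with the KL chain rule of Theorem~\ref{thm:chain}, which splits the trajectory KL divergence into an offline part $T_\text{S}(a)\,\text{KL}(P_a^\text{(off)},Q_a^\text{(off)})$ and an online part $\mathbb{E}[N_T(a)]\,\text{KL}(P_a^\text{(on)},Q_a^\text{(on)})$, exactly as in the proof of Theorem~\ref{thm:impossible}.

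For the bias bound I use the classical minimax construction with the gap capped by the bias budget. Fix $g = \min\{V_\text{max}, c\sqrt{K/T}\}$ for a small constant $c$, take a base instance in which every arm has online and offline mean $0$, and for each arm $b$ an alternative $J_b$ in which arm $b$ has online mean $g$ while all offline means (and all other online means) remain $0$. Crucially the offline data is \emph{identical} across these instances, so it contributes nothing to the trajectory KL regardless of $\{T_\text{S}(a)\}_{a\in\mathcal{A}}$; only online pulls of arm $b$ carry information. Each $J_b$ lies in $\mathcal{I}_V$ since the only nonzero bias is $|g-0| = g \leq V_\text{max}$. A pigeonhole over $b$ selects an arm with $\mathbb{E}_\text{base}[N_T(b)] \leq T/K$, and the usual transportation/Pinsker argument shows the policy cannot pull this arm often enough under $J_b$, yielding $\Omega(gT) = \Omega(\min\{V_\text{max}T, \sqrt{KT}\})$ regret.

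The statistical bound is the technically delicate part and is where $\tau_*$ enters. Here I set $\Delta = c/\sqrt{\tau_*}$ and, in contrast to the bias family, let the offline mean of the shifted arm move together with its online mean, so every instance has zero bias and lies in $\mathcal{I}_V$ for free. Let $\mathcal{A}_+ = \{a : T_\text{S}(a) < \tau_*\}$; the water-filling characterization of the LP~\eqref{eq:indpt-LP} gives $\sum_{a \in \mathcal{A}_+}(\tau_* - T_\text{S}(a)) = T$. Starting from a base instance with all means $0$ and, for each $b \in \mathcal{A}_+$, an alternative $I_b$ with arm $b$'s online \emph{and} offline mean equal to $\Delta$, the trajectory KL between base and $I_b$ is $(T_\text{S}(b) + \mathbb{E}_\text{base}[N_T(b)])\Delta^2/2$. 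Averaging the regret over $b \in \mathcal{A}_+$ and bounding the aggregate total-variation distance via Cauchy--Schwarz and Jensen, I will use the key estimate $\sum_{b \in \mathcal{A}_+}\sqrt{T_\text{S}(b)+\mathbb{E}_\text{base}[N_T(b)]} \leq |\mathcal{A}_+|\sqrt{\tau_*}$, itself a consequence of $\sum_{b \in \mathcal{A}_+}(T_\text{S}(b)+\mathbb{E}_\text{base}[N_T(b)]) \leq |\mathcal{A}_+|\tau_*$ (combining the water-filling identity with $\sum_a \mathbb{E}_\text{base}[N_T(a)] = T$). This drives the averaged regret down to $\Omega(\Delta T) = \Omega(T/\sqrt{\tau_*})$ whenever $|\mathcal{A}_+| \geq 2$.

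I expect the main obstacle to be the degenerate case $|\mathcal{A}_+| = 1$, where the averaging collapses: the single under-sampled arm may be pulled $\Theta(T)$ times under the base instance, so ``failing to exploit $b$'' no longer forces regret. I will treat this case with a symmetric two-point construction. For the unique $b \in \mathcal{A}_+$ (so $\tau_* = T_\text{S}(b) + T$), compare $I_b^+$ and $I_b^-$, in which arm $b$ has online and offline mean $+\Delta$ and $-\Delta$ respectively while all other arms stay at $0$; their trajectory KL is at most $2\tau_*\Delta^2 = O(1)$, and since the optimal arm flips between the two instances, the Bretagnolle--Huber inequality used in Theorem~\ref{thm:impossible} forces $\mathbb{E}[\text{Reg}(I_b^+)] + \mathbb{E}[\text{Reg}(I_b^-)] = \Omega(\Delta T) = \Omega(T/\sqrt{\tau_*})$. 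Assembling the statistical and bias bounds through the min/max inequality of the first paragraph then yields the stated $\Omega(\min\{\sqrt{KT}, (1/\sqrt{\tau_*}+V_\text{max})T\})$ lower bound, matching the upper bound of Theorem~\ref{thm:upper_indpt} up to the logarithmic factor.
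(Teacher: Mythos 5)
Your proposal is correct, and it reaches the bound by a genuinely different decomposition from the paper's. The paper splits into three cases according to which of $\sqrt{KT}$, $TV_\text{max}$, $T/\sqrt{\tau_*}$ binds and builds a tailored two-instance argument in each regime; you instead prove two standalone bounds --- a zero-bias statistical bound $\Omega(T/\sqrt{\tau_*})$ and a bias bound $\Omega(\min\{\sqrt{KT},V_\text{max}T\})$ --- and stitch them together with the elementary inequality $\max\{B,\min\{A,C\}\}\geq\tfrac{1}{2}\min\{A,B+C\}$ (which in fact holds even without invoking $B\leq A$). The substantive divergence is in how $\tau_*$ enters. The paper observes that $\tilde n(a)=\mathbb{E}_{P,\pi}[N_T(a)]$ together with $\tilde\tau=\min_a\{T_\text{S}(a)+\tilde n(a)\}$ is feasible for (\ref{eq:indpt-LP}), so \emph{some} arm $k$ satisfies $T_\text{S}(k)+\mathbb{E}[N_T(k)]\leq\tau_*$, and perturbs that single arm; you instead average the change of measure over all of $\mathcal{A}_+=\{a:T_\text{S}(a)<\tau_*\}$, using the water-filling identity $\sum_{a\in\mathcal{A}_+}(\tau_*-T_\text{S}(a))=T$ (i.e.\ Lemma~\ref{lem:app-inst-indep-lp-solu}) plus Cauchy--Schwarz to control $\sum_{b\in\mathcal{A}_+}\sqrt{T_\text{S}(b)+\mathbb{E}[N_T(b)]}$. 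Both routes need a degenerate escape hatch --- the paper's Situation (i), where the only information-poor arm is the base-optimal one, versus your $|\mathcal{A}_+|=1$ case handled by the symmetric $\pm\Delta$ pair and Bretagnolle--Huber --- and both rest on the same machinery (the chain rule of Theorem~\ref{thm:chain}, Theorem~\ref{thm:BH} or Pinsker, and Gaussian KL). What each buys: the paper's single-arm argument is leaner, needing only LP feasibility rather than the explicit optimal solution; your averaging argument is more modular (it never singles out the arm with maximal $T_\text{S}$) and the max/min combination spares you from re-running the construction in each parameter regime. Your bias family, with all offline means pinned at $0$ so that only the online KL term survives, is exactly the device the paper uses in its Cases 1b and 2, just packaged with the capped gap $g=\min\{V_\text{max},c\sqrt{K/T}\}$ so that one construction covers both terms of $\min\{\sqrt{KT},V_\text{max}T\}$.
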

Theorem \ref{thm-lower-independent} is proved in Appendix \ref{sec:app_lower_ins_indep}. We conclude this section by highlighting that, in the case of $P^{\text{(off)}} = P^{\text{(on)}}$ no existing work establishes an instance-independent regret bound of $o(\sqrt{KT\log(T/\delta)})$. By setting $V(a) = 0$ for all $a$, our analysis implies that MIN-UCB achieves a regret of $O(T\sqrt{\log(T/\delta)/\tau_*})$, with matching lower bound of $\Omega(T/\sqrt{\tau_*})$, showing its near-optimality in the well-aligned setting.

\section{MIN-COMB-UCB: Generalization to Combinatorial Bandits}\label{sec:MIN-COMB-UCB} 
We extend our algorithmic framework to the stochastic combinatorial bandit setting, which generalizes the setting in \cite{chen2013combinatorial}. In stochastic combinatorial bandits, a problem instance is described by the tuple $(\mathcal{A},\mathcal{B},P,\{T_{\text{S}}(a)\}_{a \in \mathcal{A}},T)$. The set $\mathcal{A} = [K]$ is the set of base arms, and $\mathcal{B} \subseteq 2^{\mathcal{A}}$ denotes a non-empty collection of feasible actions, where each feasible action is a subset of base arms. We let $m = \max_{A \in \mathcal{A}} |A|$ denote the maximum cardinality of a feasible action in $\mathcal{B}$. We denote a base arm by lower case letter $a$ and a feasible action by upper case letter $A$.

We denote $P = (P^{\text{(off)}} = \{P^{\text{(off)}}_a\}_{a\in {\cal A}},P^{\text{(on)}}= \{P^{\text{(on)}}_a\}_{a\in {\cal A}})$ as the pair of latent probability distributions of the outcomes with the base arms, in offline and online environments respectively. For each base arm $a$, there are $T_{\text{S}}(a)\in \mathbb{Z}_{\geq 0}$ offline samples. 
The quantity $T$ is the length of the horizon. 

Similar to the multi-armed bandit model, the model dynamic consists of a warm-start phase, then an online phase. In the warm-start phase, the DM receives a dataset $S(a) = \{X_s(a)\}_{s=1}^{T_{\text{S}}(a)}$ for each base arm $a \in \mathcal{A}$, where $X_s(a) \sim P^{\text{(off)}}_a$ for $s = 1, \ldots, T_{\text{S}}(a)$ are i.i.d. samples. The online phase follows the standard stochastic combinatorial bandit model with semi-bandit feedback over $T$ steps. To this end, we remark that there are other feedback structures, such as full-information or bandit feedback 
\cite{audibert2011minimax,mannor2011bandits,chen2013combinatorial}. 

In the online phase, at each time $t \in [T]$, the DM chooses a feasible action $A_t \in \mathcal{B}$. Upon playing $A_t$, the DM observes the base-arm-level outcomes $(R_t(a))_{a \in A_t}$ for the base arms belonging to $A_t$, and receives the total reward $\Upsilon_t(A_t)$. 
Each base-arm-level outcome $R_t(a) \sim P^{\text{(on)}}_a$ is a realization drawn from the online distribution associated with base arm $a$. 
The $\Upsilon_t(A_t)$ is a non-negative random variable representing the total reward obtained when action $A_t$ is played. This reward depends on the problem instance, the chosen action $A_t$, and the outcome $R_t(a)$ of each base arm $ a \in A_t$. 
A simple but common example is the case of linear rewards, where $\Upsilon_t(A_t) = \sum_{a \in A_t} R_t(a)$, but more general nonlinear reward structures are also allowed. Specifically, for each action $A \in \mathcal{B}$, we assume that the expected reward of playing $A$ at time $t$, denoted by $\mathbb{E}[\Upsilon_t(A)]$, depends only on action $A$ and on the vector of expected base-arm rewards $\boldsymbol{u} = (u(a))_{a \in \mathcal{A}}\in \mathbb{R}^K$. We write $\text{r}_{\boldsymbol{u}}(A) = \mathbb{E}[\Upsilon_t(A)]$ for the expected reward of action $A$ given reward vector $\boldsymbol{u}$. In line with the literature (\cite{chen2013combinatorial}), we make two assumptions on $\text{r}_{\boldsymbol{u}}(A)$:
\begin{itemize}
    \item \textbf{Monotonicity}. The expected reward of $A \in \mathcal{B}$ is monotonically nondecreasing with respect to $\boldsymbol{u}$. That is, for any $\boldsymbol{u}_1 \le \boldsymbol{u}_2$ coordinate-wise, we have $\text{r}_{\boldsymbol{u}_1}(A) \le \text{r}_{\boldsymbol{u}_2}(A)$, for all $A \in \mathcal{B}$.
    \item \textbf{Bounded Smoothness}. There exists a strictly increasing function $f(\cdot)$ such that, for any $\boldsymbol{u}_1$, $\boldsymbol{u}_2$ and for any $A \in \mathcal{B}$, we have $| \text{r}_{\boldsymbol{u}_1}(A) - \text{r}_{\boldsymbol{u}_2}(A) | \le f(x)$ for any $x$ such that $x \ge \max_{a \in A} |u_1(a) - u_2(a)|$.
\end{itemize}

The DM selects $A_1,\ldots, A_T$ via a non-anticipatory policy $\pi = \{\pi_t\}^\infty_{t=1}$, which can utilize the offline dataset $S$. The DM knows $\mathcal{A}$, $\mathcal{B}$, $S$, but not $P$, $T$, and aims to maximize the total expected reward in the online phase, i.e. $\mathbb{E}[\sum_{t=1}^T \text{r}_{\boldsymbol{\mu}^{\text{(on)}}}(A_t)]$. As in the multi-armed bandit case, each action $A_t$ is chosen based on offline data and observed outcomes during time $1, \ldots, t-1$ in the online phase, without knowing the true $\boldsymbol{\mu}^{\text{(on)}}$. Denote $\text{r}^*_{\boldsymbol{u}} = \max_{A\in \mathcal{B}} \text{r}_{\boldsymbol{u}}(A)$ and $A^*_{\boldsymbol{u}} = \mathop{\arg \max}_{A\in \mathcal{B}} \text{r}_{\boldsymbol{u}}(A)$ as the optimal value and the corresponding action under the reward vector $\boldsymbol{u}\in \mathbb{R}^K$. Following the standard approach in the existing works (\cite{chen2013combinatorial,wang2017improving}), we assume access to an $(\alpha,\beta)$-optimization oracle, where $\alpha, \beta \in (0,1]$. Specifically, given the reward vector $\boldsymbol{u}\in \mathbb{R}^K$, the optimization oracle $\text{Oracle}(\boldsymbol{u})$ returns an action $A \in \mathcal{B}$ such that $\Pr(\text{r}_{\boldsymbol{u}}(A) \ge \alpha \cdot \text{r}^*_{\boldsymbol{u}}) \ge \beta$. Given such an oracle, it is inappropriate to compare the algorithmic performance directly with the optimal value $\text{r}^*_{\boldsymbol{\mu}^{\text{(on)}}}$. Instead, we compare against the scaled optimal value $\alpha \cdot \beta \cdot \text{r}^*_{\boldsymbol{\mu}^{\text{(on)}}}$. The regret is defined as
\begin{equation}
    \text{Reg}_T(\pi,P) = \sum_{t=1}^T \left ( \alpha \cdot \beta \cdot \text{r}^*_{\boldsymbol{\mu}^{\text{(on)}}} - \mathbb{E}\left[\text{r}_{\boldsymbol{\mu}^{\text{(on)}}}(A_t) \right ]\right).
    \label{eq:combi-bandit-regret-def}
\end{equation}
The assumptions on monotonicity, bounded smoothness, and optimization oracle are standard in related works, even though alternative formulations exist that may slightly differ from ours. While such differences may lead to minor variations in the final form of regret bounds, they do not affect the core insights or conclusions of our analysis. 

Finally, as in the multi-armed bandit setting, we assume access to the auxiliary input $V = \{V(a)\}_{a \in \mathcal{A}}$, defined in (\ref{eq:V}). We then present the generalized policy MIN-COMB-UCB in Section \ref{sec:alg-min-comb-ucb} and provide its instance-independent regret upper bound. Section \ref{sec:analysis-comb-semi} discusses the special case of combinatorial bandits with linear rewards, where we establish a tight and more benign instance-independent bound and an additional instance-dependent bound.

\subsection{Design and Analysis for Policy MIN-COMB-UCB}

\label{sec:alg-min-comb-ucb}

The policy MIN-COMB-UCB is presented in Algorithm \ref{alg:COMB_UCB_min}. The $t_0$ in Line 7 denotes the time step at which the \textbf{while} loop ends.
There are several key differences between this policy and MIN-UCB. First, MIN-COMB-UCB uses a different initialization procedure to collect initial observations for all base arms during the online phase. Second, after computing $\text{UCB}_t(a)$ and $\text{UCB}^\text{S}_t(a)$, MIN-COMB-UCB constructs the reward vector $(\min\left\{\text{UCB}_t(a),  \text{UCB}^\text{S}_t(a) \right\} )_{a \in \mathcal{A}}$, and passes it to the optimization oracle $\text{Oracle}(\cdot)$ to select the action $A_t$ (\ref{eq:alg-comb-At-rule}). This step generalizes the corresponding step in MIN-UCB (\ref{eq:alg-minucb-selection}). In fact, in the case of multi-armed bandits with $\alpha = \beta = 1$, the action
\begin{equation*}
    A_t \in \text{argmax}_{a\in {\cal A}}\left\{  \min\left\{
         \text{UCB}_t(a),  \text{UCB}^\text{S}_t(a) 
         \right\}\right\},
\end{equation*}
is consistent with the output of $\text{Oracle}\left(\left(\min\left\{\text{UCB}_t(a),  \text{UCB}^\text{S}_t(a) \right\} \right)_{a \in \mathcal{A}}\right)$, matching the selection rule in (\ref{eq:alg-minucb-selection}).

\begin{algorithm}[htb]
	\caption{Policy MIN-COMB-UCB} \label{alg:COMB_UCB_min}
	\begin{algorithmic}[1]
	    \State \textbf{Input:} Valid bias bound $V$ on the instance, confidence parameter $\{\delta_t\}^\infty_{t=1}$, offline samples $S$.
        \State For each $a\in {\cal A}$, compute $\hat{X}(a) = \frac{\sum^{T_\text{S}(a)}_{s=1}X_s(a)}{T_\text{S}(a)}$, and initialize $\hat{R}_1(a) = +\infty$, $N_1(a) = 0$, $t_0=1$.
        \State \textbf{Initialization:} 
        \While{$\exists a \in \mathcal{A}$ such that $N_{t_0}(a) = 0$}
        \State Choose 
        \begin{equation*}
            A_{t_0} \in \text{Oracle} \left( (\hat{R}_{t_0}(a))_{a \in \mathcal{A}} \right).
        \end{equation*}
        \State Observe $\{R_{t_0}(a)\}_{a \in A_{t_0}}$, where $R_{t_0}(a)\sim P^{\text{(on)}}_{a}$. For each $a\in {\cal A}$, update $\hat{R}_{t_0+1}(a) = $
        \begin{equation*}
            \begin{aligned}
                 \begin{cases}
    \frac{N_{t_0}(a)\cdot \hat{R}_{t_0}(a) }{N_{t_0}(a) + 1} + \frac{R_{t_0}(a)}{N_{t_0}(a) + 1}        &   a \in A_{t_0},\\
    \hat{R}_{t_0}(a)  &  a \notin A_{t_0},
  \end{cases} 
            \end{aligned}
        \end{equation*}
        and $N_{t_0+1}(a)  = N_{t_0}(a) + \mathbf{1}(A_{t_0} = a)$, $t_0 = t_0+1$.
        \EndWhile
     \For{$t = t_0+1,  \ldots, T$}
        \State For each base arm $a \in \mathcal{A}$, compute $\text{UCB}_t(a)$ in (\ref{eq:alg-minucb-ucb}) and $\text{UCB}^\text{S}_t(a)$ in (\ref{eq:alg-minucb-ucbs}).
         \State Choose $A_t$ by applying the optimization oracle: $A_t = $
         \begin{equation}
             \text{Oracle} \left(\left(\min\left\{
         \text{UCB}_t(a),  \text{UCB}^\text{S}_t(a) 
         \right\} \right)_{a \in \mathcal{A}} \right).
             \label{eq:alg-comb-At-rule}
         \end{equation}
        \State Observe $\{R_t(a)\}_{a \in A_t}$, where $R_t(a)\sim P^{\text{(on)}}_{a}$. For each $a\in {\cal A}$, update
        \begin{equation*}
            \begin{aligned}
                N_{t+1}(a) & = N_t(a) + \mathbf{1}(a \in A_t),\\
                \hat{R}_{t+1}(a) &= \begin{cases}
    \frac{N_t(a)\cdot \hat{R}_t(a) }{N_t(a) + 1} + \frac{R_t(a)}{N_t(a) + 1}        &   a \in A_t,\\
    \hat{R}_t(a)  &  \text{otherwise.}
  \end{cases}
            \end{aligned}
        \end{equation*}
    \EndFor
    \end{algorithmic}
\end{algorithm}

Now we provide the instance-independent regret upper bound for our MIN-COMB-UCB where $f$ satisfies $f(x) = \gamma \cdot x^\rho$ for some $\gamma > 0$ and $\rho \in (0,1]$.

\begin{theorem}
    Assume $2\leq K\leq T$ and $f(x) = \gamma \cdot x^\rho$ for some $\gamma > 0$ and $\rho \in (0,1]$. Consider MIN-COMB-UCB, which inputs a valid bias bound $V$ on the underlying instance and  $\delta_t = \delta /(2Kt^2)$ for $t=1, \ldots$, where $\delta\in (0,1)$. With probability $\ge 1- (\delta\pi^2 / 6)$, we have $\text{Reg}_T(\text{MIN-COMB-UCB},P) \le O \left ( \gamma \cdot \min \left \{ T_{1},T_{2}  \right \} \right)$,
    where
    \begin{equation*}
        \begin{aligned}
            T_1 & = \left(K \log\left(T / \delta\right) \right)^{\rho / 2} \cdot T^{1 - \rho / 2},\\
            T_2 & = \left( V_{\max}^{\rho} + (\log(T/ \delta))^{\rho/2} \cdot \tau_*^{-\rho/2} \right) \cdot T, 
        \end{aligned}
    \end{equation*}
    and $V_\text{max}= \max_{a\in {\cal A}}V(a)$, and $(\tau_*, n_*)$ is an optimum solution of the linear program (\ref{eq:indpt-LP}).
    \label{thm:comb-gen-instance-independent-upper-bound}
\end{theorem}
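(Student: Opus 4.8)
The plan is to transport the proof of Theorem~\ref{thm:upper_indpt} to the combinatorial setting, replacing the per-round gap by the smoothness-controlled surrogate coming from the oracle. I would first fix the confidence level. By the combinatorial analogue of Lemma~\ref{lem:conf-event}, each $\xi_t$ fails with probability at most $2K\delta_t=\delta/t^2$, so a union bound over $t\ge1$ gives $\Pr(\cap_t\xi_t)\ge1-\delta\sum_{t\ge1}t^{-2}=1-\delta\pi^2/6$, which is exactly the claimed probability. All remaining steps are carried out on the event $\cap_t\xi_t$.

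Next I would establish the per-round regret bound. Write $\bar u_t(a)=\min\{\text{UCB}_t(a),\text{UCB}^{\text S}_t(a)\}$, let $\bar{\boldsymbol u}_t=(\bar u_t(a))_{a\in{\cal A}}$ be the vector fed to the oracle, and set $g_t(a)=\bar u_t(a)-\mu^{\text{(on)}}(a)$. On $\xi_t$ we have $g_t(a)\ge0$ for every $a$, hence $\bar{\boldsymbol u}_t\ge\boldsymbol\mu^{\text{(on)}}$ coordinatewise, so monotonicity gives $\text{r}^*_{\bar{\boldsymbol u}_t}\ge\text{r}^*_{\boldsymbol\mu^{\text{(on)}}}$. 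The $(\alpha,\beta)$-oracle guarantee and $\text{r}\ge0$ yield $\mathbb{E}[\text{r}_{\bar{\boldsymbol u}_t}(A_t)\mid H_{t-1}]\ge\alpha\beta\,\text{r}^*_{\bar{\boldsymbol u}_t}\ge\alpha\beta\,\text{r}^*_{\boldsymbol\mu^{\text{(on)}}}$, while bounded smoothness with $f(x)=\gamma x^\rho$ gives the pointwise inequality $\text{r}_{\boldsymbol\mu^{\text{(on)}}}(A_t)\ge\text{r}_{\bar{\boldsymbol u}_t}(A_t)-\gamma(\max_{a\in A_t}g_t(a))^\rho$. Combining the two, the per-round regret $\alpha\beta\,\text{r}^*_{\boldsymbol\mu^{\text{(on)}}}-\mathbb{E}[\text{r}_{\boldsymbol\mu^{\text{(on)}}}(A_t)\mid H_{t-1}]$ is at most $\gamma\,\mathbb{E}[(\max_{a\in A_t}g_t(a))^\rho\mid H_{t-1}]$, following the standard accounting in \cite{chen2013combinatorial}. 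Reading off the two confidence events, on $\xi_t$ I would record the two deterministic bounds $g_t(a)\le2\,\text{rad}_t(a)$ and $g_t(a)\le2\sqrt{2\log(2t/\delta_t)/(N_t(a)+T_{\text S}(a))}+\omega(a)$, where the second uses $\text{rad}^{\text S}_t$ and $\omega(a)\le2V_{\max}$.

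The crux, and the step I expect to be the main obstacle, is summing $\sum_t\gamma(\max_{a\in A_t}g_t(a))^\rho$ into the two clean terms $T_1,T_2$ without paying a spurious factor of the cardinality $m$. The naive bound $(\max_{a\in A_t}(\cdot))^\rho\le\sum_{a\in A_t}(\cdot)^\rho$ loses such a factor because $\sum_a N_T(a)$ can be as large as $mT$. Instead I would charge every round to its \emph{bottleneck} base arm $a_t\in\arg\min_{a\in A_t}N_t(a)$ (respectively $\arg\min_{a\in A_t}(N_t(a)+T_{\text S}(a))$ for the warm-start branch), which is precisely the arm attaining the maximum confidence radius. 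Since exactly one bottleneck is charged per round, the charge counts $m(a)=|\{t:a_t=a\}|$ satisfy $\sum_a m(a)\le T$, and for a fixed $a$ the values of $N_t(a)$ at its charged rounds are distinct, so $\sum_{t:a_t=a}N_t(a)^{-\rho/2}\le\sum_{j=1}^{m(a)}j^{-\rho/2}=O(m(a)^{1-\rho/2})$ because $\rho\le1<2$. The concavity (power-mean) inequality $\sum_a m(a)^{1-\rho/2}\le K^{\rho/2}(\sum_a m(a))^{1-\rho/2}\le K^{\rho/2}T^{1-\rho/2}$ then delivers the vanilla term $T_1=(K\log(T/\delta))^{\rho/2}T^{1-\rho/2}$. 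It is this bottleneck bookkeeping that keeps the effective budget at $T$, so that the \emph{same} linear program~(\ref{eq:indpt-LP}) (with $\sum_a n(a)=T$) governs the combinatorial case rather than a version with budget $mT$.

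For the warm-start term $T_2$ I would first use $\max_a(x_a+y_a)\le\max_ax_a+\max_ay_a$ and the subadditivity $(x+y)^\rho\le x^\rho+y^\rho$ (valid for $\rho\in(0,1]$) to split the per-round bound into a bias part $2^\rho V_{\max}^\rho$, summing trivially to $O(V_{\max}^\rho T)$, and a radius part built from $(N_t(a)+T_{\text S}(a))^{-\rho/2}$. Applying the same bottleneck charging, the radius part is at most $(2\log(2T/\delta_T))^{\rho/2}\sum_a\sum_{k=T_{\text S}(a)+1}^{T_{\text S}(a)+m(a)}k^{-\rho/2}$ with $\sum_a m(a)\le T$. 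I would then bound $\sum_a\sum_{k=T_{\text S}(a)+1}^{T_{\text S}(a)+n(a)}k^{-\rho/2}$ by its maximum over $\sum_a n(a)=T$; since each inner sum is concave in $n(a)$, the water-filling optimum coincides with the LP~(\ref{eq:indpt-LP}), equalizing $T_{\text S}(a)+n(a)=\tau_*$ on the active arms. The telescoping estimate $\sum_{k=S+1}^{\tau_*}k^{-\rho/2}\le\int_S^{\tau_*}x^{-\rho/2}\,dx\le\frac{1}{1-\rho/2}(\tau_*-S)\tau_*^{-\rho/2}$, which holds because $S=T_{\text S}(a)\le\tau_*$, then sums to $O((\log(T/\delta))^{\rho/2}\tau_*^{-\rho/2}T)$ and yields $T_2$. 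Finally I would take the round-by-round minimum of the two branches (legitimate since MIN-COMB-UCB plugs the smaller of the two UCBs into the oracle), observe that the initialization while-loop contributes only a lower-order $O(K)$-type additive term dominated by $T_1$ since $K\le T$ forces $K\le K^{\rho/2}T^{1-\rho/2}$, and multiply through by $\gamma$ to obtain $\text{Reg}_T(\text{MIN-COMB-UCB},P)\le O(\gamma\min\{T_1,T_2\})$.
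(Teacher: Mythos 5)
Your proposal is correct and follows essentially the same route as the paper: reduce the per-round regret to $\gamma(\max_{a\in A_t}g_t(a))^{\rho}$ via monotonicity, the $(\alpha,\beta)$-oracle, and bounded smoothness; charge exactly one base arm per round so the budget stays at $T$; then use Jensen for $T_1$ and the water-filling/LP argument (the paper's Lemma~\ref{lem:app-inst-indep-ip-trans}) for $T_2$. The only cosmetic difference is that you charge the round to $\arg\min_{a\in A_t}N_t(a)$ (resp.\ $\arg\min_{a\in A_t}(N_t(a)+T_{\text{S}}(a))$) whereas the paper charges the arm $a_t$ attaining $\max_{a\in A_t}g_t(a)$ and uses $N_t(a_t)\ge N_t^{\max}(a_t)$; both bookkeepings are valid and yield the same bounds.
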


Theorem \ref{thm:comb-gen-instance-independent-upper-bound} is proved in Appendix \ref{sec:app-pf-comb-upper-indep}. Its proof shares a similar framework to that of Theorem \ref{thm:upper_indpt}, but requires different techniques due to the nonlinearity of $\rho$ and the bounded smoothness assumption. 
Similarly to the multi-armed bandit case, $T_1$ arises from $\{\text{rad}_t(A_t)\}^T_{t=t_0+1}$ and corresponds to the regret bound of the CUCB algorithm without offline data (\cite{chen2013combinatorial}). $T_2$ is derived from $\{\text{rad}_t^{\text{S}}(A_t)\}^T_{t=t_0+1}$. Hence, the regret bound in (\ref{thm:comb-gen-instance-independent-upper-bound}) is always no worse than that in (\cite{chen2013combinatorial}), and the former is strictly better when $T_2<T_1$, representing the regime when the offline data are beneficial to online learning. Furthermore, in the multi-armed bandit special case, both $\gamma$ and $\rho$ can be set to 1, and the bound recovers Theorem \ref{thm:upper_indpt}.

\subsection{Special Case: Combinatorial Bandits with Linear Rewards}

\label{sec:analysis-comb-semi}

We further discuss the special setting of the combinatorial bandits with linear rewards. In this setting, the total reward $R_t$ for action $A_t$ is simply equal to the sum of the rewards of the individual arms, i.e. $R_t = \sum_{a \in A_t} R_t(a)$. Consequently, the expected reward function satisfies $\text{r}_{\boldsymbol{u}}(A) = \sum_{a \in A} u(a)$. For simplicity, we further assume $\alpha = \beta = 1$ for the optimization oracle. Under this formulation, the monotonicity assumption is satisfied trivially, and the bounded smoothness assumption holds with $f(x) = m x$. 
Applying Theorem \ref{thm:comb-gen-instance-independent-upper-bound}, we directly obtain the following regret upper bound:
{
\small
\begin{equation}
    O \left (\min \left \{ m \sqrt{K T \log\left(T / \delta\right)} , \left(\sqrt{\frac{\log(T/ \delta)}{\tau_*}} + V_{\max} \right) \cdot m T 
        \right \}
        \right).
    \label{eq:comb-semi-upper-indpt-nottight}
\end{equation}
}
However, we can obtain the following improved instance-independent regret upper bound:

\begin{theorem}
    Assume $2\leq K\leq T$. Consider MIN-COMB-UCB on combinatorial bandits with linear rewards model, which inputs a valid bias bound $V$ on the underlying instance and  $\delta_t = \delta /(2Kt^2)$ for $t=1, \ldots$, where $\delta\in (0,1)$. With probability $\ge 1- (\pi^2\delta / 6)$, $\text{Reg}_T(\text{MIN-COMB-UCB},P) \le $
    {\small
    \begin{equation}\label{eq:comb-semi-upper-indpt}
     O \left ( \min \left \{ \sqrt{mKT\log(T / \delta)}, \left(\sqrt{\frac{\log(T / \delta)}{\tau_*^{\text{C}}}}+ V_\text{max}  \right)\cdot mT 
        \right \}
        \right),
    \end{equation}
    }
    where $V_\text{max}= \max_{a\in {\cal A}}V(a)$, and $(\tau_*^{\text{C}}, n_*^{\text{C}})$ is an optimum solution of the following linear program:
    \begin{equation}
        \begin{aligned}
            \max_{\tau,n} \quad & \tau \\
            \text{s.t.} \quad & \tau \leq T_{\text{S}}(a) + n(a) \quad \forall a \in \mathcal{A},\\
            & \sum_{a\in {\cal A}}n(a) =mT,\\
            & \tau\geq 0, n(a)\geq 0\quad\;\; \forall a \in \mathcal{A}.
        \end{aligned}
        \label{eq:comb-inst-indpt-LP}
    \end{equation}
    \label{thm:comb-instance-independent-upper-bound}
\end{theorem}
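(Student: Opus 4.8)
The plan is to run the optimism-in-the-face-of-uncertainty argument on the surrogate reward vector $\bar{\boldsymbol u}_t = (\bar u_t(a))_{a\in\mathcal A}$ with $\bar u_t(a) = \min\{\text{UCB}_t(a),\text{UCB}^\text{S}_t(a)\}$, and then to bound the accumulated confidence widths in two complementary ways whose minimum yields (\ref{eq:comb-semi-upper-indpt}). First I would condition on the good event $\bigcap_t \xi_t$. Since $\delta_t = \delta/(2Kt^2)$, Lemma \ref{lem:conf-event} together with a union bound gives $\Pr(\bigcap_t \xi_t) \ge 1 - \sum_t 2K\delta_t = 1 - \delta\pi^2/6$, matching the stated confidence level. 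On this event $\bar u_t(a) \ge \mu^{\text{(on)}}(a)$ for every $a$, so by monotonicity $\text{r}_{\bar{\boldsymbol u}_t}(A^*_{\boldsymbol\mu^{\text{(on)}}}) \ge \text{r}^*_{\boldsymbol\mu^{\text{(on)}}}$; since $\alpha=\beta=1$ the oracle returns $A_t$ with $\text{r}_{\bar{\boldsymbol u}_t}(A_t)=\text{r}^*_{\bar{\boldsymbol u}_t} \ge \text{r}_{\bar{\boldsymbol u}_t}(A^*_{\boldsymbol\mu^{\text{(on)}}})$. Linearity then collapses the per-round gap to $\text{r}^*_{\boldsymbol\mu^{\text{(on)}}} - \text{r}_{\boldsymbol\mu^{\text{(on)}}}(A_t) \le \sum_{a\in A_t}\bigl(\bar u_t(a)-\mu^{\text{(on)}}(a)\bigr)$, the \emph{exact} sum of per-arm widths over the played action. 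This exactness, rather than the bounded-smoothness surrogate $f(x)=mx$, is precisely what will save a factor $\sqrt m$ relative to (\ref{eq:comb-semi-upper-indpt-nottight}); the regret accrued during the \textbf{while}-loop initialization I would absorb as a lower-order additive term in the standard CUCB fashion.

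Because $\bar u_t(a)\le\text{UCB}_t(a)$ and $\bar u_t(a)\le\text{UCB}^\text{S}_t(a)$ hold simultaneously, I would bound the single quantity $\sum_t\sum_{a\in A_t}(\bar u_t(a)-\mu^{\text{(on)}}(a))$ twice and take the smaller global bound, which produces the outer $\min\{\cdot,\cdot\}$. For the vanilla branch, replacing each width by $2\,\text{rad}_t(a)$, I would regroup the double sum by base arm, telescope $\sum_{t:a\in A_t}1/\sqrt{N_t(a)} \le 2\sqrt{N_T(a)}$, and apply Cauchy--Schwarz with the semi-bandit pull budget $\sum_a N_T(a)=\sum_t|A_t|\le mT$ to get $\sum_a\sqrt{N_T(a)}\le\sqrt{K\sum_a N_T(a)}\le\sqrt{mKT}$, producing the $O(\sqrt{mKT\log(T/\delta)})$ term.

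For the warm-start branch I would split each width, using $\xi^\text{S}_t(a)$, into $2\sqrt{2\log(2t/\delta_t)/(N_t(a)+T_\text{S}(a))}$ plus the bias part $\tfrac{T_\text{S}(a)}{N_t(a)+T_\text{S}(a)}\omega(a)$. The bias part is immediate: $\sum_t\sum_{a\in A_t}\tfrac{T_\text{S}(a)}{N_t(a)+T_\text{S}(a)}\omega(a)\le\sum_t\sum_{a\in A_t}2V(a)\le 2V_{\max}\sum_t|A_t|\le 2V_{\max}mT$, giving the $V_{\max}\cdot mT$ term. The stochastic part is where the program (\ref{eq:comb-inst-indpt-LP}) enters: regrouping by arm, it is at most $\sqrt{2\log(T/\delta)}\sum_a\sum_{j=0}^{N_T(a)-1}(T_\text{S}(a)+j)^{-1/2}$, and the inner sum $\phi(N_T(a);T_\text{S}(a))$ is concave in $N_T(a)$. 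Maximizing $\sum_a\phi(N(a);T_\text{S}(a))$ over $\sum_a N(a)=mT$ is thus a water-filling problem whose optimum equalizes the marginal increments $(T_\text{S}(a)+N(a))^{-1/2}$ across active arms at a common level — exactly the optimal value $\tau_*^{\text{C}}$ of (\ref{eq:comb-inst-indpt-LP}) with right-hand side $mT$. Bounding $\phi(N(a);T_\text{S}(a))\le 2\sqrt{\tau_*^{\text{C}}}-2\sqrt{T_\text{S}(a)}$ on active arms and using $\sqrt{T_\text{S}(a)}\ge T_\text{S}(a)/\sqrt{\tau_*^{\text{C}}}$ with the budget constraint collapses the sum to $\le 2mT/\sqrt{\tau_*^{\text{C}}}$, yielding the $O\bigl(mT\sqrt{\log(T/\delta)/\tau_*^{\text{C}}}\bigr)$ term. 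Adding the bias and stochastic parts gives the second branch of (\ref{eq:comb-semi-upper-indpt}), and taking the minimum with the vanilla branch completes the argument.

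The main obstacle is the water-filling step: I must argue that the \emph{worst-case} realized pull profile $\{N_T(a)\}$, constrained only by the semi-bandit budget $\sum_a N_T(a)\le mT$, is governed by the program (\ref{eq:comb-inst-indpt-LP}) with right-hand side $mT$ rather than $T$, and that its optimum $\tau_*^{\text{C}}$ is exactly the level equalizing the concave marginal gains. Getting this budget right — each round feeds up to $m$ base arms, so the effective sample mass is $mT$ — is precisely what separates $\tau_*^{\text{C}}$ from the multi-armed $\tau_*$ of Theorem \ref{thm:upper_indpt} and what makes the linear-reward bound strictly more benign. A secondary, more routine point is the uniform off-by-one bookkeeping from the initialization phase and the telescoping inequalities, which must be carried out consistently across all arms so that the per-arm bounds aggregate cleanly into the two global branches.
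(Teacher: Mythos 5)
Your proposal is correct and follows essentially the same route as the paper's proof: condition on $\cap_t\xi_t$, use linearity to reduce the per-round gap to $\sum_{a\in A_t}(\min\{\text{UCB}_t(a),\text{UCB}^{\text{S}}_t(a)\}-\mu^{\text{(on)}}(a))$, bound the vanilla branch by the standard $\sqrt{mKT\log(T/\delta)}$ argument, and control the warm-start branch by showing the worst-case pull profile under the budget $\sum_a N_T(a)\le mT$ is governed by the LP (\ref{eq:comb-inst-indpt-LP}). Your water-filling/marginal-equalization argument is the continuous-optimization phrasing of the paper's exchange argument in Lemma \ref{lem:app-inst-indep-ip-trans} (whose conclusion $N^*(a)\le\max\{\lceil\tau_*^{\text{C}}\rceil-T_{\text{S}}(a),0\}$ matches your active/inactive-arm split), and your final collapse via $\sqrt{T_{\text{S}}(a)}\ge T_{\text{S}}(a)/\sqrt{\tau_*^{\text{C}}}$ is a cosmetic variant of the paper's averaging step.
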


Theorem \ref{thm:comb-instance-independent-upper-bound} is proved in Appendix \ref{sec:app-pf-comb-semi-upper-indep}, following the same line as Theorem \ref{thm:upper_indpt}.
The bound in (\ref{eq:comb-semi-upper-indpt}) is tighter than the one (\ref{eq:comb-semi-upper-indpt-nottight}), since the first term in (\ref{eq:comb-semi-upper-indpt}) saves a $\sqrt{m}$ factor and the second term satisfies $\tau_*^{\text{C}} \ge \tau_*$. The improvement comes from the specific structure of linear rewards, with further discussions in Appendix \ref{sec:app-pf-comb-semi-upper-indep}. Similarly, the first term in $\min$ corresponds to the instance-independent bound for CombUCB1 algorithm without offline data (\cite{kveton2015tight}). Thus, the bound improves upon it by introducing $\{\text{rad}_t^{\text{S}}(A_t)\}^T_{t=t_0+1}$ to incorporate the benefit from offline data.
We further show that this bound is tight up to a logrithmic factor.

\begin{theorem}
    Let $V_\text{max}\in \mathbb{R}_{\geq 0}$ and $K$ and $m$ satisfy $K/m$ is an integer. Set $V(a) = V_\text{max}$ for all base arm $a\in \mathcal{A}$.
    For any non-anticipatory policy $\pi$, there exists a Gaussian combinatorial bandits with linear rewards instance with offline sample size $\{T_\text{S}(a)\}_{a\in {\cal A}}$ satisfiy $T_{\text{S}}(\ell + (j-1)m) = T_{\text{S},j}$ for $j = 1,\cdots,K/m$ and $\ell = 1,\cdots,m$ for some integers $T_{\text{S},j}$, such that $\mathbb{E}[\text{Reg}_T(\pi, P)] =$
\begin{equation*}
     \Omega \left(\min\left\{\sqrt{mKT}, \left(\frac{1}{\sqrt{\tau_*^{\text{C}}}} + V_\text{max}\right)\cdot mT\right\} \right), 
\end{equation*}
where $\tau_*$ is the optimum of (\ref{eq:comb-inst-indpt-LP}).
\label{thm:comb-instance-independent-lower-bound}
\end{theorem}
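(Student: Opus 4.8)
The plan is to build a hard combinatorial instance whose partition-matroid feasible set decouples into $m$ statistically independent multi-armed bandit sub-problems, each a copy of the hard instance underlying Theorem~\ref{thm-lower-independent}, and then to sum the $m$ per-block lower bounds. First I would index the $K$ base arms as pairs $(j,\ell)$ with $j\in[K/m]$, $\ell\in[m]$, identifying $(j,\ell)$ with $\ell+(j-1)m$, so that arm $(j,\ell)$ carries $T_{\text{S},j}$ offline samples exactly as required. I then partition the base arms into $m$ blocks $B_1,\dots,B_m$ with $B_\ell=\{(j,\ell):j\in[K/m]\}$; each block has $K/m$ arms and, crucially, the \emph{same} offline-size profile $\{T_{\text{S},j}\}_{j\in[K/m]}$. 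Let $\mathcal{B}$ be the family of transversals of this partition, i.e. a feasible action picks exactly one arm from each block, so $|A|=m=\max|A|$ and the linear reward is additive across blocks. All distributions are unit-variance Gaussians, and within each block I plant the hard mean configuration of Theorem~\ref{thm-lower-independent} for a $(K/m)$-armed instance with offline sizes $\{T_{\text{S},j}\}$ and bias bound $V_\text{max}$, chosen independently across blocks and respecting $|\mu^{(\text{on})}(a)-\mu^{(\text{off})}(a)|\le V_\text{max}$.

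Because the reward is additive over blocks and the semi-bandit feedback in $B_\ell$ consists solely of the single pulled arm's outcome, the regret splits exactly as $\text{Reg}_T(\pi,P)=\sum_{\ell=1}^m \text{Reg}^{(\ell)}_T$, where $\text{Reg}^{(\ell)}_T$ is the regret of the policy induced on block $B_\ell$'s $(K/m)$-armed bandit. Each such sub-problem is a genuine instance of the MAB-with-offline-data model, so Theorem~\ref{thm-lower-independent} yields
$$\mathbb{E}[\text{Reg}^{(\ell)}_T]=\Omega\!\left(\min\!\left\{\sqrt{(K/m)T},\Big(\tfrac{1}{\sqrt{\tau_*^{(\ell)}}}+V_\text{max}\Big)T\right\}\right),$$
where $\tau_*^{(\ell)}$ solves the MAB program (\ref{eq:indpt-LP}) with horizon $T$ over the $K/m$ arms of $B_\ell$. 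A short LP computation shows $\tau_*^{(\ell)}=\tau_*^{\text{C}}$ for every $\ell$: by symmetry across $\ell$ the optimum of (\ref{eq:comb-inst-indpt-LP}) takes $n_{(j,\ell)}=n_j$, reducing the budget $\sum_a n(a)=mT$ to $\sum_j n_j=T$, which is precisely the per-block program. Summing over the $m$ blocks and using $m\sqrt{(K/m)T}=\sqrt{mKT}$, $m(\tfrac{1}{\sqrt{\tau_*^{\text{C}}}}+V_\text{max})T=(\tfrac{1}{\sqrt{\tau_*^{\text{C}}}}+V_\text{max})mT$, and $m\min\{A,B\}=\min\{mA,mB\}$ then gives the claimed bound.

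The delicate point, which I expect to be the main obstacle, is that $\pi$ observes all blocks jointly, so the policy induced on $B_\ell$ can depend on the realizations and hidden means of the other blocks; a naive per-block invocation of the minimax statement of Theorem~\ref{thm-lower-independent} is circular. I would resolve this by arguing against the \emph{product} of the per-block hard priors: conditioned on the parameters of the blocks $\ell'\neq \ell$, their observations are independent of $B_\ell$'s hidden mean and thus act only as external randomness, so the induced block-$\ell$ policy faces exactly the Bayesian $(K/m)$-armed problem that the proof of Theorem~\ref{thm-lower-independent} lower-bounds. Taking expectations over the product prior makes the $m$ bounds additive, and any realization attaining at least the expected regret furnishes the required deterministic instance. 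The remaining work is bookkeeping: assuming $K/m\ge 2$ so each block is non-degenerate, and checking that the bias-bound feasibility $|\mu^{(\text{on})}(a)-\mu^{(\text{off})}(a)|\le V_\text{max}$ is inherited verbatim from the MAB construction.
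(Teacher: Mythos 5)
Your construction is genuinely different from the paper's. The paper uses the $m$-path family of \cite{kveton2015tight}: $\mathcal{B}$ consists of $K/m$ disjoint paths, each containing $m$ base arms with identical distributions and identical offline sizes, so the whole combinatorial problem collapses to a \emph{single} $(K/m)$-armed MAB with rewards scaled by $m$, and Theorem \ref{thm-lower-independent} is invoked once, as a black box, with the factor $m$ coming from the reward scaling. You instead take $\mathcal{B}$ to be the transversals of a partition into $m$ blocks of $K/m$ arms each, so the problem factors into $m$ statistically independent $(K/m)$-armed sub-problems and the factor $m$ comes from summing per-block bounds. Your regret decomposition, the arithmetic $m\sqrt{(K/m)T}=\sqrt{mKT}$, and the LP identity $\tau_*^{(\ell)}=\tau_*^{\text{C}}$ (via the symmetric reduction of (\ref{eq:comb-inst-indpt-LP}) with budget $mT$ to (\ref{eq:indpt-LP}) with budget $T$) are all correct.

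The gap is in the step you yourself flag as delicate. Your fix --- arguing against the ``product of the per-block hard priors'' --- is circular as stated, because Theorem \ref{thm-lower-independent} does not supply a policy-independent hard prior for each block: its proof constructs the alternative instance adaptively to the policy (the perturbed arm $k$ or $a'$ is chosen as an under-pulled arm of $\pi$, and its Case 1a even branches on whether $T_{\text{S}}(a)+\mathbb{E}_{P,\pi}[N_T(a)]\le\tau_*$ fails at arm $1$ or at some other arm). In your product setting the marginal policy induced on block $\ell$ depends on the distributions placed on the other blocks, while those distributions are exactly the per-block hard instances you are trying to define --- so the $m$ hard instances cannot be chosen by $m$ separate black-box invocations of the theorem. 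To make your route work you must re-enter the proof of Theorem \ref{thm-lower-independent} and establish a fixed-prior (Bayesian) version, e.g., randomize the location of the perturbed arm uniformly within each block and control the average of the Bretagnolle--Huber exponents by Jensen's inequality, then check that each case split in that proof survives the averaging; only then does conditioning on the other blocks' parameters (which are independent of block $\ell$'s) legitimately reduce the induced block-$\ell$ policy to exogenously randomized play against a fixed prior. That is salvageable but is real additional work your proposal does not carry out; the paper's path construction avoids the issue entirely because there is only one effective MAB and hence no cross-block information to condition away.
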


Theorem \ref{thm:comb-instance-independent-lower-bound} is proved in Appendix \ref{sec:app_comb_semi_lower_ins_indep}. Moreover, we also derive the instance-dependent regret upper bound. Before proceeding, for each action $A$, we define $\Delta(A) = \sum_{a \in A_*} \mu^{\text{(on)}}(a) - \sum_{a \in A} \mu^{\text{(on)}}(a)$ 
as its optimality gap, and $ \Delta_{\min}(a) = \min_{A \in \mathcal{B}: a \in A,\Delta(A) > 0} \Delta(A)$
as the minimum sub-optimality gap for $a \in \mathcal{A} \backslash A^*_{\boldsymbol{\mu}^{\text{(on)}}}$. Together with the discrepancy measure $\{\omega(a)\}_{a \in \mathcal{A}}$ defined in (\ref{eq:omega}), our instance-dependent regret upper bound is presented as follows:

\begin{theorem}
    Set $\delta_t = \frac{1}{2Kt^2}$, then $\mathbb{E}\left[\text{Reg}_T(\text{MIN-COMB-UCB},P) \right] 
        \le$
    {\small
    \begin{equation}
        \begin{aligned}
        O \left( \sum_{a \in a \in \mathcal{A} \backslash A_*} \max \left \{\frac{m\log(T)}{\Delta_{\min}(a)} - \text{Sav}^{(\text{Com})}(a), \Delta_{\max}\right \}\right).
        \end{aligned}
        \label{eq:comb-semi-dependent-upper}
    \end{equation}
    }
    where $\text{Sav}^{(\text{Com})}(a) =$
    \begin{equation*}
        m\cdot T_{\text{S}}(a) \cdot \Delta_{\min}(a) \cdot \max \left\{1 - \frac{\omega(a)}{\Delta_{\min}(a)},0 \right \}^2,
    \end{equation*}
    and $\Delta_{\max} = \max_{A \in \mathcal{B}} \Delta(A)$.
    \label{thm:comb-instance-dependent-upper-bound}
\end{theorem}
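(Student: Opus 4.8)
The plan is to generalize the instance-dependent analysis of Theorem \ref{thm:upper_ins_dep} to the combinatorial setting, replacing the single-arm crucial lemma (Lemma \ref{lemma:crucial_bound}) by a per-base-arm ``bottleneck'' counting argument in the spirit of the tight semi-bandit analysis of \cite{kveton2015tight}, while carrying the offline-data correction over verbatim from the warm-start construction. Throughout I condition on the clean events $\xi_t=\cap_{a\in\mathcal{A}}(\xi_t(a)\cap\xi^{\text{S}}_t(a))$ of Lemma \ref{lem:conf-event}, now instantiated at the base-arm level with $N_t(a)$ the number of semi-bandit observations of arm $a$. Since $\delta_t=1/(2Kt^2)$ gives $\Pr(\xi_t^{c})\le 2K\delta_t=1/t^2$, the rounds on which some $\xi_t$ fails contribute at most $\Delta_{\max}\sum_t t^{-2}=O(\Delta_{\max})$ to the expected regret, and the initialization \textbf{while}-loop, which pulls every base arm at least once, contributes $O(\Delta_{\max})$ as well; both are absorbed by the $\Delta_{\max}$ floor in (\ref{eq:comb-semi-dependent-upper}).

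On the clean events I first establish optimism and a linear per-arm decomposition. Because $\alpha=\beta=1$, under $\xi_t$ the surrogate vector $\bar u_t=(\min\{\text{UCB}_t(a),\text{UCB}^{\text{S}}_t(a)\})_{a\in\mathcal{A}}$ satisfies $\bar u_t(a)\ge\mu^{\text{(on)}}(a)$, so by monotonicity and the exactness of the oracle the action $A_t$ obeys $\text{r}_{\bar u_t}(A_t)\ge\text{r}_{\bar u_t}(A_*)\ge\text{r}_{\boldsymbol{\mu}^{\text{(on)}}}(A_*)$; linearity of the reward then yields
$$\Delta(A_t)=\text{r}_{\boldsymbol{\mu}^{\text{(on)}}}(A_*)-\text{r}_{\boldsymbol{\mu}^{\text{(on)}}}(A_t)\le \text{r}_{\bar u_t}(A_t)-\text{r}_{\boldsymbol{\mu}^{\text{(on)}}}(A_t)=\sum_{a\in A_t}\big(\bar u_t(a)-\mu^{\text{(on)}}(a)\big)\le \sum_{a\in A_t}g_t(a),$$
where, exactly as in line (\ref{eq:subcase2abyconc}), $g_t(a)=\min\{2\,\text{rad}_t(a),\,2\sqrt{2\log(2t/\delta_t)/(N_t(a)+T_{\text{S}}(a))}+\tfrac{T_{\text{S}}(a)}{N_t(a)+T_{\text{S}}(a)}\,\omega(a)\}$ and $\omega(a)$ is the discrepancy measure of (\ref{eq:omega}).

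Next I run the bottleneck counting. I call $a\in A_t$ \emph{critical} if $g_t(a)\ge\Delta(A_t)/(2m)$; since $|A_t|\le m$, the non-critical arms contribute less than $\Delta(A_t)/2$, whence $\Delta(A_t)\le 2\sum_{a\ \text{critical}}g_t(a)$ and $\sum_{t:\Delta(A_t)>0}\Delta(A_t)\le 2\sum_{a}\sum_{t:\,a\ \text{critical}}g_t(a)$. For a critical $a$ we have $g_t(a)\ge\Delta(A_t)/(2m)\ge\Delta_{\min}(a)/(2m)$, so, running the Sub-case 1a/1b and Case 2 dichotomy of Lemma \ref{lemma:crucial_bound} with effective target $\Delta_{\min}(a)/(2m)$ and effective count $N_t(a)+T_{\text{S}}(a)$, the offline data both caps the number of critical pulls of $a$ and shrinks their widths. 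Summing the $g_t(a)$'s over these pulls (a discrete $\sum 1/\sqrt{n+T_{\text{S}}(a)}$-type sum) converts the $\big(m/\Delta_{\min}(a)\big)^2$ cap into the leading term $O(m\log T/\Delta_{\min}(a))$ and subtracts an offline contribution matching $\text{Sav}^{(\text{Com})}(a)$. Combining with the $O(\Delta_{\max})$ terms from clean-event failures and initialization, and taking the per-arm maximum with $\Delta_{\max}$ to keep each summand nonnegative, yields (\ref{eq:comb-semi-dependent-upper}).

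The main obstacle is precisely this last step: reconciling the tight combinatorial $m$-linear leading order with the offline saving. The factor $m$ (rather than $m^2$) forces a sum-of-widths charging, in which each critical pull is charged its own width $g_t(a)$ rather than the full gap $\Delta(A_t)$; one must then verify that feeding the reduced effective count $N_t(a)+T_{\text{S}}(a)$ and the bias direction through $\omega(a)$ into this summation reproduces the advertised $\text{Sav}^{(\text{Com})}(a)=m\,T_{\text{S}}(a)\,\Delta_{\min}(a)\max\{1-\omega(a)/\Delta_{\min}(a),0\}^2$, with the correct $\max\{\cdot,0\}^2$ profile and the correct $m$- and $\Delta_{\min}$-dependence. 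A secondary subtlety is that ``critical'' is defined through the minimum $g_t(a)=\min\{\cdot,\cdot\}$, so the pull-count cap must be argued from whichever of the two widths is active, mirroring the direction-of-bias split of Lemma \ref{lemma:crucial_bound} but now with the extra $1/(2m)$ factor tracked consistently through every comparison.
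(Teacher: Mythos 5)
Your high-level skeleton (clean events $\xi_t$, optimism, and the linear decomposition $\Delta(A_t)\le\sum_{a\in A_t}g_t(a)$ with $g_t(a)$ the minimum of the two radii) coincides with the paper's first reduction (its Lemma on the event $\mathcal{F}_t$), and your accounting of the initialization loop and clean-event failures as $O(\Delta_{\max})$ is fine. But the counting step you propose — a single ``critical arm'' threshold $g_t(a)\ge\Delta(A_t)/(2m)$ — is not what the paper does, and it cannot deliver the stated saving term. The paper follows the multi-level cascade of \cite{kveton2015tight}: it introduces events $G_{i,t}$ indexed by sequences $\alpha_i=12.3\cdot 0.25^i$, $\beta_i=0.3^i$ (``at least $\beta_i m$ base arms of $A_t\setminus A_*$ have $N_t(a)\le W_{i,t,a}$''), proves $\mathcal{F}_t\subseteq\cup_{i\ge1}G_{i,t}$, uses the level-dependent count thresholds
$$W_{i,t,a}=16\alpha_i\frac{m^2}{\Delta(A_t)^2}\log(4KT^3)-T_{\text{S}}(a)\max\Big\{1-\frac{m\sqrt{\alpha_i}\,\omega(a)}{\Delta(A_t)},0\Big\}^2,$$
and then peels over the ordered gaps $\Delta(A_{a,1})\ge\cdots\ge\Delta(A_{a,n(a)})$ to convert $\Delta(A_t)$ into $\Delta_{\min}(a)$.

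The gap is exactly the obstacle you flag but do not resolve. With a single threshold at $\Delta(A_t)/(2m)$, the warm-start branch of $g_t(a)$ yields a $T_{\text{S}}(a)$-dependent cap on the number of critical pulls only when $\omega(a)<\Delta(A_t)/(2m)$: otherwise the bias floor $\tfrac{T_{\text{S}}(a)}{N_t(a)+T_{\text{S}}(a)}\omega(a)$ alone can exceed the threshold for all $N_t(a)$ up to the first-branch cap. Concretely, take every suboptimal action containing $a$ to have gap $\Delta_{\min}(a)$, set $\omega(a)=\Delta_{\min}(a)/2$ and let $T_{\text{S}}(a)\to\infty$; then $g_t(a)\to\min\{2\sqrt{2\log(2t/\delta_t)/N_t(a)},\,\omega(a)\}$, arm $a$ stays critical for all $N_t(a)\lesssim m^2\log T/\Delta_{\min}(a)^2$, and the sum of charged widths remains $\Theta(m\log T/\Delta_{\min}(a))$ with no dependence on $T_{\text{S}}(a)$ — whereas the theorem asserts $\text{Sav}^{(\text{Com})}(a)=\tfrac14 mT_{\text{S}}(a)\Delta_{\min}(a)>0$ and hence a collapse to $O(\Delta_{\max})$ for large $T_{\text{S}}(a)$. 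The width-shrinking you invoke does not rescue this: it improves only the prefix of the sum where the first radius exceeds $\omega(a)$, a $O(\log T/\omega(a))$ effect, not a $mT_{\text{S}}(a)\Delta_{\min}(a)$ one. The cascade is what fixes it: for levels with $m\sqrt{\alpha_i}\le1$ the effective width threshold $\Delta(A_t)/(m\sqrt{\alpha_i})$ exceeds $\Delta(A_t)$ itself, so $\max\{1-m\sqrt{\alpha_i}\omega(a)/\Delta(A_t),0\}^2\ge m^2\alpha_i\max\{1-\omega(a)/\Delta_{\min}(a),0\}^2$ is active for every $\omega(a)<\Delta_{\min}(a)$, and summing $\alpha_i/\beta_i$ over those levels produces the advertised saving. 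To complete your proof you would need to replace the single critical threshold by this multi-scale decomposition (or an equivalent); the remaining ingredients of your outline are consistent with the paper's argument.
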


Theorem \ref{thm:comb-instance-dependent-upper-bound} is proved in Appendix \ref{sec:app-pf-comb-semi-upper-dep}. The instance-dependent bound for combinatorial bandits with linear rewards, achieved by the near-optimal algorithm CombUCB1 without offline data (\cite{kveton2015tight} Theorem 5), is
\begin{equation*}
    O \left( \sum_{a \in a \in \mathcal{A} \backslash A_*} \frac{m\log(T)}{\Delta_{\min}(a)} \right).
\end{equation*}
In comparison, our bound in (\ref{eq:comb-semi-dependent-upper}) introduces an additional saving term $\text{Sav}^{(\text{Com})}(a) \ge 0$, which improves regret whenever useful offline information is available.
The bound (\ref{eq:comb-semi-dependent-upper}) can be viewed as a generalization of the instance-dependent bound for multi-armed bandits in Theorem \ref{thm:upper_ins_dep}. The saving term shares similar insights with those in Theorem \ref{thm:upper_ins_dep}.
Moreover, when $m = 1$ and $\Delta_{\min}(a) = \Delta(a)$ for each sub-optimal $a$, the bound (\ref{eq:comb-semi-dependent-upper}) reduces exactly to the bound to the multi-armed bandit bound in Theorem \ref{thm:upper_ins_dep}, demonstrating the generality of our framework.

\subsection{Application: Social Influence Maximization}

\textcolor{black}{In the social influence maximization problem (\cite{kempe2003maximizing,lei2015online,chen2013combinatorial,chen2016combinatorial,chen2020optimization}), the objective is to select a small set of seed nodes in a social network to maximize the spread of influence.} The network is represented as a weighted directed graph $\mathcal{G} = (\mathcal{V},\mathcal{E})$, where $\mathcal{V}$ is a finite set of nodes and $\mathcal{E}$ is a set of directed edges, respectively. Each edge $(u,v) \in \mathcal{E}$ is associated with an unknown propagation probability $p(u,v)$. Given a seed set $S \subseteq \mathcal{V}$, the diffusion process proceeds as follows. For simplicity, we consider the ``one-step" diffusion model: each node $u \in S$ independently attempts to activate its inactive out-neighbors $v$ with probability $p(u,v)$. The reward of $S$ is defined as the total number of activated nodes. \textcolor{black}{This formulation follows the discussions in \cite{chen2016combinatorial}. For more details, readers can refer to \cite{chen2016combinatorial}.}

This problem can be formulated by the combinatorial bandit framework. The propagation probabilities $(p(u,v))_{(u,v) \in \mathcal{E}}$ are unknown and must be learned through repeated selections of seed sets while maximizing the total reward.
In this formulation, each base arm corresponds to an edge $(u,v)$, representing a potential channel between two users. Each feasible action refers to a group of edges induced by a selected seed set $S$. The expected reward of an action is the expected number of nodes activated through the seed set.
The bounded smoothness function can be set as $f(x) = |\mathcal{V}| |\mathcal{E}| x$. 

In practice, it is often reasonable to assume the availability of offline data in this setting. Historical social media records provide rich information about user-to-user influence. 
For example, social media stars or key opinion leaders often have extensive past records on how their influence propagates to followers, and such patterns tend to remain relatively stable over time, so we can reasonably expect a small distribution shift bound $V_{\max}$.
Therefore, access to offline data can substantially improve the learning efficiency in the online phase by providing accurate initial estimates of propagation probabilities and accelerating the convergence of the learned influence model.

\section{Numerical Experiments}

\label{sec:numerical}

We conduct numerical experiments to validate our findings and illustrate how $V$, $T$ and $\{T_{\text{S}}(a)\}_{a \in \mathcal{A}}$ affect algorithm performance. We compare MIN-UCB with three benchmarks. The first is vanilla UCB \cite{AuerCBF02} that ignores the offline data (``PURE-UCB''). The second chooses an arm with the largest $\text{UCB}^{\text{S}}_t(a)$ in each round defined in (\ref{eq:alg-minucb-ucbs}), denoted ``UCBS''. The third is MonUCB in \cite{banerjee2022artificial} that incorporates the offline data under $P^{\text{(off)}} = P^{\text{(on)}}$, refered to "MONUCB".

In all experiments, we simulate MIN-UCB and the benchmarks over various problem instances. Offline and online rewards for each arm are independently drawn from Gaussian distributions with variance 1. We set $K = 10$, and without loss of generality, let $a = 1$ be the unique optimal arm. For simplicity, $T_{\text{S}}(a) = T_{\text{S}}$ for all $a$ for some integer $T_{\text{S}}>0$ and $V(a) = |\mu^{\text{(off)}}(a) - \mu^{\text{(on)}}(a)|$ for all $a$. The online rewawrds are set as $\mu^{\text{(on)}}(1)=1$ for the optimal arm and $\mu^{\text{(on)}}(a) = 0$ otherwise, implying $\Delta(a) = 1$ for all sub-optimal arms. Under each instance, each algorithm is run independently for 50 trials, and we report the sample means and standard deviations of regret.

\subsection{Effect Of Discrepancy}

We investigate the effect of discrepancy and $V$ on algorithm performance. We fix $T_{\text{S}} = 1000$, $T = 10000$, and consider two experiment groups. In the first group, ``Optimistic Bias", we set $\mu^{\text{(off)}}(1) = \mu^{\text{(on)}}(1) - v$, and $\mu^{\text{(on)}}(a) = \mu^{\text{(on)}}(a) + v$ for $a > 1$, meaning that the offline data overestimate the sub-optimal arms while underestimating the optimal one. In the second group, ``Pessimistic Bias", we set $\mu^{\text{(off)}}(a) = \mu^{\text{(on)}}(a) - v$ for all $a$, meaning that the offline data consistently underestimate rewards. In both groups we vary $v \in \{0.1,0.2,\cdots ,1.0\}$. The results are shown in Figure \ref{figs:regret-V}.

\begin{figure}[htb]
    \centering
	\begin{subfigure}{.41\textwidth}
		\centering
		\includegraphics[width=1\textwidth]{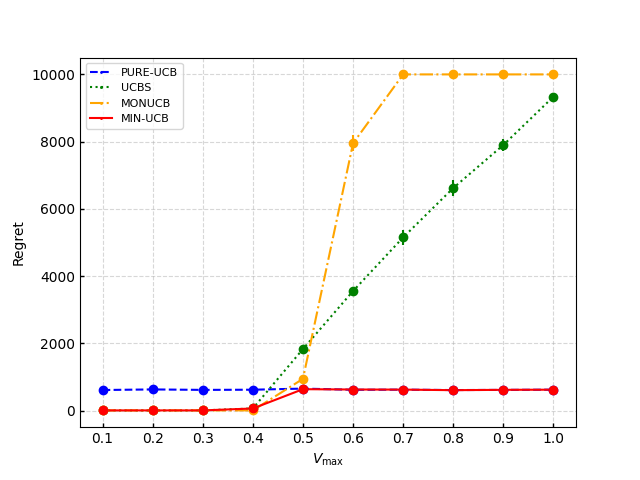}
		\caption{Optimistic Bias}
		\label{fig:reg-V-pos}
	\end{subfigure}
    \begin{subfigure}{.41\textwidth}
		\centering
	    \includegraphics[width=1\textwidth]{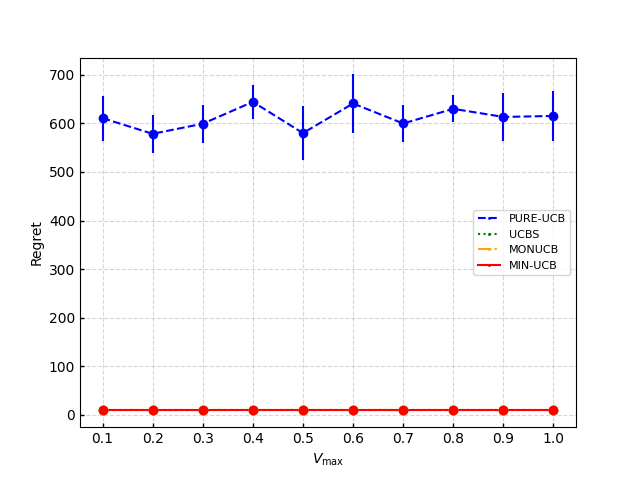}
	    \caption{Pessimistic Bias}
	    \label{fig:reg-V-neg}
	\end{subfigure}
    \caption{Effect of Discrepancy: Both magnitude and direction of bias are important.}
    \label{figs:regret-V}
\end{figure}

We analyze the Optimistic Bias group, with results shown in Figure \ref{fig:reg-V-pos}. 
The regret for PURE-UCB remains constant across all values of $v$, since it ignores the offline data and relies solely on online observations. In contrast, the performance of other three policies varies with $v$, as these approaches incorporate offline information in different ways. 
In this setting, the offline data becomes increasingly misaligned with the online environment as the bias level $v$ increases. Formally, for each sub-optimal arm $a > 1$, the discrepancy measure $\omega(a) = V(a) + \mu^{\text{(off)}}(a) - \mu^{\text{(on)}}(a)= 2v$ for $a$. When $v \le 0.4$, $\omega(a) < \Delta(a)$, the offline data for $a$ are informative. As shown in Figure \ref{fig:reg-V-pos}, UCBS, MONUCB, and our MIN-UCB all outperform PURE-UCB by exploiting informative offline data. However, when $v \ge 0.5$, $\omega(a) \ge \Delta(a)$, the offline data for $a$ is misleading. Then the performance of UCBS and MONUCB, both of which rely heavily and blindly trust the offline data, deteriorates rapidly. In contrast, our MIN-UCB automatically detects the unreliability of the offline data and adapts to pure online learning, achieving regret same as the PURE-UCB. This demonstrates the robustness and adaptivity of our policy. These observations align with our instance-dependent regret bound in Section \ref{sec:MINUCB}, which shows that for each sub-optimal $a$, the offline data are helpful only when $\omega(a) < \Delta(a)$, where the saving term $\text{Sav}_0(a) > 0$ and the performance can be improved.

We now turn to the Pessimistic Bias group, with results shown in Figure \ref{fig:reg-V-neg}. Unlike the Optimistic Bias group, the offline data here consistently underestimate the true rewards for all arms with the same bias $v$. Despite this, the offline data still provide informative signals for identifying the optimal arm. As shown in Figure \ref{fig:reg-V-neg}, UCBS, MONUCB, and MIN-UCB all maintain a remarkably low and stable regret across all $v$, outperforming PURE-UCB. In this setting, the discrepancy measure satisfies $\omega(a) = V(a) + \mu^{\text{(off)}}(a) - \mu^{\text{(on)}}(a)= 0$ for all $a$ and $v$. Consequently, the condition $\omega(a) < \Delta(a)$ always holds. Moreover, since $\omega(a)$, $T_{\text{S}}(a)$ and $\Delta(a)$ are constant across different $v$, our instance-dependent bound implies that the regret of MIN-UCB remains stable and shows a large improvement over PURE-UCB, consistent with the empirical results in Figure \ref{fig:reg-V-neg}.

Ultimately, these observations validate our theoretical findings and illustrate the importance of both the magnitude and direction of bias in determining its role in facilitating online learning.


\subsection{Effect Of $T$ and $T_{\text{S}}$}

We study the effect of $T$ and $\{T_{\text{S}}(a)\}_{a \in \mathcal{A}}$ on algorithm performance. We set $T_{\text{S}} = 10000$, $\mu^{\text{(off)}}(1) = \mu^{\text{(on)}}(1) - v$, and $\mu^{\text{(on)}}(a) = \mu^{\text{(on)}}(a) + v$ for $a > 1$ and $v \in \{0.4,0.5,0.6\}$. For each $v$, we vary $T$ from 50 to 7000. The results, shown in Figure \ref{figs:regret-T}, reveal a clear dependence on offline data quality. When $v$ is small ($v < 0.5$), the offline data are informative, UCBS, MONUCB, and MIN-UCB outperform PURE-UCB. In contrast, when $v$ is large ($v \ge 0.5$), the offline data becomes misleading. In this case, UCBS and MONUCB, which blindly rely on the information from offline data, suffer significant performance deterioration. In constrast, MIN-UCB effectively detects uninformative offline data and defaults to pure online learning, achieving the similar performance as PURE-UCB. These findings validate our regret bounds and highlight the importance of both high-quality offline data and a robust, adaptive policy that can safely and effectively utilize such data, like our MIN-UCB.

\begin{figure}[h]
    \centering
	\begin{subfigure}{.31\textwidth}
		\centering
		\includegraphics[width=1\columnwidth]{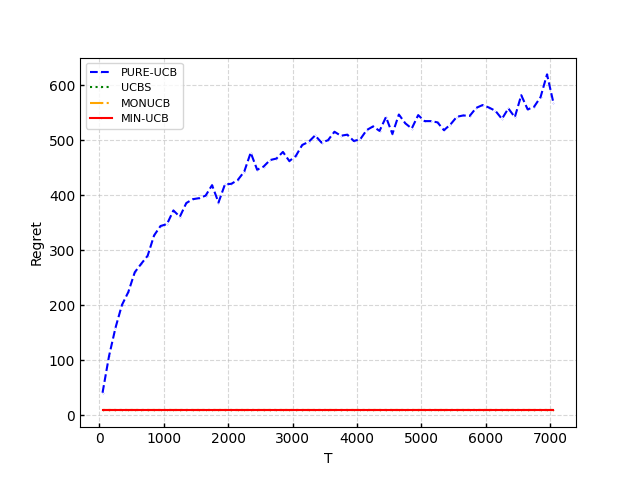}
		\caption{$V_{\max} = 0.4$}
		\label{fig:reg-T-v04}
	\end{subfigure}
    \begin{subfigure}{.31\textwidth}
		\centering
	    \includegraphics[width=1\textwidth]{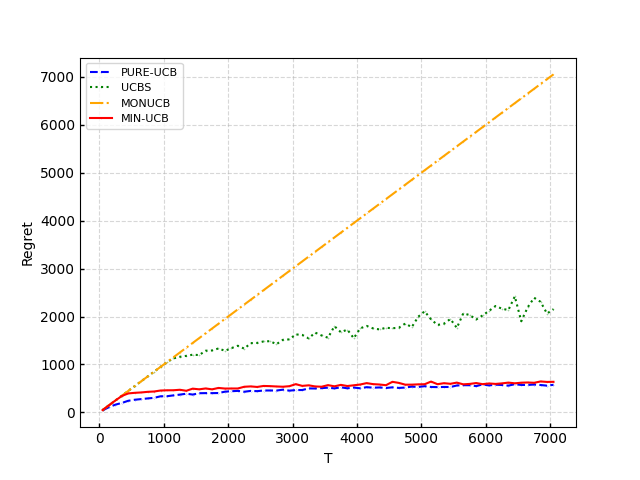}
	    \caption{$V_{\max} = 0.5$}
	    \label{fig:reg-T-v05}
	\end{subfigure}
    \begin{subfigure}{.31\textwidth}
		\centering
	    \includegraphics[width=1\textwidth]{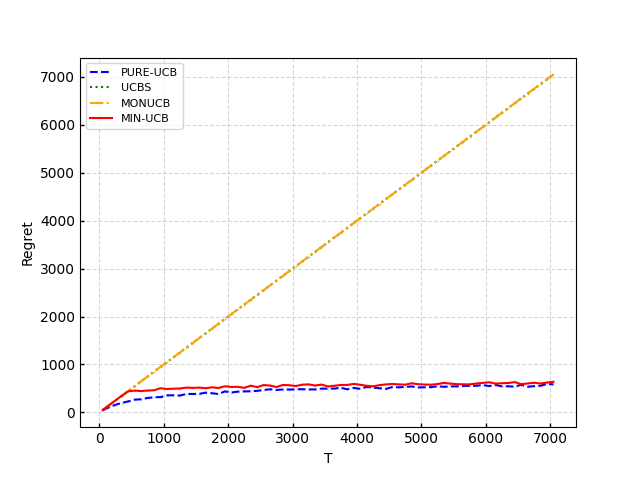}
	    \caption{$V_{\max} = 0.6$}
	    \label{fig:reg-T-v06}
	\end{subfigure}
    \caption{Effect of $T$ and $T_{\text{S}}$: informative offline data can significantly enhance online learning.}
    \label{figs:regret-T}
\end{figure}

\section{Conclusion and Remarks}
We propose novel algorithms to harness possibly biased offline data in the stochastic multi-armed bandit and combinatorial bandit settings. Our analysis demonstrates that our algorithms achieve the best possible regret bounds under the provision of valid bias bounds, which are shown to be necessary for out-performing existing algorithms. 
Through theoretical results and numerical validations, we demonstrate how various factors, such as the biases' magnitudes, the directions of the biases, valid bias bounds, 
affect the usefulness of offline data in online learning. We also demonstrate the applications of our framework, such as dynamic pricing and social influence maximization. Interesting future directions include extending our framework to other online learning models, such as online Markov decision processes and contextual bandits, and investigating whether the benefits and limitations of leveraging offline data remain in other models. It is also intriguing to study our setting with other metrics on the distribution drift.

\bibliography{off}

\newpage
\appendix
\onecolumn

\section{More Detailed Comparisons with Existing Works}
\subsection{Comparing with \cite{zhang2019warm}} \label{sec:app_disc_zhang2019}
\cite{zhang2019warm} consider a stochastic contextual $K$-armed bandit model with possibly biased offline data, which generalizes our setting of stochastic $K$-armed bandits. On one hand, their proposed algorithm ARROW-CB does not require any knowledge on the discrepancy between the offline and online data, while our proposed algorithm requires knowing an upper bound $V(a)$ to $|\mu^\text{(off)}(a) - \mu^\text{(on)}(a)|$ for each $a$. On the other hand, in the case of stochastic $K$-armed bandits, the regret bound of ARROW-CB (see Theorem 1 in \cite{zhang2019warm}) is at least the regret bound of the explore-then-commit algorithm (see Chapter 6 in \cite{lattimore2020bandit}) that ignores all offline data, which in particular does not offer improved regret bound compared to existing baselines that do not utilize offline data. 

In more details, the ARROW-CB algorithm inputs an exploration probability parameter $\epsilon\in (0, 1)$, and a set $\Lambda\subset [0, 1]$ of weighted combination parameters that hedges between ignoring all offline data and incorporating all offline data. In their contextual setting where they compare ARROW-CB  with a benchmark policy class $\Pi$, they establish the following regret upper bound on ARROW-CB on an instance $I$:
\begin{equation}\label{eq:zhang_regret}
    \epsilon T + 3\sqrt{T\log(T|\Pi|)} + \frac{32}{\sqrt{\epsilon}}\cdot \sqrt{KT\log(8T|\Lambda|)} + \min_{\lambda\in \Lambda}\{\text{disc}(\lambda, I)\}. 
\end{equation}
The function $\text{disc}(\lambda, I)$ (see Equation (4) in \cite{zhang2019warm}) is a discrepancy measure on the distance between the offline and online models, with the property that $\text{disc}(\lambda, I)\geq 0$ for all $\lambda, I$. 

On one hand, (\ref{eq:zhang_regret}) provides improvements to the conventional regret bound $O(\sqrt{KT\log(|\Pi|T)})$ in terms of the dependency on $|\Pi|$ when  $\text{disc}(\lambda, I)$ is sufficiently small. On the other hand, the (\ref{eq:zhang_regret}) does not provide any improvement to the conventional regret bounds in the stochastic $K$-armed bandits setting. Indeed, in  $K$-armed setting we have $|\Pi| = K$, where $\Pi$ consists of the policies of always pulling arm $a$, for $a\in {\cal A}$. Ignoring the non-negative term $\min_{\lambda\in \Lambda}\{\text{disc}(\lambda, I)\}$ and optimizing $\epsilon$ in the remaining terms show that the remaining terms sum to at least
$$
K^{1/3}T^{2/3} + 3\sqrt{T\log(TK)} + 32 K^{1/3}T^{2/3} \sqrt{\log T},
$$
which is no better than the explore-then-commit policy's regret of $O(K^{1/3}T^{2/3} \sqrt{\log T})$. 

Finally, one additional distinction is that \cite{zhang2019warm} requires $T_\text{S}(1) = \ldots = T_\text{S}(K)$, while we allow the offline sample sizes $T_\text{S}(1), \ldots, T_\text{S}(K)$ to be arbitrary non-negative integers.

\subsection{Comparing wtih Online Learning with Advice}
\label{sec:app_disc_onlineadvice}
The frameworks of \cite{rakhlin2013online,steinhardt2014adaptivity,WeiL18,WeiLA20} utilize a commonly defined quantity ${\cal E}$ to quantify the error in the predictions, which are provided in an online manner. We focus our discussion on comparison with \cite{WeiL18}. The framework of \cite{WeiL18} provides the following results for adversarial multi-armed bandits with side information. Consider a $K$-armed bandit model, where an arm $a\in {\cal A}$ is associated with loss $\ell_{t}(a)\in [-1, 1]$ at time $t$ for $t\in \{1, \ldots, T\}$, and the loss vectors $\{\ell_{1}, \ldots, \ell_T\}$ over the horizon of $T$ time steps are generated by an oblivious adversary. Before choosing an arm $a_t\in {\cal A}$ at time $t$, the agent is endowed with a prediction $m_t\in [-1 , 1]^K$, where $m_t(a)$ is a prediction on $\ell_t(a)$. They design a version of optimistic online mirror design algorithm that incoporates the predictions $m_1, \ldots, m_T$, and achieves an expected regret bound of
\begin{equation}\label{eq:weiluo}
\mathbb{E}\left[\sum^T_{t=1} \ell_t(a) - \min_{a\in K} \sum^T_{t=1}\ell_t(a)\right] = O(\sqrt{K{\cal E} \log(T)}),
\end{equation}
where the expectation is taken over the internalized randomization of the algorithm, where an arm is randomly chosen each round. The quantity ${\cal E} = \sum^T_{t=1} \|\ell_t - m_t\|^2_\infty = O(T)$ represents the prediction errors. 

We argue that their framework can only guarantees a $O(\sqrt{KT\log(T)})$ in terms of expected regret bound in our setting, since the prediction error term ${\cal E}$ depends on the \emph{realized} losses rather than their means. Indeed, In our stochastic setting (translated to a loss minimization instead of reward maximization model), the loss vectors are $\ell_1, \ldots, \ell_T$ are independent and identically distributed according to a common (but latent) distribution ${\cal D}$. Consider the case when the loss of arm $a$ is distributed according to the Bernoulli distribution with mean $\mu(a)\in [1/4, 3/4]$ for each $a\in {\cal A}$. Even in the case when we have the prediction $m_t(a) = \mu(a)$ for each $t$, namely the predictions reveal the true mean, we still have $(\ell_t(a) - \mu(a))^2\geq 1/16$ for each arm with certainty, leading to ${\cal E} \geq T / 16$. 

The framework of \cite{WeiL18} can only gauranttee a regret bound of $o(\sqrt{KT})$ when ${\cal E} = o(T)$. In our setting, it requires that $\|\ell_t - m_t\|_\infty = o(1)$, meaning that the prediction $m_t$ has to be correlated with, thus contains information of, the realized loss $\ell_t$, for $\Omega(T)$ many time rounds $t$. Different from \cite{WeiL18}, we achieve a regret bound of $o(\sqrt{KT})$ in the presense of accurate offline data, instead of receiving hints about the realized online rewards. A similar conclusion (with a worse regret bound than (\ref{eq:weiluo})) holds when we compare our results with \cite{WeiLA20}, which presents regret bounds on adversarial contextual bandits with predictions, since their regret bounds are also defined in terms of ${\cal E}$ mentioned above.

\subsection{Comparing with \cite{chen2022data}}
\label{sec:app_disc_chen2022rl}
\cite{chen2022data} explores a Finite-Horizon Reinforcement Learning Model incorporating possibly biased historical data, another generalization of our stochastic multi-armed bandit model. Similar to our work, their approach "Data-pooling Perturbed LSVI" (Algorithm 2) requires an upper bound on the discrepancy between offline and online data, defined as $\Delta$ in their Assumption 1. They propose an cross-validation method and illustrate through a case study that a substantial range of $\Delta$ values produce similar performance for their algorithm. However, their approach cannot facilitate the design of an algorithm that achieve our improved regret bounds compared to existing benchmarks that neglects the offline data.

In details, their ``Data-pooling Perturbed LSVI'' algorithm is based on classical Value Iteration (VI). They adapt the VI on online learning setting by randomly perturbing the estimated value-go-functions, and applying least squares fitting to estimate the Q-function. This approach is fundemantally different from the OFU principle we apply in the multi-armed bandit context.
Besides, they follow a different way to combine offline and online data. Specifically, they compute a weighted parameter $\lambda_t$ that balances between online and offline data, via $t$, $\Delta$, and $T_\text{S}^{\min} = \min_{h,s,a} T_{\text{S}}(h,s,a)$, where $T_{\text{S}}(h,s,a)$ refers to the number of offline samples for pair $(h,s,a)$, refering to epoch $h$, state $s$, and action $a$, respectively. This $\lambda_t$ is shown in their equation (10). They derive the following regret upper bound:

\begin{equation}\label{eq:chen_regret}
    \left(1 + 2 p_0 \right)^{-1} H S K \sum_{t=1}^{T / (SK)} \left(\varepsilon_R^{\text{DP}}(t,T_\text{S}^{\min}) + H (\bar{W} + 1)\varepsilon_P^{\text{DP}}(t,T_\text{S}^{\min}) + \varepsilon_V^{\text{DP}}(t,T_\text{S}^{\min}) \right) + 4 H T \delta
\end{equation}

where $p_0$, $\bar{W}$, $\delta$ are their input parameter, $H$ is the length of a horizon, and $S$ is the number of state. $\varepsilon_R^{\text{DP}}(\cdot,\cdot)$, $\varepsilon_P^{\text{DP}}(\cdot,\cdot)$, $\varepsilon_V^{\text{DP}}(\cdot,\cdot)$ are confidence radius defined in their Appendix EC.4.2. They state that when $T_\text{S}^{\min} \ge 1$ and $\Delta$ is small (See their Theorem EC.1), this regret bound is strictly smaller than the case that without combining historical data. 

However, (\ref{eq:chen_regret}) does not offer an explicit closed-form bound, and they do not provide any regret lower bound.
More importantly, their implicit bound solely depends on  $T_\text{S}^{\min} = \min_{h,s,a} T_{\text{S}}(h,s,a)$, which is equivalent to $\min_{a} T_{\text{S}}(a)$ in our model. In contrast, both our instance dependent bound (Theorem \ref{thm:upper_ins_dep}) and instance independent bound (Theorem \ref{thm:upper_indpt}) demonstrate how the difference in $T_S(a)$ among different arms  affects the regret. Thus, their approach cannot deliver the tight regret bounds on and insights into our model.

\section{Proofs for Instance-dependent Regret Upper Bounds}
\subsection{Proof for Lemma \ref{lem:conf-event}}\label{sec:app_pf_lemma_conf}
The proof uses the Chernoff inequality:
\begin{proposition}[Chernoff Inequality]
Let $G_1, \ldots, G_m$ be independent (though not necessarily identically distributed) 1-subGaussian random variables. For any $\delta\in (0, 1)$, it holds that
\begin{align}
   &\Pr\left[\frac{1}{m}\mathbb{E}\left[\sum^m_{i=1}G_i\right] \leq  \frac{1}{m}\sum^m_{i=1}G_i + \sqrt{\frac{2\log (2/\delta)}{m}}\right] \geq 1-\delta/2,\nonumber\\
    &\Pr\left[\frac{1}{m}\sum^m_{i=1}G_i \leq \frac{1}{m}\mathbb{E}\left[\sum^m_{i=1}G_i\right] + \sqrt{\frac{2\log (2/\delta)}{m}}\right] \geq 1-\delta/2.\nonumber
\end{align}
  
\end{proposition}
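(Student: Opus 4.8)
The plan is to derive both tail bounds from the classical Chernoff (exponential-moment) method, using the defining moment generating function bound of a 1-subGaussian random variable. Recall that $G$ is 1-subGaussian if $\mathbb{E}[\exp(\lambda(G - \mathbb{E}[G]))] \le \exp(\lambda^2/2)$ for every $\lambda \in \mathbb{R}$. I would first establish the upper-tail bound (the second displayed inequality) and then recover the lower-tail bound (the first inequality) by a symmetry argument, since the two statements are mirror images of each other.

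First I would center the summands, writing $Y_i = G_i - \mathbb{E}[G_i]$ and $S = \sum_{i=1}^m Y_i$, so that the target event $\{\tfrac{1}{m}\sum_i G_i \ge \tfrac{1}{m}\mathbb{E}[\sum_i G_i] + \epsilon\}$ is exactly $\{S \ge m\epsilon\}$. Because $G_1,\dots,G_m$ are independent, so are the $Y_i$, and tensorizing the moment generating function gives, for every $\lambda > 0$,
\[
\mathbb{E}[e^{\lambda S}] = \prod_{i=1}^m \mathbb{E}[e^{\lambda Y_i}] \le \prod_{i=1}^m e^{\lambda^2/2} = e^{m\lambda^2/2},
\]
where the last inequality applies the 1-subGaussian property to each $G_i$. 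This identifies $S$ as a $\sqrt{m}$-subGaussian variable and is the only place independence is used.

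Next I would apply Markov's inequality to the nonnegative variable $e^{\lambda S}$: for any $t > 0$,
\[
\Pr[S \ge t] = \Pr[e^{\lambda S} \ge e^{\lambda t}] \le e^{-\lambda t}\,\mathbb{E}[e^{\lambda S}] \le \exp\!\left(-\lambda t + \tfrac{m\lambda^2}{2}\right).
\]
Optimizing the exponent over $\lambda > 0$ (the minimizer is $\lambda = t/m$) yields $\Pr[S \ge t] \le \exp(-t^2/(2m))$. Setting $t = m\epsilon$ gives $\Pr[\tfrac{1}{m}\sum_i G_i - \tfrac{1}{m}\mathbb{E}[\sum_i G_i] \ge \epsilon] \le \exp(-m\epsilon^2/2)$, and choosing $\epsilon = \sqrt{2\log(2/\delta)/m}$ makes the right-hand side equal to $\delta/2$. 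Taking complements produces the second inequality. For the first (lower-tail) inequality I would run the identical argument with each $G_i$ replaced by $-G_i$; since sub-Gaussianity is preserved under negation with the same parameter, $-G_i$ is again 1-subGaussian, and the upper-tail bound for $-S$ is precisely the claimed lower-tail bound for $S$.

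There is no substantive obstacle here, as this is the textbook Chernoff bound for sub-Gaussian sums; the only points requiring care are (i) invoking the moment generating function bound for all real $\lambda$ so that the negation argument for the lower tail is legitimate, and (ii) tracking constants through the $\lambda$-optimization so that the chosen $\epsilon$ matches $\sqrt{2\log(2/\delta)/m}$ exactly. A minor bookkeeping point is the strict-versus-nonstrict inequality in the stated events, which does not affect the claimed probabilities and can be absorbed into the complement step.
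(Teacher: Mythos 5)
Your proof is correct. The paper itself states this proposition as a standard fact and gives no proof of it, so there is nothing to diverge from: your argument is the canonical Chernoff derivation, and the bookkeeping checks out exactly --- tensorizing the MGF gives $\mathbb{E}[e^{\lambda S}] \le e^{m\lambda^2/2}$, Markov plus the optimal $\lambda = t/m$ gives $\Pr[S \ge m\epsilon] \le e^{-m\epsilon^2/2}$, and solving $e^{-m\epsilon^2/2} = \delta/2$ yields precisely the radius $\sqrt{2\log(2/\delta)/m}$ in the statement. Your observation that the lower tail follows by applying the same bound to $-G_i$ (legitimate because the subGaussian MGF bound holds for all real $\lambda$, so negation preserves the parameter) is also the right way to get both inequalities at $\delta/2$ each.
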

\proof{Proof of Lemma \ref{lem:conf-event}}
To prove the Lemma, it suffices to prove that $\Pr[\xi^\text{S}_t(a)]\geq 1-\delta_t$. Indeed, the inequality implies $\Pr(\xi_t(a))\geq 1-\delta_t$ by setting $T_\text{S}(a) = 0$, and then a union bound on the failure probabilities of $\xi^\text{S}_t(a), \xi_t(a)$ for $a\in {\cal A}$ establishes the Lemma. Now, we have
\begin{align}
    &\Pr[\mu^{\text{(on)}}(a) \leq \text{UCB}^\text{S}_t(a)] \nonumber\\
    =&  \Pr\left[\mu^{\text{(on)}}(a) \leq 
 \frac{N_t(a) \cdot \hat{R}_t(a) + T_\text{S}(a)\cdot \hat{X}(a)}{N_t(a) + T_{\text{S}}(a)} + \sqrt{\frac{2\log(2 t / \delta_t)}{N^+_t(a)+ T_\text{S}(a)}} + \frac{ T_\text{S}(a)}{N^+_t(a)+ T_\text{S}(a)}\cdot V(a).\right]\nonumber\\
 =&  \Pr\left[\frac{N_t(a)\mu^{\text{(on)}}(a) + T_\text{S}(a) \mu^{\text{(off)}}(a) }{N_t(a) + T_\text{S}(a)} + \frac{T_\text{S}(a) (\mu^{\text{(on)}}(a)  - \mu^{\text{(off)}}(a)) }{N_t(a) + T_\text{S}(a)}\leq \right.\nonumber\\
 &\left. \frac{N_t(a) \cdot \hat{R}_t(a) + T_\text{S}(a)\cdot \hat{X}(a)}{N_t(a) + T_{\text{S}}(a)} + \sqrt{\frac{2\log(2 t / \delta_t)}{N_t(a)+ T_\text{S}(a)}} + \frac{ T_\text{S}(a)}{N_t(a)+ T_\text{S}(a)}\cdot V(a).\right]\nonumber\\
 \geq &\Pr\left[\frac{N_t(a)\mu^{\text{(on)}}(a) + T_\text{S}(a) \mu^{\text{(off)}}(a) }{N_t(a) + T_\text{S}(a)} \leq \frac{N_t(a) \cdot \hat{R}_t(a) + T_\text{S}(a)\cdot \hat{X}(a)}{N_t(a) + T_{\text{S}}(a)}  + \sqrt{\frac{2\log(2 t / \delta_t)}{N_t(a)+ T_\text{S}(a)}}\right] \label{eq:by_Va_1}\\
 \geq & \Pr_{Y_i \sim P^{\text{(on)}}(a)}\left[\frac{n \mu^{\text{(on)}}(a) + T_\text{S}(a) \mu^{\text{(off)}}(a) }{n + T_\text{S}(a)} \leq \frac{\sum_{i=1}^n Y_i + T_\text{S}(a)\cdot \hat{X}(a)}{n + T_{\text{S}}(a)}  + \sqrt{\frac{2\log(2 t / \delta_t)}{n+ T_\text{S}(a)}} \text{\ for $n=1,2, \ldots ,t$}\right] \label{eq:by_coupling_1}\\
 \geq & 1-\delta_t/2\label{eq:by_Hoeffding_1}.
\end{align}
Step (\ref{eq:by_Va_1}) is by the input assumption that $|\mu^\text{(on)}(a) - \mu^\text{(off)}(a)| \leq V(a)$. Step (\ref{eq:by_coupling_1}) is by a union bound over all possible values of $N_t(a)$. Step (\ref{eq:by_Hoeffding_1}) is by the Chernoff inequality.

Next,
\begin{align}
   & \Pr\left( \text{UCB}^\text{S}_t(a)\leq \mu^{(\text{on})}(a)+ \text{rad}^\text{S}_t(a)+ \sqrt{\frac{2\log(2 N^+_t(a) / \delta_t)}{N^+_t(a)+ T_\text{S}(a)}}  + \frac{T_{\text{S}}(a) \cdot (\mu^{\text{(off)}}(a) - \mu^{\text{(on)}}(a))}{N^+_t(a) + T_\text{S}(a)}\right)\nonumber\\
= & \Pr\left( \frac{N_t(a) \cdot \hat{R}_t(a) + T_\text{S}(a)\cdot \hat{X}(a)}{N_t(a) + T_{\text{S}}(a)} \leq \mu^{(\text{on})}(a)+ \sqrt{\frac{2\log(2 N^+_t(a) / \delta_t)}{N^+_t(a)+ T_\text{S}(a)}}  + \frac{T_{\text{S}}(a) \cdot (\mu^{\text{(off)}}(a) - \mu^{\text{(on)}}(a))}{N^+_t(a) + T_\text{S}(a)}\right)\nonumber\\
= & \Pr\left( \frac{N_t(a) \cdot \hat{R}_t(a) + T_\text{S}(a)\cdot \hat{X}(a)}{N_t(a) + T_{\text{S}}(a)} \leq 
\frac{N_t(a)\mu^{\text{(on)}}(a) + T_\text{S}(a) \mu^{\text{(off)}}(a) }{N_t(a) + T_\text{S}(a)} + \sqrt{\frac{2\log(2 N^+_t(a) / \delta_t)}{N^+_t(a)+ T_\text{S}(a)}} \right)\nonumber\\
 \geq & \Pr_{Y_i \sim P^{\text{(on)}}(a)}\left[ \frac{\sum_{i=1}^n Y_i + T_\text{S}(a)\cdot \hat{X}(a)}{n + T_{\text{S}}(a)} \leq  \frac{n \mu^{\text{(on)}}(a) + T_\text{S}(a) \mu^{\text{(off)}}(a) }{n + T_\text{S}(a)}  + \sqrt{\frac{2\log(2 t / \delta_t)}{n+ T_\text{S}(a)}} \text{\ for $n=1,2, \ldots ,t$}\right]  \label{eq:by_coupling_2}\\
 \geq & 1-\delta_t/2\label{eq:by_Hoeffding_2}.
\end{align}
The justifications of (\ref{eq:by_coupling_2}, \ref{eq:by_Hoeffding_2}) are the same as (\ref{eq:by_coupling_1}, \ref{eq:by_Hoeffding_1}) respectively. Altogether, the Lemma is proved.
    
\endproof

\subsection{Proof of instance-dependent regret upper bound for Multi-armed Bandit}\label{sec:app_upper_ins_dep}

\subsubsection{Proof of Theorem \ref{thm:upper_ins_dep}}

Applying Lemma \ref{lemma:crucial_bound}, we get
\begin{align}
    \mathbb{E}[\text{Reg}_T(\pi, T)] & = \sum^T_{t=1} \mathbb{E}\left[(\mu_*^{(\text{on})} - \mu^{(\text{on})}(A_t)) \cdot\left[\mathbf{1}(\xi_t)+\mathbf{1}(\xi^c_t)\right]\right]\nonumber\\
    & \leq \sum^T_{t=1} \mathbb{E}\left[(\mu_*^{(\text{on})} - \mu^{(\text{on})}(A_t)) \cdot\mathbf{1}(\xi_t)\right] +  \Delta_{\text{max}}\sum^T_{t=1} \mathbb{E}\left[\mathbf{1}(\xi^c_t)\right]\nonumber, 
\end{align}
and 
\begin{align}
    \sum^T_{t=1} \mathbb{E}\left[(\mu_*^{(\text{on})} - \mu^{(\text{on})}(A_t)) \cdot\mathbf{1}(\xi_t)\right] & = \sum_{a\in {\cal A}} \Delta(a) \mathbb{E}\left[\sum^T_{t=1} \mathbf{1}(A_t = a)\mathbf{1}(\xi_t)\right]\nonumber\\
    & \leq   \sum_{a\in {\cal A}} \max \left \{ 32 \cdot \frac{\log(4KT^4 )}{\Delta(a)} - T_\text{S}(a)  \cdot\Delta(a) \cdot \max\left\{ 1 - \frac{\omega(a)}{\Delta(a)}, 0\right\}^2, \Delta(a)\right\}\nonumber,
\end{align}
and $\sum^T_{t=1} \mathbb{E}\left[\mathbf{1}(\xi^c_t)\right]\leq \pi^2/6$. Altogether, we arrive at
 \begin{equation} \label{eq:reg_dep_upper_explicit}
       \frac{\pi^2}{6} \Delta_\text{max} + \sum_{a\in {\cal A}:\Delta(a)>0} \max \left \{ 32 \cdot \frac{\log(4KT^4 )}{\Delta(a)} - T_\text{S}(a) \cdot\Delta(a) \cdot \max\left\{ 1 - \frac{\omega(a)}{\Delta(a)}, 0\right\}^2, \Delta(a) \right \},
    \end{equation}
    where $\Delta_\text{max} = \max_{a}\Delta(a)$,  
which proves the Theorem.

\subsection{Proof for instance-dependent bound for Combinatorial Bandits with Linear Rewards}

\label{sec:app-pf-comb-semi-upper-dep}


\subsubsection{Key Lemmas}

\begin{lemma}
    Define event $\mathcal{F}_t$ as
    \begin{equation*}
        \mathcal{F}_t = \left \{\Delta(A_t) > 0,\Delta(A_t) \le \sum_{a \in A_t} \min \left \{2 \sqrt{\frac{2\log(4KT^3)}{N_t(a)}}, 2\sqrt{\frac{2\log(4KT^3)}{N_t(a)+ T_\text{S}(a)}} + \frac{ T_\text{S}(a) \omega(a)}{N_t(a)+ T_\text{S}(a) }\right \} \right \}.
    \end{equation*}
    Then
    \begin{equation*}
        \text{Reg}_T \le O\left(K \Delta_{\max} \right) + \sum_{t=1}^T \Delta(A_t) \boldsymbol{1}\{\mathcal{F}_t\} .
    \end{equation*}
    where $\Delta_{\max} = \max_{A \in \mathcal{B}} \Delta(A)$.
    \label{lem:pf-thm-inst-dept-ft}
\end{lemma}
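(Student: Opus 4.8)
The plan is to split the horizon at $t_0$ into the initialization phase and the main phase, and to show that, conditioned on the accurate-estimation event $\xi_t$ of Lemma~\ref{lem:conf-event}, every suboptimal pull in the main phase forces the event $\mathcal{F}_t$ to occur. For the \textbf{initialization phase} I would first argue that the \textbf{while} loop terminates within $K$ steps, i.e.\ $t_0\le K$. The estimate $\hat R_{t_0}(a)$ equals $+\infty$ for every base arm $a$ not yet observed; assuming (as is standard) that every base arm lies in some feasible action, the presence of an unobserved arm makes $\text{r}^*_{\hat{\boldsymbol R}_{t_0}}=+\infty$. Since $\alpha=1$, the oracle must then return an action $A_{t_0}$ with $\text{r}_{\hat{\boldsymbol R}_{t_0}}(A_{t_0})=\sum_{a\in A_{t_0}}\hat R_{t_0}(a)=+\infty$, which can only happen if $A_{t_0}$ contains an unobserved arm. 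Hence each initialization round reveals at least one fresh arm, so the loop ends after at most $K$ rounds, each incurring regret at most $\Delta_{\max}$; the initialization phase thus contributes at most $K\Delta_{\max}=O(K\Delta_{\max})$.

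For the \textbf{main phase} ($t>t_0$), write $\bar u_t(a)=\min\{\text{UCB}_t(a),\text{UCB}^{\text{S}}_t(a)\}$ for the reward vector $\bar{\boldsymbol u}_t$ fed to the oracle. On $\xi_t$, the lower-bound halves of $\xi_t(a)$ and $\xi^{\text{S}}_t(a)$ give $\bar u_t(a)\ge\mu^{\text{(on)}}(a)$ for all $a$, so by monotonicity $\text{r}_{\bar{\boldsymbol u}_t}(A^*)\ge\text{r}_{\boldsymbol\mu^{\text{(on)}}}(A^*)=\text{r}^*_{\boldsymbol\mu^{\text{(on)}}}$, where $A^*=A^*_{\boldsymbol\mu^{\text{(on)}}}$. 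Since $\alpha=\beta=1$, the oracle returns $A_t$ with $\text{r}_{\bar{\boldsymbol u}_t}(A_t)\ge \text{r}^*_{\bar{\boldsymbol u}_t}\ge \text{r}_{\bar{\boldsymbol u}_t}(A^*)$. Chaining these and using linearity of the reward yields the key decomposition $\Delta(A_t)=\text{r}^*_{\boldsymbol\mu^{\text{(on)}}}-\text{r}_{\boldsymbol\mu^{\text{(on)}}}(A_t)\le \text{r}_{\bar{\boldsymbol u}_t}(A_t)-\text{r}_{\boldsymbol\mu^{\text{(on)}}}(A_t)=\sum_{a\in A_t}(\bar u_t(a)-\mu^{\text{(on)}}(a))$.

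Next I would bound each summand via the upper-bound halves of $\xi_t(a)$ and $\xi^{\text{S}}_t(a)$. Because $\bar u_t(a)$ is the \emph{minimum} of the two UCBs, $\bar u_t(a)-\mu^{\text{(on)}}(a)$ is dominated by $\min\{2\,\text{rad}_t(a),\ 2\sqrt{2\log(2t/\delta_t)/(N_t(a)+T_{\text{S}}(a))}+T_{\text{S}}(a)\,\omega(a)/(N_t(a)+T_{\text{S}}(a))\}$, where the two bias contributions collapse into $\omega(a)=V(a)+\mu^{\text{(off)}}(a)-\mu^{\text{(on)}}(a)$ exactly as in the definition of $\xi^{\text{S}}_t(a)$. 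With $\delta_t=1/(2Kt^2)$ and $t\le T$ the logarithmic factor is at most $\log(4KT^3)$, so the right-hand side is precisely the quantity inside the sum defining $\mathcal{F}_t$. Consequently, on $\xi_t$ a suboptimal choice ($\Delta(A_t)>0$) implies the defining inequality of $\mathcal{F}_t$, i.e.\ $\Delta(A_t)\mathbf 1\{\xi_t\}\le \Delta(A_t)\mathbf 1\{\mathcal{F}_t\}$. Assembling, and taking expectations (recall the regret in (\ref{eq:combi-bandit-regret-def}) is defined through $\mathbb{E}[\text{r}_{\boldsymbol\mu^{\text{(on)}}}(A_t)]$), I would write $\text{Reg}_T\le K\Delta_{\max}+\mathbb{E}[\sum_{t>t_0}\Delta(A_t)(\mathbf 1\{\xi_t\}+\mathbf 1\{\xi_t^c\})]$: the $\xi_t$ part is at most $\mathbb{E}[\sum_t\Delta(A_t)\mathbf 1\{\mathcal{F}_t\}]$, while for the complement I bound $\Delta(A_t)\le\Delta_{\max}$ and invoke Lemma~\ref{lem:conf-event} to get $\sum_t\Pr(\xi_t^c)\le\sum_t 2K\delta_t=\sum_t t^{-2}\le \pi^2/6$, contributing $O(\Delta_{\max})$, which is absorbed into $O(K\Delta_{\max})$.

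I expect the delicate points to be twofold. First, justifying $t_0\le K$ through the $+\infty$/oracle argument, which hinges on every base arm appearing in some feasible action and on the linear (hence monotone) reward structure forcing the oracle to cover a new arm each round. Second, the bookkeeping that turns the displayed pathwise inequality into the stated bound: the $\xi_t^c$ contribution is only small \emph{in expectation}, so the constant $O(K\Delta_{\max})$ term must be read as absorbing the summed failure probability $\sum_t\Pr(\xi_t^c)$ controlled by Lemma~\ref{lem:conf-event}, rather than as a deterministic pathwise constant.
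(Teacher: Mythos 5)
Your proposal is correct and follows essentially the same route as the paper: decompose the regret into the initialization cost $K\Delta_{\max}$, the good-event sum controlled via the oracle selection rule and the confidence event $\xi_t$ (yielding exactly the defining inequality of $\mathcal{F}_t$ after bounding $\log(2t/\delta_t)$ by $\log(4KT^3)$), and the bad-event sum controlled in expectation by Lemma~\ref{lem:conf-event}. Your extra care on why the initialization loop ends within $K$ rounds and on reading the $\xi_t^c$ contribution in expectation are fair clarifications of points the paper leaves implicit, but they do not change the argument.
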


\proof{Proof for Lemma \ref{lem:pf-thm-inst-dept-ft}}
Notice that $\sum_{a \in A_*} \mu^{\text{(on)}}(a)  - \sum_{a \in A_t} \mu^{\text{(on)}}(a) = \Delta(A_t)$. Clearly, we have
\begin{equation*}
    \text{Regret}_T \le K \Delta_{\max} + \sum_{t=1}^T \Delta(A_t) \boldsymbol{1} \{\xi_t\} + \sum_{t=1}^T \Delta(A_t) \boldsymbol{1} \{\neg\xi_t\}
\end{equation*}
For the third part,
\begin{equation*}
    \sum_{t=1}^T \Delta(A_t) \boldsymbol{1} \{\neg\xi_t\} \le \Delta_{\max} \sum_{t=1}^T 2K \delta_t =  \Delta_{\max} \sum_{t=1}^T \frac{1}{t^2} \le \frac{\pi^2}{6} \Delta_{\max}. 
\end{equation*}

For the second part, by the selection rule for $A_t$ (\ref{eq:alg-comb-At-rule}), we have
\begin{equation*}
    \sum_{a \in A_t} \min\left\{
    \text{UCB}_t(a),  \text{UCB}^\text{S}_t(a) 
    \right\} \ge \sum_{a \in A_*} \min\left\{
    \text{UCB}_t(a),  \text{UCB}^\text{S}_t(a) 
    \right\}.
\end{equation*}
Since
\begin{equation*}
    \begin{aligned}
    & \sum_{a \in A_t} \min\left\{
    \text{UCB}_t(a),  \text{UCB}^\text{S}_t(a) 
    \right\} \\
    \le  & \sum_{a \in A_t} \mu^{\text{(on)}}(a) + \sum_{a \in A_t} \min \left \{2 \sqrt{\frac{2\log(2t / \delta_t)}{N_t(a)}}, 2\sqrt{\frac{2\log(2t / \delta_t)}{N_t(a)+ T_\text{S}(a)}} + \frac{ T_\text{S}(a) \omega(a)}{N_t(a)+ T_\text{S}(a) }\right \},
    \end{aligned}
\end{equation*}
and
\begin{equation*}
    \sum_{a \in A_*} \min\left\{
    \text{UCB}_t(a),  \text{UCB}^\text{S}_t(a) 
    \right\} \ge  \sum_{a \in A_*} \mu^{\text{(on)}}(a),
\end{equation*}
we have
\begin{equation*}
    \begin{aligned}
    \Delta(A_t) & \le  \sum_{a \in A_t} \min \left \{2 \sqrt{\frac{2\log(2t / \delta_t)}{N_t(a)}}, 2\sqrt{\frac{2\log(2t / \delta_t)}{N_t(a)+ T_\text{S}(a)}} + \frac{ T_\text{S}(a) \omega(a)}{N_t(a)+ T_\text{S}(a) }\right \} \\
    & \le  \sum_{a \in A_t} \min \left \{2 \sqrt{\frac{2\log(4KT^3)}{N_t(a)}}, 2\sqrt{\frac{2\log(4KT^3)}{N_t(a)+ T_\text{S}(a)}} + \frac{ T_\text{S}(a) \omega(a)}{N_t(a)+ T_\text{S}(a) }\right \}
    \end{aligned}
\end{equation*}

Therefore,
\begin{equation*}
    \sum_{t=1}^T \Delta(A_t) \boldsymbol{1} \{\xi_t\} = \sum_{t=1}^T \Delta(A_t) \boldsymbol{1} \{\xi_t,\Delta(A_t) > 0\} \le \sum_{t=1}^T \Delta(A_t) \boldsymbol{1} \{\mathcal{F}_t\}.
\end{equation*}

Altogether, the lemma is proved.

\endproof

\begin{lemma}
    Suppose sequence $\{\alpha_i\}_{i\ge 1}$ and $\{\beta_i\}_{i\ge 0}$ satisfies $4 \ge \alpha_1 >1 >  \alpha_2 \ge \cdots$, $1 = \beta_0 \ge \beta_1 \ge \beta_2 \ge \cdots$, $\lim_{i \rightarrow \infty} \alpha_i = \lim_{i \rightarrow \infty} \beta_i = 0$, and
    \begin{equation*}
        \sum_{i=1}^{+\infty} \frac{\beta_{i-1} - \beta_i}{\sqrt{\alpha_i}} \le 1.
    \end{equation*}
    For any $t \in [T]$, suppose $\mathcal{F}_t$ happens, then there exist $i \ge 1$ such that the event $G_{i,t}$ (defined in (\ref{eq:pf-thm-inst-dept-git-def}) with $W_{i,t,a}$ defined in (\ref{eq:pf-thm-inst-dept-wit-def})) happens.
    \label{lem:pf-thm-inst-dept-git-cover}
\end{lemma}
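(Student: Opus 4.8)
The plan is to reduce the statement to a deterministic \emph{peeling} argument applied to the single inequality supplied by $\mathcal{F}_t$, namely $\Delta(A_t)\le\sum_{a\in A_t} g_{t,a}$, where I abbreviate the per-arm confidence width as
$$g_{t,a}=\min\Bigl\{2\sqrt{\tfrac{2\log(4KT^3)}{N_t(a)}},\;2\sqrt{\tfrac{2\log(4KT^3)}{N_t(a)+T_\text{S}(a)}}+\tfrac{T_\text{S}(a)\,\omega(a)}{N_t(a)+T_\text{S}(a)}\Bigr\}.$$
I would define the level-$i$ width threshold $\phi_i=\Delta(A_t)/(m\sqrt{\alpha_i})$, let $W_{i,t,a}=\{g_{t,a}>\phi_i\}$ be the event that arm $a$ is still \emph{wide} at level $i$ (equivalently, that neither its online count $N_t(a)$ nor its offline-augmented statistic has driven the width below $\phi_i$), and set $G_{i,t}=\{\Delta(A_t)>0\}\cap\{\sum_{a\in A_t}\mathbf 1\{W_{i,t,a}\}\ge\beta_i m\}$. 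Because $\alpha_i\downarrow 0$, the thresholds $\phi_i$ increase to $\infty$, so the populations $u_i:=\sum_{a\in A_t}\mathbf 1\{W_{i,t,a}\}$ are non-increasing in $i$ with $u_0=|A_t|\le m=\beta_0 m$ and $u_i\to 0$; the whole design pairs each geometric scale $\alpha_i$ of the confidence width with a population budget $\beta_i m$.

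I would then prove the covering claim by contraposition. Assume $\mathcal{F}_t$ holds but every $G_{i,t}$ fails, i.e. $u_i<\beta_i m$ for all $i\ge 1$. Splitting $A_t$ according to the largest level at which each arm is wide, and using $g_{t,a}\le\phi_{i+1}$ for an arm leaving width-level $i$, a layer-cake rearrangement gives
$$\sum_{a\in A_t}g_{t,a}\le\sum_{i\ge1}(u_{i-1}-u_i)\,\phi_i=\frac{\Delta(A_t)}{m}\sum_{i\ge1}\frac{u_{i-1}-u_i}{\sqrt{\alpha_i}}.$$
Summation by parts rewrites the right-hand bracket as $u_0 b_1+\sum_{i\ge1}u_i(b_{i+1}-b_i)$ with $b_i=1/\sqrt{\alpha_i}$ increasing; bounding $u_0\le\beta_0 m$ and $u_i<\beta_i m$ termwise (legal since $b_{i+1}-b_i\ge0$) and reversing the summation by parts yields
$$\sum_{a\in A_t}g_{t,a}<\Delta(A_t)\sum_{i\ge1}\frac{\beta_{i-1}-\beta_i}{\sqrt{\alpha_i}}\le\Delta(A_t),$$
where the final step is exactly the hypothesis $\sum_{i\ge1}(\beta_{i-1}-\beta_i)/\sqrt{\alpha_i}\le1$. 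This contradicts $\Delta(A_t)\le\sum_{a\in A_t}g_{t,a}$, so some $G_{i,t}$ must hold.

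The main obstacle is the bookkeeping that makes the two summation-by-parts steps land precisely on the hypothesized quantity $\sum_{i}(\beta_{i-1}-\beta_i)/\sqrt{\alpha_i}$: one must track the boundary terms (the leading $u_0 b_1$ and the vanishing tail $u_N b_N\to0$ as all arms eventually become well sampled), verify the monotonicity $b_{i+1}\ge b_i$ that licenses the termwise substitution, and extract a \emph{strict} inequality from the strict population bounds (using $\alpha_1>\alpha_2$, hence $b_2>b_1$, together with $u_1<\beta_1 m$). A secondary subtlety is that $g_{t,a}$ is the minimum of two branches: the peeling must be run on the actual (minimum) width, so that an arm rendered accurate by a large offline sample $T_\text{S}(a)$ automatically drops out of the wide-arm count -- this is what later transfers the offline savings into the per-level counting. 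I would also verify the $\beta_0=1$ boundary case, which is precisely what guarantees that a single severely undersampled arm responsible for a large $g_{t,a}$ is captured already at level $1$.
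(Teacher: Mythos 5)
Your overall architecture matches the paper's: argue by contradiction that if every $G_{i,t}$ fails then the per-arm confidence widths can be peeled into geometric layers, and an Abel-summation step converts the population bounds $u_i<\beta_i m$ into the hypothesis $\sum_{i\ge 1}(\beta_{i-1}-\beta_i)/\sqrt{\alpha_i}\le 1$, forcing $\Delta(A_t)<\Delta(A_t)$. That part of your write-up is sound and is essentially the paper's equations (\ref{eq:pf-lem-delatat-gcevent-b})--(\ref{eq:pf-lem-delatat-gcevent-e}).

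However, there is a genuine gap: you silently replace the lemma's events by different ones. The statement refers to $G_{i,t}$ as defined in (\ref{eq:pf-thm-inst-dept-git-def}), which counts arms with $N_t(a)\le W_{i,t,a}$ for the \emph{explicit numeric threshold} $W_{i,t,a}=16\alpha_i\frac{m^2}{\Delta(A_t)^2}\log(4KT^3)-T_{\text{S}}(a)\max\{1-\frac{m\sqrt{\alpha_i}\,\omega(a)}{\Delta(A_t)},0\}^2$ from (\ref{eq:pf-thm-inst-dept-wit-def}). You instead define ``$W_{i,t,a}$'' as the event $\{g_{t,a}>\Delta(A_t)/(m\sqrt{\alpha_i})\}$ and declare it ``equivalently'' the same thing. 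That equivalence is exactly the nontrivial content you need and do not prove: the direction required is that $N_t(a)>W_{i,t,a}$ implies $g_{t,a}<\Delta(A_t)/(m\sqrt{\alpha_i})$, and establishing it requires a case analysis on whether $T_{\text{S}}(a)\max\{1-\frac{m\sqrt{\alpha_i}\,\omega(a)}{\Delta(A_t)},0\}^2$ exceeds $8\alpha_i\frac{m^2}{\Delta(A_t)^2}\log(4KT^3)$. In the first case the pure-online radius $2\sqrt{2\log(4KT^3)/N_t(a)}$ is already below the level-$i$ threshold because $N_t(a)>8\alpha_i\frac{m^2}{\Delta(A_t)^2}\log(4KT^3)$; in the second case one must bound the offline-augmented branch $2\sqrt{2\log(4KT^3)/(N_t(a)+T_\text{S}(a))}+\frac{T_\text{S}(a)\omega(a)}{N_t(a)+T_\text{S}(a)}$ using $T_\text{S}(a)$ large and splitting off the bias term $\omega(a)$. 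This is the only place where the offline savings term inside $W_{i,t,a}$ is actually used, and without it your argument proves a statement about width-level sets rather than about the $G_{i,t}$ the lemma (and the downstream regret accounting in Theorem \ref{thm:comb-instance-dependent-upper-bound}, which charges regret to $N_t(a)\le W_{i,t,a}$) actually requires. A secondary, minor discrepancy is that the paper's sets $S_{i,t}$ live in $\tilde{A}_t=A_t\setminus A_*$ while you peel over all of $A_t$; this needs a sentence to reconcile but is not a substantive obstacle.
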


\proof{Proof for Lemma \ref{lem:pf-thm-inst-dept-git-cover}}
    The proof for this lemma follows the same line of \cite{kveton2015tight}, with novel analysis on our special $W_{i,t,a}$ value. Fix $t$ such that $\Delta(A_t) > 0$. Denote
    \begin{equation*}
        S_{i,t} = \{a \in \tilde{A}_t:N_t(a) \le W_{i,t,a}\}.
    \end{equation*}
    Then
    \begin{equation*}
        G_{i,t} = \left(\cap_{j=1}^{i-1} \{|S_j| < \beta_j m\} \right) \cap \{|S_i| \ge \beta_i m\}.
    \end{equation*}
    Denote $G_t^C = \left(\cup_{i=1}^{\infty} G_{i,t} \right)^C$, then
    \begin{equation*}
        G_t^C = \left(\bigcup_{i=1}^{\infty} G_{i,t} \right)^C = \bigcap_{i=1}^{\infty} G_{i,t}^C = \bigcap_{i=1}^{\infty}\{|S_i| < \beta_i m\}.
    \end{equation*}
    Denote $\bar{S}_{i,t} = \tilde{A}_t \backslash S_{i,t}$. Then clearly $\bar{S}_{i-1,t} \subseteq \bar{S}_{i,t}$ for all i and $\tilde{A}_t = \cup_{i=1}^{\infty} (\bar{S}_{i,t} \backslash \bar{S}_{i-1,t})$. Suppose $G_t^C$ happens, then
    \begin{subequations}
        \begin{align}
            \Delta(A_t) & \le \sum_{a \in A_t} \min \left \{2 \sqrt{\frac{2\log(4KT^3)}{N_t(a)}}, 2\sqrt{\frac{2\log(4KT^3)}{N_t(a)+ T_\text{S}(a)}} + \frac{ T_\text{S}(a) \omega(a)}{N_t(a)+ T_\text{S}(a) }\right \} \label{eq:pf-lem-delatat-gcevent-a} \\
            & = \sum_{i=1}^{+\infty} \sum_{a \in \bar{S}_{i,t} \backslash \bar{S}_{i-1,t}} \min \left \{2 \sqrt{\frac{2\log(4KT^3)}{N_t(a)}}, 2\sqrt{\frac{2\log(4KT^3)}{N_t(a)+ T_\text{S}(a)}} + \frac{ T_\text{S}(a) \omega(a)}{N_t(a)+ T_\text{S}(a) }\right \} \label{eq:pf-lem-delatat-gcevent-b}\\
            & < \sum_{i=1}^{+\infty} \sum_{a \in \bar{S}_{i,t} \backslash \bar{S}_{i-1,t}} \frac{\Delta(A_t)}{m \sqrt{\alpha_i}} \label{eq:pf-lem-delatat-gcevent-c} \\
            & = \frac{\Delta(A_t)}{m} \cdot \sum_{i=1}^{+\infty} \frac{|\bar{S}_{i,t} \backslash \bar{S}_{i-1,t}|}{\sqrt{\alpha_i}} \nonumber \\
            & < \frac{\Delta(A_t)}{m} \cdot \sum_{i=1}^{+\infty} \frac{(\beta_{i-1} - \beta_i)m}{\sqrt{\alpha_i}} \label{eq:pf-lem-delatat-gcevent-d} \\
            & \le \Delta(A_t). \label{eq:pf-lem-delatat-gcevent-e}
        \end{align}
        \label{eq:pf-lem-delatat-gcevent}
    \end{subequations}
    This is a contradiction. Therefore, Lemma \ref{lem:pf-thm-inst-dept-git-cover} holds after (\ref{eq:pf-lem-delatat-gcevent-a}) - (\ref{eq:pf-lem-delatat-gcevent-e}) are proved. Specifically, 
    \begin{itemize}
        \item (\ref{eq:pf-lem-delatat-gcevent-a}) comes from the fact that $\mathcal{F}_t$ holds.
        \item (\ref{eq:pf-lem-delatat-gcevent-b}) comes from the fact that $\bar{S}_{i-1,t} \subseteq \bar{S}_{i,t}$ for all $i$. More importantly, $\lim_{i \rightarrow +\infty} W_{i,t,a} \le 0$. Thus, there exists a $j$ such that $\bar{S}_{i,t} = \tilde{A}_t$ for all $i > j$, hence $\tilde{A}_t = \cup_{i=1}^{\infty} (\bar{S}_{i,t} \backslash \bar{S}_{i-1,t})$.
        \item (\ref{eq:pf-lem-delatat-gcevent-c}) comes from the following analysis: For any $a \in \bar{S}_{i,t} \backslash \bar{S}_{i-1,t}$, $N_t(a) > W_{i,t,a}$. For simplicity, we denote
    \begin{equation*}
        J_{i,t,a} =  T_{\text{S}}(a) \cdot \max \left\{1 - \frac{m\sqrt{\alpha_i} \cdot \omega(a)}{\Delta(A_t)},0 \right \}^2 
    \end{equation*}
    and $W_{i,t,a} =16 \alpha_i \frac{m^2}{\Delta(A_t)^2} \log(4KT^3) - J_{i,t,a}$. If $J_{i,t,a} \le 8 \alpha_i \frac{m^2}{\Delta(A_t)^2} \log(4KT^3)$,
    \begin{equation*}
        N_t(a) > W_{i,t,a} \ge 8 \alpha_i \frac{m^2}{\Delta(A_t)^2} \log(4KT^3),
    \end{equation*}
    hence
    \begin{equation*}
        2 \sqrt{\frac{2\log(4KT^3)}{N_t(a)}} < \frac{\Delta(A_t)}{m \sqrt{\alpha_i}}.
    \end{equation*}
    Otherwise, we have
    \begin{equation*}
         T_{\text{S}}(a) \cdot \max \left\{1 - \frac{m\sqrt{\alpha_i} \cdot \omega(a)}{\Delta(A_t)},0 \right \}^2 =  T_{\text{S}}(a) \cdot  \left (1 - \frac{m \sqrt{\alpha_i} \cdot \omega(a)}{\Delta(A_t)}\right )^2 > 8 \alpha_i \frac{m^2}{\Delta(A_t)^2} \log(4KT^3). 
    \end{equation*}
    Then
    \begin{equation*}
        \begin{aligned}
            2\sqrt{\frac{2\log(4KT^3)}{N_t(a)+ T_\text{S}(a)}} + \frac{ T_\text{S}(a) \omega(a)}{N_t(a)+ T_\text{S}(a) } & \le 2\sqrt{\frac{2\log(4KT^3)}{ T_\text{S}(a)}} +  \omega(a) \\
            & < \frac{\Delta(A_t)}{m \sqrt{\alpha_i}} \left(1 - \frac{m \sqrt{\alpha_i}\cdot \omega(a)} {\Delta(A_t)} \right) + \frac{\Delta(A_t)}{m\sqrt{\alpha_i}} \cdot \frac{m \sqrt{\alpha_i} \cdot \omega(a)}{\Delta(A_t)} \\
            & =  \frac{\Delta(A_t)}{m \sqrt{\alpha_i}}.
        \end{aligned} 
    \end{equation*}
    Overall, (\ref{eq:pf-lem-delatat-gcevent-c}) is proved.
        \item (\ref{eq:pf-lem-delatat-gcevent-d}) comes from the following (this part is the same as the proof for Lemma 4 in \cite{kveton2015tight}, we provide it for completeness):
    \begin{equation*}
        \begin{aligned}
        \sum_{i=1}^{+\infty} \frac{|\bar{S}_{i,t} \backslash \bar{S}_{i-1,t}|}{\sqrt{\alpha_i}} & = \sum_{i=1}^{+\infty} \frac{|S_{i-1,t} \backslash S_{i,t}|}{\sqrt{\alpha_i}} \\
        & = \sum_{i=1}^{+\infty} \frac{|S_{i-1,t}| - |S_{i,t}|}{\sqrt{\alpha_i}} \\
        & = \frac{|S_0|}{\sqrt{\alpha_1}} + \sum_{i=1}^{+\infty} |S_{i,t}| \left(\frac{1}{\sqrt{\alpha_i}} - \frac{1}{\sqrt{\alpha_{i-1}}} \right) \\
        & < \frac{\beta_0 m}{\sqrt{\alpha_1}} + \sum_{i=1}^{+\infty} \beta_i m \left(\frac{1}{\sqrt{\alpha_i}} - \frac{1}{\sqrt{\alpha_{i-1}}} \right) \\
        & = \sum_{i=1}^{+\infty} \frac{(\beta_{i-1} - \beta_{i})m}{\sqrt{\alpha_i}}.
        \end{aligned}
    \end{equation*}
        \item (\ref{eq:pf-lem-delatat-gcevent-e}) comes from the fact that
    \begin{equation*}
        \sum_{i=1}^{+\infty} \frac{\beta_{i-1} - \beta_i}{\sqrt{\alpha_i}} \le 1.
    \end{equation*}
    \end{itemize}
    Altogether, the lemma is proved.
\endproof

\subsubsection{Proof for Theorem \ref{thm:comb-instance-dependent-upper-bound}}

The proof follow the similar line of \cite{kveton2015tight}. We define event sequence $\{G_{i,t}\}_{i \ge 1, t \in [T]}$ as
\begin{equation}
        \begin{aligned}
        G_{i,t} = &\left \{\text{less than $\beta_j m$ items in $\tilde{A}_t$ satisfy $N_t(a) \le W_{j,t,a}$ for $1 \le j < i$},\right. \\ 
        & \left.\quad \text{at least $\beta_i m$ items in $\tilde{A}_t$ satisfy $N_t(a) \le W_{i,t,a}$.} \right\}
        \end{aligned}
        \label{eq:pf-thm-inst-dept-git-def}
\end{equation}
where $\tilde{A}_t = A_t \backslash A_*$, and
\begin{equation}
    W_{i,t,a} = 
            16 \alpha_i \frac{m^2}{\Delta(A_t)^2} \log(4KT^3) -   T_{\text{S}}(a) \cdot \max \left\{1 - \frac{m\sqrt{\alpha_i} \cdot \omega(a)}{\Delta(A_t)},0 \right \}^2 
        \label{eq:pf-thm-inst-dept-wit-def}
\end{equation}
Here, the sequence $\{\alpha_i\}_{i \ge 1}$ and $\{\beta_i\}_{i \ge 0}$ is defined as follows:
\begin{equation}
        \alpha_i = 12.3 \cdot 0.25^i,\qquad \beta_i = 0.3^i.
        \label{eq:pf-thm-inst-dept-alphabeta-def}
\end{equation}
Clearly, $\{\alpha_i\}_{i \ge 1}$ and $\{\beta_i\}_{i \ge 0}$ satisfy $4 \ge \alpha_1 >1 >  \alpha_2 \ge \cdots$, $1 = \beta_0 \ge \beta_1 \ge \beta_2 \ge \cdots$, $\lim_{i \rightarrow \infty} \alpha_i = \lim_{i \rightarrow \infty} \beta_i = 0$, and
\begin{equation*}
    \sum_{i=1}^{+\infty} \frac{\beta_{i-1} - \beta_i}{\sqrt{\alpha_i}} = \sqrt{\frac{1}{12.3}} \cdot \frac{1 - 0.3}{\sqrt{0.25} - 0.3} \approx 0.998 < 1.
\end{equation*}
By Lemma \ref{lem:pf-thm-inst-dept-ft}, \ref{lem:pf-thm-inst-dept-git-cover}, we have
\begin{equation*}
    \text{Reg}_T \le O\left(K \Delta_{\max} \right) + \sum_{t=1}^T \Delta(A_t) \boldsymbol{1} \{\mathcal{F}_t\} = O\left(K \Delta_{\max} \right)+ \sum_{i=1}^{+\infty} \sum_{t=1}^T \Delta(A_t) \boldsymbol{1}\{G_{i,t},\Delta(A_t) > 0\}.
\end{equation*}
Now, we further define the event with respect to each specific $a$. Specifically, we define
\begin{equation*}
    G_{i,t,a} = G_{i,t} \cap \left \{a \in \tilde{A}_t,N_t(a) \le W_{i,t,a} \right \}.
\end{equation*}
Then we have
\begin{equation*}
    \boldsymbol{1}\{G_{i,t},\Delta(A_t) > 0\} \le \frac{1}{\beta_i m} \sum_{a \in \mathcal{A} \backslash A_*} \boldsymbol{1}\{G_{i,t,a},\Delta(A_t) > 0\}.
\end{equation*}
This is because by the definition of $G_{i,t}$, there are at least $\beta_i m$ arms satisfies $N_t(a) \le W_{i,t,a}$. Denote $n(a)$ be the number of sub-optimal solution $A \in \mathcal{B}$ such that $a \in A$. Denote this $n(a)$ solution as $A_{a,1},\cdots,A_{a,n(a)}$, and we assume the gaps satisfy $\Delta(A_{a,1}) \ge \cdots \ge \Delta(A_{a,n(a)}) = \Delta_{\min}(a)$. Thus, we have 
\begin{subequations}
    \begin{align}
        & \sum_{i=1}^{+\infty} \sum_{t=1}^T \Delta(A_t) \boldsymbol{1}\{G_{i,t},\Delta(A_t) > 0\} \nonumber\\
        \le & \sum_{a \in \mathcal{A} \backslash A_*} \sum_{i=1}^{+\infty} \sum_{t=1}^T  \boldsymbol{1}\{G_{i,t,a},\Delta(A_t) > 0\} \cdot \frac{\Delta(A_t)}{\beta_i m} \nonumber \\
        \le & \sum_{a \in \mathcal{A} \backslash A_*} \sum_{i=1}^{+\infty} \sum_{t=1}^T \sum_{n=1}^{n(a)}  \boldsymbol{1}\{G_{i,t,a},\Delta(A_t) = \Delta(A_{a,n})\} \frac{\Delta(A_{a,n})}{\beta_i m} \nonumber \\
        \le & \sum_{a \in \mathcal{A} \backslash A_*} \sum_{i=1}^{+\infty} \sum_{t=1}^T \sum_{n=1}^{n(a)}  \boldsymbol{1} \left\{a \in \tilde{A}_t,N_t(a) \le W_{i,t,a},\Delta(A_t) = \Delta(A_{a,n}) \right\} \frac{\Delta(A_{a,n})}{\beta_i m} \label{eq:pf-thm-inst-dept-upper-main-new-a} \\
        \le & \sum_{a \in \mathcal{A} \backslash A_*} \sum_{i=1}^{+\infty} \frac{1}{\beta_i m} \cdot \left(\Delta(A_{a,1}) W_{i,a,1} + \sum_{n=2}^{n(a)} \Delta(A_{a,n}) (W_{i,a,n} - W_{i,a,n-1}) \right) \label{eq:pf-thm-inst-dept-upper-main-new-b} \\
        \le & 16m \log(4KT^3) \sum_{a \in \mathcal{A} \backslash A_*} \sum_{i=1}^{+\infty} \frac{\alpha_i}{\beta_i} \cdot \frac{2}{\Delta_{\min}(a)} \nonumber \\
        - &  \sum_{a \in \mathcal{A} \backslash A_*} \sum_{i=1}^{+\infty} \frac{T_{\text{S}}(a)}{\beta_i m} \cdot \Delta_{\min}(a) \cdot  \max \left\{1 - \frac{m \sqrt{\alpha_i} \cdot \omega(a)}{\Delta_{\min}(a)},0 \right \}^2 \label{eq:pf-thm-inst-dept-upper-main-new-c}\\
        \le & 16m \log(4KT^3) \sum_{a \in \mathcal{A} \backslash A_*} \sum_{i=1}^{+\infty} \frac{\alpha_i}{\beta_i} \cdot \frac{2}{\Delta_{\min}(a)} \nonumber \\
        - &  \sum_{a \in \mathcal{A} \backslash A_*} \sum_{i\ge 1: m \sqrt{\alpha_i} \le 1} m T_{\text{S}}(a) \cdot \frac{\alpha_i}{\beta_i} \cdot \Delta_{\min}(a) \cdot  \max \left\{1 - \frac{\omega(a)}{\Delta_{\min}(a)},0 \right \}^2 \label{eq:pf-thm-inst-dept-upper-main-new-d}\\
        \le & O \left( \sum_{a \in \mathcal{A} \backslash A_*} \frac{m \log(T)}{\Delta_{\min}(a)} - m T_{\text{S}}(a) \cdot \Delta_{\min}(a) \cdot \max \left\{1 - \frac{ \omega(a)}{\Delta_{\min}(a)},0 \right \}^2 \right). \label{eq:pf-thm-inst-dept-upper-main-new-e}
    \end{align}
    \label{eq:pf-thm-inst-dept-upper-main-new}
\end{subequations}
Here,
\begin{itemize}
    \item (\ref{eq:pf-thm-inst-dept-upper-main-new-a}) comes from the definition of $G_{i,t,a}$.
    \item In (\ref{eq:pf-thm-inst-dept-upper-main-new-b}),
    \begin{equation*}
        W_{i,a,n} = 16 \alpha_i \frac{m^2}{\Delta(A_{a,n})^2} \log(4KT^3) -   T_{\text{S}}(a) \cdot \max \left\{1 - \frac{m\sqrt{\alpha_i} \cdot \omega(a)}{\Delta(A_{a,n})},0 \right \}^2.
    \end{equation*}
    (\ref{eq:pf-thm-inst-dept-upper-main-new-b}) comes from the step nature of $\{\Delta(A_{a,n})\}_{n=1}^{n(a)}$. Specifically, as long as $N_t(a) \le W_{i,a,n}$ for some $n$, such $\Delta(A_{a,n})$ may occur. Since $\Delta(A_{a,n})$ decreases as $n$ increases, $W_{i,t,a}$ increases as $n$ increases. Hence, 
    \begin{equation*}
        \begin{aligned}
            \begin{cases}
                \{A_{a,\ell}\}_{\ell=1}^{n(a)} \text{ cannot be recognized as sub-optimal} & N_t(a) \le W_{i,a,1}, \\
                \{A_{a,\ell}\}_{\ell=n+1}^{n(a)} \text{ cannot be recognized as sub-optimal} & W_{i,a,n} < N_t(a) \le W_{i,a,n+1}, n\ge 1.
            \end{cases}
        \end{aligned}
    \end{equation*}
    Thus,
    \begin{equation*}
        \begin{aligned}
         &\sum_{t=1}^T \sum_{n=1}^{n(a)}  \boldsymbol{1} \left\{a \in \tilde{A}_t,N_t(a) \le W_{i,t,a},\Delta(A_t) = \Delta(A_{a,n}) \right\} \frac{\Delta(A_{a,n})}{\beta_i m}  \\
         \le & \Delta(A_{a,1}) W_{i,a,1} + \Delta(A_{a,2}) (W_{i,a,2} - W_{i,a,1}) + \cdots 
         \\
         = & \Delta(A_{a,1}) W_{i,a,1} + \sum_{n=2}^{n(a)} \Delta(A_{a,n}) (W_{i,a,n} - W_{i,a,n-1})
         \end{aligned}
    \end{equation*}
    \item (\ref{eq:pf-thm-inst-dept-upper-main-new-c}) comes from following: by the definition of $W_{i,a,n}$, we have
    \begin{equation*}
        \begin{aligned}
            & \frac{1}{\beta_{i}m} \cdot \left(\Delta(A_{a,1}) W_{i,a,1} + \sum_{n=2}^{n(a)} \Delta(A_{a,n}) (W_{i,a,n} - W_{i,a,n-1}) \right) \\
            = & 16m \log(4KT^3)\cdot \frac{\alpha_i}{\beta_i} \cdot \left[\Delta(A_{a,1}) \cdot \frac{1}{\Delta(A_{a,1})^2} + \sum_{n=2}^{n(a)}\Delta(A_{a,n}) \cdot \left(\frac{1}{\Delta(A_{a,n})^2}-\frac{1}{\Delta(A_{a,n-1})^2} \right)\right] \\
             - &  \frac{T_{\text{S}}(a)}{\beta_i m} \cdot \left[\Delta(A_{a,1}) \cdot  \max \left\{1 - \frac{m \sqrt{\alpha_i} \cdot \omega(a)}{\Delta(A_{a,1})},0 \right \}^2  \right . \\
             + & \left. \sum_{n=2}^{n(a)}\Delta(A_{a,n}) \cdot \left(\max \left\{1 - \frac{m \sqrt{\alpha_i} \cdot \omega(a)}{\Delta(A_{a,n})},0 \right \}^2-\max \left\{1 - \frac{m \sqrt{\alpha_i} \cdot \omega(a)}{\Delta(A_{a,n-1})},0 \right \}^2 \right)\right] \\
             \le & 16m \log(4KT^3)\cdot \frac{\alpha_i}{\beta_i} \cdot \frac{2}{\Delta_{\min}(a)} - \frac{T_{\text{S}}(a)}{\beta_i m} \cdot \Delta_{\min}(a) \cdot  \max \left\{1 - \frac{m \sqrt{\alpha_i} \cdot \omega(a)}{\Delta_{\min}(a)},0 \right \}^2.
        \end{aligned}
    \end{equation*}
    The last inequality comes from two facts: First by \cite{kveton2014matroid} Lemma 3, we have
    \begin{equation*}
        \Delta(A_{a,1}) \cdot \frac{1}{\Delta(A_{a,1})^2} + \sum_{n=2}^{n(a)}\Delta(A_{a,n}) \cdot \left(\frac{1}{\Delta(A_{a,n})^2}-\frac{1}{\Delta(A_{a,n-1})^2} \right) \le \frac{2}{\Delta(A_{a,n(a)})} = \frac{2}{\Delta_{\min}(a)}.
    \end{equation*}
    Second by the fact that $\Delta(A_{a,n}) \le \Delta(A_{a,n-1})$, we have
\begin{equation*}
    \begin{aligned}
    & \Delta(A_{a,1}) \cdot  \max \left\{1 - \frac{m \sqrt{\alpha_i} \cdot \omega(a)}{\Delta(A_{a,1})},0 \right \}^2 \\
    + & \sum_{n=2}^{n(a)}\Delta(A_{a,n}) \cdot \left(\max \left\{1 - \frac{m \sqrt{\alpha_i} \cdot \omega(a)}{\Delta(A_{a,n})},0 \right \}^2-\max \left\{1 - \frac{m \sqrt{\alpha_i} \cdot \omega(a)}{\Delta(A_{a,n-1})},0 \right \}^2 \right) \\
    \ge & \Delta(A_{a,1}) \cdot  \max \left\{1 - \frac{m \sqrt{\alpha_i} \cdot \omega(a)}{\Delta(A_{a,1})},0 \right \}^2 \\
    + & \sum_{n=2}^{n(a)} \left (\Delta(A_{a,n}) \cdot \max \left\{1 - \frac{m \sqrt{\alpha_i} \cdot \omega(a)}{\Delta(A_{a,n})},0 \right \}^2-\Delta(A_{a,n-1}) \cdot\max \left\{1 - \frac{m \sqrt{\alpha_i} \cdot \omega(a)}{\Delta(A_{a,n-1})},0 \right \}^2 \right) \\
    = & \Delta(A_{a,n(a)}) \cdot  \max \left\{1 - \frac{m \sqrt{\alpha_i} \cdot \omega(a)}{\Delta(A_{a,n(a)})},0 \right \}^2 = \Delta_{\min}(a) \cdot  \max \left\{1 - \frac{m \sqrt{\alpha_i} \cdot \omega(a)}{\Delta_{\min}(a)},0 \right \}^2.
    \end{aligned}
\end{equation*}
    \item (\ref{eq:pf-thm-inst-dept-upper-main-new-d}) comes from the fact that only when $m \sqrt{\alpha_i} \le 1$,
\begin{equation*}
    \max \left\{1 - \frac{m \sqrt{\alpha_i} \cdot \omega(a)}{\Delta_{\min}(a)},0 \right \}^2 \ge m^2 \alpha_i \cdot \max \left\{1 - \frac{ \omega(a)}{\Delta_{\min}(a)},0 \right \}^2.
\end{equation*}
    \item (\ref{eq:pf-thm-inst-dept-upper-main-new-e}) comes from the fact that
    \begin{equation*}
        \sum_{i=1}^{+\infty} \frac{\alpha_i}{\beta_i} < 74 < + \infty.
    \end{equation*}
\end{itemize}


Altogether, the theorem is proved.

\section{Proofs for Instance-independent Regret Upper Bounds}

\subsection{Key Lemmas Related to the LP program (\ref{eq:indpt-LP})}

\label{sec:app-inst-indep-key-lemmas}

We first provide two lemmas: Lemma \ref{lem:app-inst-indep-ip-trans}, Lemma \ref{lem:app-inst-indep-lp-solu}, and their proofs. These two lemmas bridge the connection between $\tau_*$, the optimum of the LP program (\ref{eq:indpt-LP}), and the heterogeneity of $\{T_\text{S}(a)\}_{a\in {\cal A}}$, playing a key role in deriving our instance-independent regret upper bounds.



\begin{lemma}
    For any $\rho \in (0,1]$, suppose that $(N^*(a))_{a\in {\cal A}}\in \mathbb{N}_{\geq 0}^{\cal A}$ is an optimal solution to the following integer optimization problem (IP):
    \begin{equation}
        \begin{aligned}
            \max_{(N(a))_{a\in {\cal A}}} \quad & \sum_{a\in {\cal A}} \sum^{N(a)}_{n(a)=1}\left( n(a)+ T_\text{S}(a)\right)^{-\rho/2} \\
            \text{s.t.} \quad & \sum_{a\in {\cal A}}N(a) =T,\\
            & N(a)\in \mathbb{N}_{\geq 0},\quad \forall a \in \mathcal{A}.
        \end{aligned}
        \label{eq:app-inst-indep-ip}
    \end{equation}
Then it must be the case that $N^*(a) \leq \max\{\lceil \tau_*\rceil - T_\text{S}(a), 0\}$ for all $a\in {\cal A}$, where we recall that $\tau_*$ is the optimum of the LP program (\ref{eq:indpt-LP}).
\label{lem:app-inst-indep-ip-trans}
\end{lemma}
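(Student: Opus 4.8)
The plan is to exploit the separable, discretely concave structure of the IP (\ref{eq:app-inst-indep-ip}) and tie its greedy optimum to the water-filling level $\tau_*$ of the LP (\ref{eq:indpt-LP}). A preliminary observation I would record is the water-filling identity $\sum_{a\in\mathcal{A}}\max\{\tau_*-T_{\text{S}}(a),0\}=T$. Writing $g(\tau)=\sum_{a\in\mathcal{A}}\max\{\tau-T_{\text{S}}(a),0\}$, every LP-feasible $(\tau,n)$ satisfies $n(a)\ge\max\{\tau-T_{\text{S}}(a),0\}$ and hence $g(\tau)\le\sum_a n(a)=T$; since $g$ is continuous, nondecreasing, strictly increasing beyond $\min_a T_{\text{S}}(a)$, and unbounded, the largest feasible $\tau$ is exactly the unique root of $g(\tau)=T$, while $n(a)=\max\{\tau-T_{\text{S}}(a),0\}$ shows this root is attained. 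This step is routine.

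Next I would extract first-order optimality conditions for the IP by an exchange argument. Setting $f_a(k)=\sum_{n=1}^{k}(n+T_{\text{S}}(a))^{-\rho/2}$, the objective is $\sum_a f_a(N(a))$, whose increments $f_a(k)-f_a(k-1)=(k+T_{\text{S}}(a))^{-\rho/2}$ are strictly decreasing in $k$. For an optimal $N^*$, any arm $a$ with $N^*(a)\ge 1$, and any arm $b$, transferring one unit from $a$ to $b$ remains feasible and cannot improve the objective, giving $(N^*(b)+1+T_{\text{S}}(b))^{-\rho/2}\le(N^*(a)+T_{\text{S}}(a))^{-\rho/2}$; as $x\mapsto x^{-\rho/2}$ is strictly decreasing this is equivalent to $N^*(a)+T_{\text{S}}(a)\le N^*(b)+T_{\text{S}}(b)+1$. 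In words, the loads $N^*(a)+T_{\text{S}}(a)$ of played arms exceed the load of any arm by at most one.

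I would then prove $N^*(a)+T_{\text{S}}(a)\le\lceil\tau_*\rceil$ for every arm with $N^*(a)\ge1$, which yields the claim (the case $N^*(a)=0$ being trivial since the right-hand side is nonnegative). Suppose instead $N^*(a)+T_{\text{S}}(a)\ge\lceil\tau_*\rceil+1$ for some such $a$. The exchange inequality forces $N^*(b)+T_{\text{S}}(b)\ge\lceil\tau_*\rceil\ge\tau_*$ for all $b$, so $N^*(b)\ge\max\{\tau_*-T_{\text{S}}(b),0\}$. Summing and invoking the water-filling identity gives $T=\sum_b N^*(b)\ge\sum_b\max\{\tau_*-T_{\text{S}}(b),0\}=T$, forcing termwise equality $N^*(b)=\max\{\tau_*-T_{\text{S}}(b),0\}$ for every $b$. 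Reading this at $a$ contradicts $N^*(a)+T_{\text{S}}(a)>\tau_*$: either $\tau_*-T_{\text{S}}(a)\ge0$, whence $N^*(a)=\tau_*-T_{\text{S}}(a)$ and the load equals $\tau_*$, or $\tau_*-T_{\text{S}}(a)<0$, whence $N^*(a)=0$, both impossible. Hence $N^*(a)\le\lceil\tau_*\rceil-T_{\text{S}}(a)$, i.e. $N^*(a)\le\max\{\lceil\tau_*\rceil-T_{\text{S}}(a),0\}$.

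The main obstacle I anticipate is the interplay between the integrality of $N^*$ and the possibly fractional level $\tau_*$. The exchange argument only equalizes the loads up to an additive one, so I must confirm that rounding the target to $\lceil\tau_*\rceil$ is exactly tight and that the termwise-equality deduction stays valid even when $\tau_*-T_{\text{S}}(b)$ is not an integer (in which case equality of the integer $N^*(b)$ with a non-integer already signals the contradiction). Verifying the final contradiction cleanly across the sub-cases $\tau_*\ge T_{\text{S}}(a)$ and $\tau_*<T_{\text{S}}(a)$ is the delicate point; the remaining steps are standard.
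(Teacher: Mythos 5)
Your proof is correct and takes essentially the same approach as the paper's: both rest on the water-filling identity $\sum_{a\in\mathcal{A}}\max\{\tau_*-T_{\text{S}}(a),0\}=T$ for the LP optimum and on an exchange argument exploiting the strictly decreasing increments $(k+T_{\text{S}}(a))^{-\rho/2}$ of the separable objective. The only difference is organizational: the paper exhibits a single strictly improving swap from an overloaded arm to a deficient arm $a'$ (whose existence it deduces by the same counting argument), whereas you first record the global balance condition $N^*(a)+T_{\text{S}}(a)\le N^*(b)+T_{\text{S}}(b)+1$ and then derive the contradiction arithmetically via termwise equality; both routes handle the integrality-versus-fractional-$\tau_*$ issue correctly.
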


\begin{lemma}
    For an optimal solution $(\tau_*, \{n_*(a)\}_{a\in {\cal A}})$ to  the LP program (\ref{eq:indpt-LP}), it holds that
\begin{equation}
\label{eq:app-inst-indep-lp-n-solu}
n_*(a) = \max\{ \tau_* - T_\text{S}(a), 0\}\text{ for every arm $a$.}
\end{equation}
\label{lem:app-inst-indep-lp-solu}
\end{lemma}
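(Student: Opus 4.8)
The plan is to reduce the LP to a one–dimensional problem in $\tau$ and then show that the total resource constraint is binding at the optimum. First I would observe that, for a fixed value of $\tau$, the pair of constraints $\tau \le T_\text{S}(a) + n(a)$ and $n(a)\ge 0$ is equivalent to the single lower bound $n(a) \ge \max\{\tau - T_\text{S}(a), 0\}$. Hence a value $\tau$ is attained by some feasible pair $(\tau, n)$ if and only if the minimal total mass $g(\tau) := \sum_{a\in {\cal A}} \max\{\tau - T_\text{S}(a), 0\}$ does not exceed $T$: when $g(\tau)\le T$ one may set $n(a) = \max\{\tau - T_\text{S}(a), 0\}$ and distribute the leftover slack $T - g(\tau)\ge 0$ arbitrarily among the coordinates without violating any constraint, while if $g(\tau) > T$ then $\sum_a n(a)\ge g(\tau) > T$ makes the equality constraint infeasible. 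Consequently $\tau_* = \max\{\tau \ge 0 : g(\tau)\le T\}$.

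Next I would establish that the budget binds, i.e.\ $g(\tau_*) = T$. The function $g$ is continuous, nondecreasing, piecewise linear, and diverges to $+\infty$ as $\tau\to\infty$. Suppose toward a contradiction that $g(\tau_*) < T$. By continuity of $g$ there exists $\tau' > \tau_*$ with $g(\tau')\le T$; setting $n'(a) = \max\{\tau' - T_\text{S}(a), 0\}$ and adding the slack $T - g(\tau')$ to any one coordinate yields a feasible solution with objective value $\tau' > \tau_*$, contradicting the optimality of $\tau_*$. Therefore $g(\tau_*) = T$.

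Finally I would pin down $n_*$ by a squeeze argument. Any optimal solution satisfies $n_*(a)\ge \max\{\tau_* - T_\text{S}(a), 0\}$ coordinatewise, while $\sum_{a\in {\cal A}} n_*(a) = T = g(\tau_*) = \sum_{a\in {\cal A}} \max\{\tau_* - T_\text{S}(a), 0\}$. A coordinatewise inequality whose two sides have equal sums must hold with equality in every coordinate, which gives $n_*(a) = \max\{\tau_* - T_\text{S}(a), 0\}$ for every arm $a$, as claimed in (\ref{eq:app-inst-indep-lp-n-solu}).

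The only genuinely delicate step is the second one, verifying that the budget constraint is tight at the optimum. This rests on the continuity of $g$ near $\tau_*$, so that perturbing $\tau$ upward keeps $g(\tau')\le T$ while strictly improving the objective; the perturbation argument is reinforced by the fact that $g$ is in fact strictly increasing at $\tau_*$, since $\tau_* \ge T/K + \min_{a} T_\text{S}(a) > \min_{a} T_\text{S}(a)$ as already observed following Theorem \ref{thm:upper_indpt}. The first and third steps are, respectively, a routine reformulation of the constraints and a standard pinching argument.
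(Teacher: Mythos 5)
Your proof is correct. It takes a somewhat different route from the paper's: the paper argues by a single explicit perturbation of the optimal solution itself --- if some coordinate $a'$ has slack $\epsilon = n_*(a') - \max\{\tau_* - T_\text{S}(a'),0\} > 0$, it adds $\epsilon/K$ to every other coordinate, subtracts $\tfrac{K-1}{K}\epsilon$ from $n_*(a')$, and increases $\tau$ by $\epsilon/K$, directly contradicting optimality. You instead reduce the LP to the one-dimensional feasibility question $g(\tau) := \sum_{a}\max\{\tau - T_\text{S}(a),0\} \le T$, show the budget binds at the optimum ($g(\tau_*)=T$) by a continuity perturbation, and then pinch the coordinatewise inequality $n_*(a)\ge \max\{\tau_*-T_\text{S}(a),0\}$ against the equality of sums. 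Both arguments are sound and rest on the same underlying mechanism (slack in any coordinate lets $\tau$ increase); the paper's version is a one-shot computation, while yours is slightly longer but yields the clean characterizations $\tau_* = \max\{\tau\ge 0: g(\tau)\le T\}$ and $g(\tau_*)=T$ as byproducts, which make the structure of the optimum more transparent. Your closing remark that $g$ is strictly increasing at $\tau_*$ is true but not needed --- continuity of $g$ alone suffices for the contradiction in your second step.
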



\subsubsection{Proof for Lemma \ref{lem:app-inst-indep-ip-trans}}

We establish the lemma by a contradiction argument. Suppose there exists an arm $a$ such that $N^*(a) \geq \max\{\lceil \tau_*\rceil - T_\text{S}(a), 0\}+1$. Firstly, we assert that there must exist another arm $a' \in {\cal A}\setminus \{a\}$ such that $N^*(a') \leq \max\{\lceil \tau_*\rceil - T_\text{S}(a'), 0\}-1$. If not, we then have $N^*(a) \geq \max\{\lceil \tau_*\rceil - T_\text{S}(a), 0\}+1$ and $N^*(a'') \geq \max\{\lceil \tau_*\rceil - T_\text{S}(a''), 0\}$ for all $a''\in {\cal A}$, but then
\begin{align*}
    \sum_{k\in {\cal A}}N^*(k) > \sum_{k\in {\cal A}}\max\{\lceil \tau_*\rceil - T_\text{S}(k), 0\} \geq \sum_{k\in {\cal A}}\max\{ \tau_* - T_\text{S}(k), 0\} \overset{(\dagger)}{=} \sum_{k\in {\cal A}}n_*(k) = T,
\end{align*}
violating the constraint $\sum_{k\in {\cal A}}N^*(k) = T$. Note that the equality $(\dagger)$ is by Lemma \ref{lem:app-inst-indep-lp-solu}. Thus, the claimed arm $a'$  exists. To this end, the condition $N^*(a') \leq \max\{\lceil \tau_*\rceil - T_\text{S}(a'), 0\}-1$ implies that $\lceil \tau_*\rceil > T_\text{S}(a')$, since otherwise $N^*(a') \leq -1$, which violates the non-negativity constraint on $N^*(a)$. Altogether we have established the existence of two distinct arms $a, a'\in {\cal A}$ such that
\begin{align}
    N^*(a) + T_\text{S}(a) &\geq \max\{\lceil \tau_*\rceil, T_\text{S}(a)\} + 1\text{, in particular } N^*(a)\geq 1,   \nonumber\\
    N^*(a') + T_\text{S}(a') &\leq\max\{\lceil \tau_*\rceil, T_\text{S}(a')\}-1 = \lceil \tau_*\rceil - 1. \nonumber
\end{align}
To establish the contradiction argument, consider another solution $(\tilde{N}(k))_{k\in {\cal A}}$ to the displayed optimization problem in the claim, where 
$$ \tilde{N}(k) =
  \begin{cases}
     N^*(k) - 1      & \quad \text{if } k = a\\
      N^*(k) + 1      & \quad \text{if } k = a'\\
    N^*(k)  & \quad \text{if } k\in {\cal A}\setminus \{a, a'\}.
  \end{cases}
$$
By the property that $ N^*(a)\geq 1$, $\tilde{N}(a) \geq 0$, and $(\tilde{N}(k))_{k\in {\cal A}}$ is a feasible solution. But then we have 
\begin{equation*}
    \begin{aligned}
        &\sum_{a\in {\cal A}} \sum^{\tilde{N}(a)}_{n(a)=1}(n(a)+ T_\text{S}(a))^{-\rho/2}  - \sum_{a\in {\cal A}} \sum^{N^*(a)}_{n(a)=1}(n(a)+ T_\text{S}(a))^{-\rho/2}  \\
= & (N^*(a') + T_\text{S}(a') + 1)^{-\rho/2} - (N^*(a) + T_\text{S}(a))^{-\rho/2}\\
\geq &\lceil \tau_*\rceil^{-\rho/2} - (\max\{\lceil \tau_*\rceil, T_\text{S}(a)\} + 1)^{-\rho/2} > 0, 
    \end{aligned}
\end{equation*}
which contradicts the assumed optimality of $(N^*(a))_{a\in {\cal A}}$. Thus Lemma \ref{lem:app-inst-indep-ip-trans} is shown.

\subsubsection{Proof for Lemma \ref{lem:app-inst-indep-lp-solu}}

Now, since $(\tau_*, \{n_*(a)\}_{a\in {\cal A}})$ is feasible to the (LP), the constraints in the (LP) give us that $n_*(a) \geq \max\{ \tau_* - T_\text{S}(a), 0\}$. We establish our asserted equality by a contradition argument. Suppose there is an arm $a'$ such that $n_*(a') - \max\{ \tau_* - T_\text{S}(a'), 0\} = \epsilon > 0$. Then it can be verified that the solution $(\tau', \{n'(a)\}_{a\in {\cal A}})$ defined as $\tau' = \tau_* + \epsilon / K$, $n'(a) = n_*(a) + \epsilon / K$ for all $a\in {\cal A}\setminus \{a'\}$ and $n'(a') = n_*(a) - \frac{K - 1}{K}\epsilon$ is also a feasible solution to (LP). But then $\tau' > \tau_*$, which contradicts with the optimality of $(\tau_*, \{n_*(a)\}_{a\in {\cal A}})$. Thus Lemma \ref{lem:app-inst-indep-lp-solu} is shown.

\subsection{Proof for Instance-independent bound for Multi-armed Bandit}

\subsubsection{Proof for Theorem \ref{thm:upper_indpt}} 

Conditioned on $\cap^T_{t=1} \xi_t$, we have
\begin{subequations}
\begin{align}
    \text{Reg}_T & = \sum^T_{t=1}\left[\mu^{(\text{on})}_* - \mu^{(\text{on})}(A_t)\right]\nonumber\\
    &\leq K\Delta_\text{max} + \sum^T_{t=K+1}\left[\min\{\text{UCB}_t(a_*),  \text{UCB}^\text{S}_t(a_*)\}  - \mu^{(\text{on})}(A_t)\right]\nonumber\\
    &\leq K\Delta_\text{max}+ \sum^T_{t=K+1}\left[\min\{\text{UCB}_t(A_t),  \text{UCB}^\text{S}_t(A_t)\}  - \mu^{(\text{on})}(A_t)\right]\nonumber\\
    &\leq K\Delta_\text{max} + \sum^T_{t=K+1}\left[\min\{\mu^{(\text{on})}(A_t) + 2\text{rad}_t(A_t),\mu^{(\text{on})}(A_t) + 2 \text{rad}^\text{S}_t(A(t))\}  - \mu^{(\text{on})}(A_t)\right]\label{eq:ind_by_Lemma2}\\
    &\leq K\Delta_\text{max} + 2\min\left\{ \sum^T_{t=K+1}\text{rad}_t(A_t), \sum^T_{t=K+1}\text{rad}^\text{S}_t(A_t)\right\}.
\end{align}
\end{subequations}
(\ref{eq:ind_by_Lemma2}) is by applying Lemma \ref{lem:conf-event}, and $\mu^{(\text{off})}(a) - \mu^{\text{(on)}}(a) \leq V(a)$ in the upper bound of $\text{UCB}^\text{S}_t(a)$. The conventional analysis shows that $\sum^T_{t=K+1}\text{rad}_t(A_t) = O(\sqrt{KT\log(T/\delta)})$. We focus on:
\begin{align}
\sum^T_{t=K+1}\text{rad}^\text{S}_t(A_t) &= \sum^T_{t=K+1} \left[\sqrt{\frac{2\log(2 t / \delta_t)}{N_t(a)+ T_\text{S}(a)}} + \frac{ T_\text{S}(a)}{N_t(a)+ T_\text{S}(a)}\cdot V(a) \right] \nonumber \\
& \leq \max_{a\in {\cal A}}V(a) \cdot T + \sum_{a\in {\cal A}} \sum^{N_T(a)}_{n=1}\sqrt{\frac{8\log(T / \delta)}{n+ T_\text{S}(a)}}.\nonumber
\end{align}
We claim that
\begin{equation}\label{eq:ind_crucial}
\sum_{a\in {\cal A}} \sum^{N_T(a)}_{n(a)=1}\sqrt{\frac{1}{n(a)+ T_\text{S}(a)}} \leq \sum_{a\in {\cal A}} \sum^{\max\{\lceil \tau_*\rceil - T_\text{S}(a), 0\}}_{n(a)=1}\sqrt{\frac{1}{n(a)+ T_\text{S}(a)}}.
\end{equation}
Consider the following integer programming problem, and denote the optimal solution as $(N^*_T(a))_{a\in {\cal A}}$ where $(N^*_T(a))_{a\in {\cal A}}\in \mathbb{N}_{\geq 0}^{\cal A}$:
\begin{equation*}
    \begin{aligned}
        \text{(IP)}:\max_{(N_T(a))_{a\in {\cal A}}} \quad & \sum_{a\in {\cal A}} \sum^{N_T(a)}_{n(a)=1}\sqrt{\frac{1}{n(a)+ T_\text{S}(a)}} \\
        \text{s.t.} \quad & \sum_{a\in {\cal A}}N_T(a) =T\\
        & N_T(a)\in \mathbb{N}_{\geq 0}, \ \forall a \in \mathcal{A}.
    \end{aligned}
\end{equation*}
Then Lemma \ref{lem:app-inst-indep-ip-trans} (with $\rho=1$) is immediately useful for proving (\ref{eq:ind_crucial}), since
$$
\sum_{a\in {\cal A}} \sum^{N_T(a)}_{n(a)=1}\sqrt{\frac{1}{n(a)+ T_\text{S}(a)}} \leq \sum_{a\in {\cal A}} \sum^{N^*_T(a)}_{n(a)=1}\sqrt{\frac{1}{n(a)+ T_\text{S}(a)}}  \le \sum_{a\in {\cal A}} \sum^{\max\{\lceil \tau_*\rceil - T_\text{S}(a), 0\}}_{n(a)=1}\sqrt{\frac{1}{n(a)+ T_\text{S}(a)}},
$$
where the first inequality is by the optimality of $(N^*_T(a))_{a\in {\cal A}}$ to (IP), and the fact that any realized $(N_T(a))_{a\in {\cal A}}$ must be feasible to (IP). The second inequality is a direct consequence of Lemma \ref{lem:app-inst-indep-ip-trans}. 
For each $a\in {\cal A}$ we then have 
\begin{equation}\label{eq:ind_crucial_1}
 \sum^{\max\{\lceil \tau_*\rceil - T_\text{S}(a), 0\}}_{n=1}\sqrt{\frac{1}{n+ T_\text{S}(a)}} \leq \frac{\max\{\lceil \tau_*\rceil - T_\text{S}(a), 0\}}{\lceil \tau_*\rceil} \sum^{\lceil \tau_*\rceil}_{t=1}\sqrt{\frac{1}{t}}\leq \max\{\lceil \tau_*\rceil - T_\text{S}(a), 0\}\cdot \frac{4}{\sqrt{\tau_*}}.
\end{equation}
Applying the feasibility of $(\tau_*, n_*)$ to (LP), we have
$$
\max\{\lceil \tau_*\rceil - T_\text{S}(a), 0\} \leq \max\{ \tau_* - T_\text{S}(a), 0\} + 1 \leq n_*(a) + 1.
$$
Combining the above with (\ref{eq:ind_crucial_1}) gives
$$
\sum_{a\in {\cal A}} \sum^{\max\{\lceil \tau_*\rceil - T_\text{S}(a), 0\}}_{n=1}\sqrt{\frac{1}{n+ T_\text{S}(a)}} \leq \sum_{a\in {\cal A}}(n_*(a) + 1)\cdot \frac{4}{\sqrt{\tau_*}}\leq T\cdot \frac{8}{\sqrt{\tau_*}},
$$
hence $\sum_{a\in {\cal A}} \sum^{N_T(a)}_{n=1}\sqrt{\frac{8\log(T / \delta)}{n+ T_\text{S}(a)}} = O(T\sqrt{\log(T / \delta) / \tau_*})$. Altogether, the Theorem is proved.

\subsection{Proof for Instance-independent bound for Combinatorial Bandit} 

\label{sec:app-pf-comb-upper-indep}

\subsubsection{Proof for Theorem \ref{thm:comb-gen-instance-independent-upper-bound}}

Denote $\text{minUCB}_t = \left(\min\left\{
         \text{UCB}_t(a),  \text{UCB}^\text{S}_t(a) 
         \right\} \right)_{a \in \mathcal{A}}$. Conditioned on $\cap^T_{t=1} \xi_t$, we have
\begin{subequations}
    \begin{align}
         \text{Reg}_T\ & = \sum_{t=1}^T \left ( \alpha \cdot \beta \cdot \text{r}^*_{\boldsymbol{\mu}^{\text{(on)}}} - \mathbb{E}\left[\text{r}_{\boldsymbol{\mu}^{\text{(on)}}}(A_t) \right ]\right) \nonumber \\
         & = \sum_{t=1}^T \left ( \alpha \cdot \beta \cdot \text{r}_{\boldsymbol{\mu}^{\text{(on)}}}(A^*_{\boldsymbol{\mu}^{\text{(on)}}}) - \mathbb{E}\left[\text{r}_{\boldsymbol{\mu}^{\text{(on)}}}(A_t) \right ]\right)  \label{eq:pf-comp-gen-regret-upper-decom-a} \\
         & \le  \sum_{t=1}^T \left ( \alpha \cdot \beta \cdot \text{r}_{\text{minUCB}_t}(A^*_{\boldsymbol{\mu}^{\text{(on)}}}) - \mathbb{E}\left[\text{r}_{\boldsymbol{\mu}^{\text{(on)}}}(A_t) \right ]\right) \label{eq:pf-comp-gen-regret-upper-decom-b} \\
         & \le \sum_{t=1}^T \left ( \alpha \cdot \beta \cdot \text{r}_{\text{minUCB}_t}^* - \mathbb{E}\left[\text{r}_{\boldsymbol{\mu}^{\text{(on)}}}(A_t) \right ]\right) \nonumber \\
         & \le K \Delta_{\max} + \sum_{t=1}^T \left ( \mathbb{E}\left[\text{r}_{\text{minUCB}_t}(A_t) \right ] - \mathbb{E}\left[\text{r}_{\boldsymbol{\mu}^{\text{(on)}}}(A_t) \right ]\right) \label{eq:pf-comp-gen-regret-upper-decom-c} \\
         & \le K \Delta_{\max} + \gamma \cdot \sum_{t=1}^T \left ( \max_{a \in A_t} \left \{\min\left\{
         \text{UCB}_t(a),  \text{UCB}^\text{S}_t(a) 
         \right\} - \boldsymbol{\mu}^{\text{(on)}}(a)  \right \}\right)^{\rho} \label{eq:pf-comp-gen-regret-upper-decom-d} \\
         & = K \Delta_{\max} + \gamma \cdot \sum_{t=1}^T  \min\left\{
         \text{UCB}_t(a_t) - \boldsymbol{\mu}^{\text{(on)}}(a_t),  \text{UCB}^\text{S}_t(a) - \boldsymbol{\mu}^{\text{(on)}}(a_t)
         \right\}^{\rho} \label{eq:pf-comp-gen-regret-upper-decom-e} \\
         & \le K \Delta_{\max} + 2^{\rho} \gamma \cdot \sum_{t=1}^T  \min\left\{
         \text{rad}_t(a_t),  \text{rad}^\text{S}_t(a_t) 
         \right\}^{\rho} \label{eq:pf-comp-gen-regret-upper-decom-f} \\
         & = K \Delta_{\max} + 2^{\rho} \gamma \cdot \sum_{t=1}^T  \min\left\{
         \text{rad}_t(a_t)^{\rho},  \text{rad}^\text{S}_t(a_t)^{\rho} 
         \right\} \nonumber \\
         & \le K \Delta_{\max} + 2^{\rho} \gamma \cdot  \min \left \{ \sum_{t=1}^T  
         \text{rad}_t(a_t)^{\rho} , \sum_{t=1}^T \text{rad}^\text{S}_t(a_t)^{\rho} \right \}. \nonumber
    \end{align}
    \label{eq:pf-comp-gen-regret-upper-decom}
\end{subequations}
Here, (\ref{eq:pf-comp-gen-regret-upper-decom-a}) comes from the definition of $\text{r}_{\boldsymbol{u}}^*$ and $A^*_{\boldsymbol{u}}$. (\ref{eq:pf-comp-gen-regret-upper-decom-b}) comes from the monotonicity of $\text{r}_{\boldsymbol{u}}(\cdot)$ and Lemma \ref{lem:conf-event}. (\ref{eq:pf-comp-gen-regret-upper-decom-c}) comes from the selection rule (\ref{eq:alg-comb-At-rule}) for $A_t$. By the definition of $(\alpha,\beta)$ optimization oracle, we have $\mathbb{E}[\text{r}_{\text{minUCB}_t}(A_t)] \ge \alpha \cdot \beta \cdot \text{r}_{\text{minUCB}_t}^*$. (\ref{eq:pf-comp-gen-regret-upper-decom-d}) comes from the bounded smoothness of $\text{r}_{\boldsymbol{u}}(\cdot)$ and the expression of $f(\cdot)$. In (\ref{eq:pf-comp-gen-regret-upper-decom-e}) $a_t = \mathop{\arg\max}_{a \in A_t} \left \{\min\left\{\text{UCB}_t(a),  \text{UCB}^\text{S}_t(a) \right\} - \boldsymbol{\mu}^{\text{(on)}}(a)  \right \}$. (\ref{eq:pf-comp-gen-regret-upper-decom-f}) is by Lemma \ref{lem:conf-event}. Denote \begin{equation*}
    N_t^{\max}(a) = \max \left \{1, \sum_{s=1}^{t-1} \boldsymbol{1} \left\{ a \in \mathop{\arg\max}_{a' \in A_t} \left \{\min\left\{\text{UCB}_t(a'),  \text{UCB}^\text{S}_t(a') \right\} - \boldsymbol{\mu}^{\text{(on)}}(a')  \right \} \right\}\right \}
\end{equation*}
be the number of times that $a$ becomes the maximum gap arm. Now we analyze the two terms in $\min$, respectively. Firstly, by the expression of $\text{rad}_t(a)$,
\begin{subequations}
    \begin{align}
        \sum_{t=1}^T  
         \text{rad}_t(a_t)^{\rho} & \le \sqrt{2\log(4 K  T^3/ \delta)} \sum_{t=1}^T N_t(a_t)^{-\rho / 2} \nonumber \\
         & \le  (2\log(4 K  T^3/ \delta))^{\rho/2} \sum_{t=1}^T N_t^{\max}(a_t)^{-\rho / 2} \label{eq:pf-comp-gen-regret-upper-1st-a}\\
         & \le (2\log(4 K  T^3/ \delta))^{\rho/2} \sum_{a \in \mathcal{A}} \sum_{n=1}^{N_T^{\max}(a)} n^{-\rho/2} \label{eq:pf-comp-gen-regret-upper-1st-b}\\
         & \le \frac{1}{1 - \rho /2} (2\log(4 K  T^3/ \delta))^{\rho/2} \sum_{a \in \mathcal{A}} N_T^{\max}(a)^{1-\rho/2} \nonumber \\
         & \le \frac{1}{1 - \rho/2} (2\log(4 K  T^3/ \delta))^{\rho/2} \cdot K^{\rho / 2} \cdot T^{1 - \rho/2},\label{eq:pf-comp-gen-regret-upper-1st-c}
    \end{align}
    \label{eq:pf-comp-gen-regret-upper-1st}
\end{subequations}
where (\ref{eq:pf-comp-gen-regret-upper-1st-a}) comes from the fact that $a_t \in A_t$, hence $N_t(a) \ge N_t^{\max}(a)$. (\ref{eq:pf-comp-gen-regret-upper-1st-b}) comes from rearanging the sum. (\ref{eq:pf-comp-gen-regret-upper-1st-c}) comes from Jensen Inequality and the fact that $\sum_{a \in \mathcal{A}} N_T^{\max}(a) = T$. Secondly, by the expression of $\text{rad}^{\text{S}}_t(a)$,
\begin{subequations}
    \begin{align}
        \sum^T_{t=1}\text{rad}^\text{S}_t(a_t)^{\rho} &= \sum^T_{t=1} \left[\sqrt{\frac{2\log(2 t / \delta_t)}{N_t(a_t)+ T_\text{S}(a_t)}} + \frac{ T_\text{S}(a_t)}{N_t(a_t)+ T_\text{S}(a_t)}\cdot V(a_t) \right]^{\rho} \nonumber\\
        &\leq V_{\max}^{\rho} \cdot T + (2\log(4 K  T^3/ \delta))^{\rho/2}\sum_{a\in {\cal A}} \sum^{N_T(a)}_{n=1} (n+ T_\text{S}(a))^{-\rho/2}.\nonumber
    \end{align}
\end{subequations}
Similarly, suppose $(N_T^*(a))_{a \in \mathcal{A}}$ be an optimal solution for IP (\ref{eq:app-inst-indep-ip}), then we claim that
\begin{equation*}
\sum_{a\in {\cal A}} \sum^{N_T(a)}_{n(a)=1}(n(a)+ T_\text{S}(a))^{-\rho/2} \leq \sum_{a\in {\cal A}} \sum^{N^*_T(a)}_{n(a)=1} (n(a)+ T_\text{S}(a))^{-\rho/2} \le \sum_{a\in {\cal A}} \sum^{\max\{\lceil \tau_*\rceil - T_\text{S}(a), 0\}}_{n(a)=1} (n(a)+ T_\text{S}(a))^{-\rho/2}.
\end{equation*}
For each $a\in {\cal A}$ we then have 
\begin{equation*}
 \sum^{\max\{\lceil \tau_*\rceil - T_\text{S}(a), 0\}}_{n=1} (n+ T_\text{S}(a))^{-\rho/2} \leq \frac{\max\{\lceil \tau_*\rceil - T_\text{S}(a), 0\}}{\lceil \tau_*\rceil} \sum^{\lceil \tau_*\rceil}_{t=1} t^{-\rho/2}\leq \max\{\lceil \tau_*\rceil - T_\text{S}(a), 0\}\cdot \frac{1}{1 - \rho/2} \cdot \tau_*^{-\rho/2}.
\end{equation*}
Combining above, we have
\begin{equation*}
\sum_{a\in {\cal A}} \sum^{\max\{\lceil \tau_*\rceil - T_\text{S}(a), 0\}}_{n=1}(n+ T_\text{S}(a))^{-\rho/2} \leq \sum_{a\in {\cal A}}(n_*(a) + 1)\cdot \frac{1}{1 - \rho/2} \cdot \tau_*^{-\rho/2} \leq T \cdot \frac{2}{1 - \rho/2} \cdot \tau_*^{-\rho/2}.
\end{equation*}
Hence
\begin{equation*}
    \sum^T_{t=1}\text{rad}^\text{S}_t(a_t)^{\rho} \le O \left( \left( V_{\max}^{\rho} + (\log(T/ \delta))^{\rho/2} \cdot \tau_*^{-\rho/2}  \right ) \cdot T \right).
\end{equation*}
Altogether, the Theorem is proved.

\subsection{Proof for instance-independent bound for Combinatorial Bandits with Linear Rewards} 

\label{sec:app-pf-comb-semi-upper-indep}

\subsubsection{Proof for Theorem \ref{thm:comb-instance-independent-upper-bound}}

Similarly, conditioned on $\cap^T_{t=1} \xi_t$, we have

\begin{subequations}
    \begin{align}
        \text{Reg}_T & = \sum_{t=1}^T \left(\sum_{a \in A_*} \mu^{\text{(on)}}(a)  - \sum_{a \in A_t} \mu^{\text{(on)}}(a) \right) \label{eq:pf-comp-semi-regret-upper-decom-a} \\
        & \le  K \Delta_{\max} + \sum_{t=1}^T  \sum_{a \in A_t} \left( \min\left\{
         \text{UCB}_t(a),  \text{UCB}^\text{S}_t(a) 
         \right\} - \mu^{\text{(on)}}(a)  \right) \label{eq:pf-comp-semi-regret-upper-decom-b} \\
         & \le  K \Delta_{\max} + \min \left \{\sum_{t=1}^T  \sum_{a \in A_t} \left( 
         \text{UCB}_t(a) - \mu^{\text{(on)}}(a)  \right) ,\sum_{t=1}^T  \sum_{a \in A_t} \left( 
         \text{UCB}^\text{S}_t(a) - \mu^{\text{(on)}}(a)  \right) \right\} \nonumber \\
         & \le K \Delta_{\max} + 2 \min \left \{\sum_{t=1}^T  \sum_{a \in A_t} \text{rad}_t(a) ,\sum_{t=1}^T  \sum_{a \in A_t} \text{rad}_t^{\text{S}}(a_t) \right\}. \nonumber
    \end{align}
    \label{eq:pf-comp-semi-regret-upper-decom}
\end{subequations}
where (\ref{eq:pf-comp-semi-regret-upper-decom-a}) comes from the definition of $\text{r}_{\boldsymbol{u}}(A) = \sum_{a \in A} u(a)$. (\ref{eq:pf-comp-semi-regret-upper-decom-b}) comes from the following that
\begin{equation*}
    \sum_{a \in A_*} \mu^{\text{(on)}}(a) \le \sum_{a \in A_*} \min\left\{
         \text{UCB}_t(a),  \text{UCB}^\text{S}_t(a) 
         \right\} \le \sum_{a \in A_t} \min\left\{
         \text{UCB}_t(a),  \text{UCB}^\text{S}_t(a) 
         \right\} .
\end{equation*}
By conventional analysis, we can obtain
\begin{equation*}
    \sum_{t=1}^T  \sum_{a \in A_t} \text{rad}_t(a)  \le O \left( \sqrt{mKT\log(T/\delta)}\right).
\end{equation*}
By the analysis similar as the proof of Theorem \ref{thm:upper_indpt}, we can obtain
\begin{equation*}
    \sum_{t=1}^T  \sum_{a \in A_t} \text{rad}_t^{\text{S}}(a_t) \le O \left( \left(\sqrt{\frac{\log(T /\delta)}{\tau_*^{\text{C}}}}+ V_\text{max}  \right)\cdot mT\right).
\end{equation*}
Altogether, the Theorem is proved.

\subsubsection{Discussions: (\ref{eq:comb-semi-upper-indpt}) v.s. (\ref{eq:comb-semi-upper-indpt-nottight})}

In the case of combinatorial bandits with linear rewards, if we directly apply Theorem \ref{thm:comb-gen-instance-independent-upper-bound}, we can obtain bound (\ref{eq:comb-semi-upper-indpt-nottight}). However, in this special case, a tighter bound can be achieved. The key observation comes from the structure of the reward function. Specifically, the boundedness smoothness can only guarantee
\begin{equation*}
    \text{r}_{\boldsymbol{u}_1}(A) - \text{r}_{\boldsymbol{u}_2}(A) \le m \max_{a \in A} |u_1(a) - u_2(a)|,
\end{equation*}
the additive nature of the linear rewards setting allow us to establish a much sharper inequality:
\begin{equation*}
    \text{r}_{\boldsymbol{u}_1}(A) - \text{r}_{\boldsymbol{u}_2}(A) \le \sum_{a \in A} |u_1(a) - u_2(a)|.
\end{equation*}
Leveraging this stronger boundedness condition, we can derive the improved regret bound (\ref{eq:comb-semi-upper-indpt}).

\section{Proofs for Regret Lower Bounds}
\subsection{Notational Set Up and Auxiliary Results}\label{sec:app_aux}
\textbf{Notational Set Up.} We recall the notational set-up in Section \ref{sec:impossibility}. 
Consider a non-anticipatory policy $\pi$ and an instance $I$ with reward distribution $P$. We denote $\rho_{P, \pi}$ as the joint probability distribution function on $S, A_1, R_1, \ldots, A_T, R_T$, the concatenation of the offline dataset $S$ and the online trajectory under policy $\pi$ on instance $I$. For a $\sigma(S, A_1, R_1, \ldots, A_T, R_T)$-measurable event $E$, we denote $\Pr_{P, \pi}(E)$ as the probability of $E$ holds under $\rho_{P, \pi}$. For a  $\sigma(S, A_1, R_1, \ldots, A_T, R_T)$-measurable random variable $Y$, we denote   $\mathbb{E}_{P, \pi}[Y]$ as the expectation of $Y$ under the joint probability distribution function $\rho_{P, \pi}$. In our analysis, we make use of $Y$ being $\text{Reg}_T(\pi, P)$ or $N_T(a)$ for an arm $a$. To lighten our notational burden, we abbreviate $\mathbb{E}_{P, \pi}[\text{Reg}_T(\pi, P) ]$ as $\mathbb{E}[\text{Reg}_T(\pi, P) ]$.

\textbf{Auxiliary Results}
To establish the regret lower bounds and the impossibility result, we need three auxiliary results, namely Claim \ref{claim:Gaussian} and Theorems \ref{thm:BH}, \ref{thm:chain}. Claim \ref{claim:Gaussian} is on the KL-divergence between Gaussian random variables:
\begin{claim}\label{claim:Gaussian}
    For $P_i = {\cal N}(\mu_i, \sigma^2)$, where $i\in \{1, 2\}$, we have $\text{KL}(P_1, P_2) = \frac{(\mu_1 - \mu_2)^2}{2\sigma^2}$.
\end{claim}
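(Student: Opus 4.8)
The plan is to compute the Kullback--Leibler divergence directly from its definition as an expectation of the log-likelihood ratio under the first distribution. Writing the densities $p_i(x) = (2\pi\sigma^2)^{-1/2}\exp\!\left(-\tfrac{(x-\mu_i)^2}{2\sigma^2}\right)$ for $i\in\{1,2\}$, I would start from
\[
\text{KL}(P_1, P_2) = \mathbb{E}_{X\sim P_1}\!\left[\log\frac{p_1(X)}{p_2(X)}\right].
\]
The whole computation hinges on one structural observation: because $P_1$ and $P_2$ share the same variance $\sigma^2$, the normalizing constants $(2\pi\sigma^2)^{-1/2}$ cancel in the ratio, and the log-ratio reduces to a difference of the two quadratic exponents divided by $2\sigma^2$.

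Next I would expand that difference of quadratics. Explicitly,
\[
\log\frac{p_1(x)}{p_2(x)} = \frac{(x-\mu_2)^2 - (x-\mu_1)^2}{2\sigma^2} = \frac{2(\mu_1-\mu_2)\,x + \mu_2^2 - \mu_1^2}{2\sigma^2},
\]
where the crucial cancellation is that the $x^2$ terms vanish. Thus the log-likelihood ratio is an \emph{affine} function of $x$, which means its expectation under $P_1$ requires knowing only the first moment $\mathbb{E}_{X\sim P_1}[X]=\mu_1$, with no second-moment contribution. This is the step that makes the identity clean, and it is the only place where equality of variances is used.

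Finally I would substitute $\mathbb{E}_{X\sim P_1}[X]=\mu_1$ into the affine expression and simplify:
\[
\text{KL}(P_1,P_2) = \frac{2(\mu_1-\mu_2)\mu_1 + \mu_2^2 - \mu_1^2}{2\sigma^2} = \frac{\mu_1^2 - 2\mu_1\mu_2 + \mu_2^2}{2\sigma^2} = \frac{(\mu_1-\mu_2)^2}{2\sigma^2},
\]
which is the claimed formula. There is no real obstacle here, as the argument is a routine moment computation; the only subtlety worth flagging is ensuring the expectation is taken under $P_1$ (the reference distribution) rather than $P_2$, since the KL divergence is asymmetric and using the wrong measure would spoil the cancellation of the linear term.
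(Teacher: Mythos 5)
Your proof is correct: the direct computation of $\mathbb{E}_{X\sim P_1}[\log(p_1(X)/p_2(X))]$, with the cancellation of the $x^2$ terms due to the shared variance and the reduction to a first-moment computation, is the standard argument and the algebra checks out. The paper itself states this claim as a known auxiliary fact without supplying a proof, so there is nothing to contrast against; your derivation fills that in with the canonical computation.
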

The following, dubbed Bretagnolle–Huber inequality, is extracted from Theorem 14.2 from \cite{lattimore2020bandit}:
\begin{theorem}\label{thm:BH}
    Let $\mathbb{P}, \mathbb{Q}$ be probability disributions on $(\Omega, {\cal F})$. For an event $E\in \sigma({\cal F})$, it holds that
    $$
    \Pr_\mathbb{P}(E) + \Pr_\mathbb{Q}(E^c) \geq \frac{1}{2}\exp\left(- \text{KL}(\mathbb{P}, \mathbb{Q})\right).
    $$
\end{theorem}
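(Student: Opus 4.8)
The plan is to pass to densities against a common dominating measure, reduce the event-dependent quantity $\Pr_\mathbb{P}(E)+\Pr_\mathbb{Q}(E^c)$ to an event-free overlap, and then bound that overlap below by $\tfrac12\exp(-\text{KL}(\mathbb{P},\mathbb{Q}))$ using two classical inequalities. First I would dispose of the degenerate case: if $\text{KL}(\mathbb{P},\mathbb{Q})=\infty$ the right-hand side is $0$ and the bound is immediate since probabilities are non-negative, so I may assume $\mathbb{P}\ll\mathbb{Q}$ and fix the dominating measure $\mu=\mathbb{P}+\mathbb{Q}$ with densities $p=d\mathbb{P}/d\mu$ and $q=d\mathbb{Q}/d\mu$.

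The key reduction is the pointwise observation that $p\ge\min(p,q)$ on $E$ and $q\ge\min(p,q)$ on $E^c$, which gives
\begin{equation*}
\Pr_\mathbb{P}(E)+\Pr_\mathbb{Q}(E^c)=\int_E p\,d\mu+\int_{E^c} q\,d\mu\ \ge\ \int_\Omega \min(p,q)\,d\mu .
\end{equation*}
This makes the left-hand side independent of $E$, so it suffices to bound $\int\min(p,q)\,d\mu$ from below. I would do this in two steps. Introducing the Bhattacharyya affinity $\mathrm{BC}:=\int\sqrt{pq}\,d\mu$ and writing $\sqrt{pq}=\sqrt{\min(p,q)}\,\sqrt{\max(p,q)}$, Cauchy--Schwarz yields $\mathrm{BC}^2\le\big(\int\min(p,q)\big)\big(\int\max(p,q)\big)$; since $\int\max(p,q)=2-\int\min(p,q)\le 2$, this rearranges to $\int\min(p,q)\ge\tfrac12\,\mathrm{BC}^2$. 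For the second step I would lower-bound the affinity by the divergence via Jensen: writing $\mathrm{BC}=\mathbb{E}_\mathbb{P}\!\big[\exp(\tfrac12\log(q/p))\big]$ and using convexity of $\exp$ gives $\mathrm{BC}\ge\exp(\tfrac12\mathbb{E}_\mathbb{P}[\log(q/p)])=\exp(-\tfrac12\text{KL}(\mathbb{P},\mathbb{Q}))$, hence $\mathrm{BC}^2\ge\exp(-\text{KL}(\mathbb{P},\mathbb{Q}))$. Chaining the three inequalities delivers the claim.

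The hard part will not be any single step but the careful handling of measure-theoretic edge cases so that both inequalities are legitimate: I must ensure Cauchy--Schwarz is applied to genuinely square-integrable functions and that the Jensen step is justified even where $p$ or $q$ vanishes, adopting the usual conventions for $0\log 0$ and for the ratio $q/p$ on the $\mathbb{P}$-null set $\{p=0\}$. Once those conventions are fixed, the affinity-squared sandwich $\tfrac12\exp(-\text{KL}(\mathbb{P},\mathbb{Q}))\le\tfrac12\,\mathrm{BC}^2\le\int\min(p,q)\,d\mu$ is the conceptual heart of the argument, and the whole proof collapses to this single event-free overlap bound.
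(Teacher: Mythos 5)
Your proof is correct, and it is essentially the standard argument behind the result the paper simply cites (Theorem 14.2 of \cite{lattimore2020bandit}): reduce to the event-free overlap $\int\min(p,q)\,d\mu$, lower-bound it by half the squared Bhattacharyya affinity via Cauchy--Schwarz, and lower-bound the affinity by $\exp(-\tfrac12\text{KL}(\mathbb{P},\mathbb{Q}))$ via Jensen. The paper offers no proof of its own for this auxiliary theorem, so there is nothing further to compare.
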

Lastly, the derivation of the chain rule in Theorem \ref{thm:chain} largely follows from the derivation of Lemma 15.1 in \cite{lattimore2020bandit}:
\begin{theorem}\label{thm:chain}
    Consider two instances $I_P, I_Q$ that share the same arm set ${\cal A}$, online phase horizon $T$, offline sample size $\{T_\text{S}(a)\}_{a\in {\cal A}}$, but have two different reward distributions $P = (P^{(\text{on})}, P^{(\text{off})})$, $Q = (Q^{(\text{on})},Q^{(\text{off})})$. For any non-anticipatory policy $\pi$, 
    it  holds that 
    $$
    \text{KL}(\rho_{P, \pi}, \rho_{Q, \pi}) = \sum_{a\in {\cal A}}\mathbb{E}_{P, \pi}[N_T(a)]\cdot \text{KL}(P^{(\text{on})}_a, Q^{(\text{on})}_a) + \sum_{a\in {\cal A}}T_S(a)\cdot \text{KL}(P^{(\text{off})}_a, Q^{(\text{off})}_a).
    $$
\end{theorem}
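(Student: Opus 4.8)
The plan is to follow the standard density-factorization argument for the KL chain rule, extended to absorb the offline dataset. First I would fix a common dominating measure and write the joint density $\rho_{P,\pi}$ of the concatenated observation $(S, A_1, R_1, \ldots, A_T, R_T)$ explicitly. Using the ordinary chain rule of probability together with the non-anticipatory structure of $\pi$, the density factorizes as
$$
\rho_{P,\pi} = \left[\prod_{a\in{\cal A}}\prod_{s=1}^{T_{\text S}(a)} p^{(\text{off})}_a\big(X_s(a)\big)\right]\cdot \prod_{t=1}^{T} \pi_t\big(A_t\mid H_{t-1}\big)\, p^{(\text{on})}_{A_t}(R_t),
$$
where $p^{(\text{on})}_a, p^{(\text{off})}_a$ denote the densities of $P^{(\text{on})}_a, P^{(\text{off})}_a$. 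The key observation is that each policy factor $\pi_t(A_t\mid H_{t-1})$ depends only on the observed history and not on the reward distribution, so it is \emph{identical} in $\rho_{P,\pi}$ and $\rho_{Q,\pi}$.

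Second I would form the log-likelihood ratio $\log(\rho_{P,\pi}/\rho_{Q,\pi})$. By the factorization above the policy terms cancel, and $\text{KL}(\rho_{P,\pi},\rho_{Q,\pi}) = \mathbb{E}_{P,\pi}[\log(\rho_{P,\pi}/\rho_{Q,\pi})]$ splits cleanly into an offline contribution and an online contribution. The offline part is immediate: under $\rho_{P,\pi}$ the samples $\{X_s(a)\}_{s=1}^{T_{\text S}(a)}$ are i.i.d.\ from $P^{(\text{off})}_a$, so
$$
\mathbb{E}_{P,\pi}\!\left[\sum_{a\in{\cal A}}\sum_{s=1}^{T_{\text S}(a)}\log\frac{p^{(\text{off})}_a(X_s(a))}{q^{(\text{off})}_a(X_s(a))}\right] = \sum_{a\in{\cal A}} T_{\text S}(a)\,\text{KL}\big(P^{(\text{off})}_a, Q^{(\text{off})}_a\big).
$$

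Third, for the online part I would condition on $(H_{t-1}, A_t)$. Given these, the conditional law of $R_t$ is exactly $P^{(\text{on})}_{A_t}$, so the conditional expectation of $\log\big(p^{(\text{on})}_{A_t}(R_t)/q^{(\text{on})}_{A_t}(R_t)\big)$ equals $\text{KL}(P^{(\text{on})}_{A_t}, Q^{(\text{on})}_{A_t})$. Applying the tower property, writing $\text{KL}(P^{(\text{on})}_{A_t}, Q^{(\text{on})}_{A_t}) = \sum_{a\in{\cal A}}\mathbf{1}(A_t=a)\,\text{KL}(P^{(\text{on})}_a, Q^{(\text{on})}_a)$, summing over $t$, and using $N_T(a)=\sum_{t=1}^T\mathbf{1}(A_t=a)$ yields $\sum_{a\in{\cal A}}\mathbb{E}_{P,\pi}[N_T(a)]\,\text{KL}(P^{(\text{on})}_a, Q^{(\text{on})}_a)$. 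Adding the two contributions gives the claimed identity.

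The main care point is the measure-theoretic bookkeeping: I must choose a common dominating measure so that all densities and log-ratios are well defined, and justify interchanging expectation with the sums (e.g.\ by mutual absolute continuity, which holds automatically for the Gaussian instances used in the applications, where every $\text{KL}$ term is finite by Claim \ref{claim:Gaussian}). The conditioning step in the online part is the delicate one: it is precisely here that the non-anticipatory property of $\pi$ (so that $\pi_t$ contributes no $P$-versus-$Q$ discrepancy) and the fact that the conditional law of $R_t$ given $(H_{t-1},A_t)$ is exactly $P^{(\text{on})}_{A_t}$ must both be invoked to reduce the per-round log-ratio to a single $\text{KL}$ term.
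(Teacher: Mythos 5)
Your proposal is correct and follows essentially the same route as the paper's proof: both factorize the joint density of $(S, A_1, R_1, \ldots, A_T, R_T)$, cancel the policy factors in the log-likelihood ratio, and then reduce the offline terms to $T_\text{S}(a)\,\text{KL}(P^{(\text{off})}_a, Q^{(\text{off})}_a)$ and the online terms to $\mathbb{E}_{P,\pi}[N_T(a)]\,\text{KL}(P^{(\text{on})}_a, Q^{(\text{on})}_a)$; your tower-property conditioning on $(H_{t-1}, A_t)$ is exactly the explicit marginalization carried out in the paper's display (\ref{eq:chain_on_t}).
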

We provide a proof to theorem \ref{thm:chain} for completeness sake:

\proof{Proof of Theorem \ref{thm:chain}}
The proof largely follow the well-known chain rule in the multi-armed bandit literature, for example see Lemma 15.1 in \cite{lattimore2020bandit}. We start by explicitly expressing the joint probability function $\rho_{P, \pi}$ on $$S = \{\{ X_s(a)\}_{s=1}^{T_\text{S}(a)}\}_{a\in {\cal A}},A_1, R_1, \ldots, A_T, R_T,$$ under reward distribution $P$ and non-anticipatory policy $\pi$. In coherence with our focus on Gaussian instances, we only consider the case when all the random rewards (offline or online) are continuous random variables with support on $\mathbb{R}$. Generalizing the argument to general reward distributions only require notational changes. By an abuse in notation, we denote $P^{(\text{(off)}}_a(x_s(a))$ as the probability density function (with variable $x_s(a)$) of the offline reward distribution with arm $a$, and likewise for the online reward distribution. 

To ease the notation, we denote $x = \{ x_s(a)\}_{s=1}^{T_\text{S}(a)}$. Then $\rho_{P, \pi}$ is expressed as
\begin{align}
    &\rho_{P, \pi}(x,a_1, r_1, \ldots, a_T, r_T) \nonumber\\
     = & \underbrace{\left[\prod_{a\in {\cal A}} \prod^{T_\text{S}(a)}_{s=1} P^{\text{(off)}}_a(x_s(a))\right]}_{\text{on offline rewards}}\cdot \underbrace{\prod^T_{t=1} \left[  \pi_t(a_t |  x ,a_1, r_1, \ldots, a_{t-1}, r_{t-1}) \cdot P^{\text{(on)}}_{a_t}(r_t)  \right]}_{\text{on online arms and rewards}}\label{eq:pdf_P}.
\end{align}
Likewise, by replacing reward distribution $P$ with $Q$ but keeping the fixed policy $\pi$ unchanged, we have
\begin{align}
    &\rho_{Q, \pi}(x,a_1, r_1, \ldots, a_T, r_T) \nonumber\\
     = & \left[\prod_{a\in {\cal A}} \prod^{T_\text{S}(a)}_{s=1} Q^{\text{(off)}}_a(x_s(a))\right] \cdot \prod^T_{t=1} \left[  \pi_t(a_t | x ,a_1, r_1, \ldots, a_{t-1}, r_{t-1}) \cdot Q^{\text{(on)}}_{a_t}(r_t)  \right]\nonumber.
\end{align}
The KL divergence between $\rho_{P, \pi}$ and $\rho_{Q, \pi}$ is 
\begin{align}
    &\text{KL}(\rho_{P, \pi}, \rho_{Q, \pi}) \nonumber\\
= & \int_{x, r_1, \ldots, r_T}  \sum_{a_1, \ldots, a_T\in {\cal A}} \rho_{P, \pi}(x,a_1, r_1, \ldots, a_T, r_T) \log\left[\frac{\rho_{P, \pi}(x,a_1, r_1, \ldots, a_T, r_T)}{\rho_{Q, \pi}(x,a_1, r_1, \ldots, a_T, r_T)}\right]\text{ d}x\text{ d}r_1\ldots \text{ d}r_T\nonumber,
\end{align}
where $\int_x = \int_{ \{ x_s(a)\}_{s=1}^{T_\text{S}(a)} \in \mathbb{R}^{\sum_{a\in {\cal A}} T_\text{S}(a)}  } $, and $\text{d}x = \prod_{a\in {\cal A}} \prod^{T_\text{S}(a)}_{s=1} \text{d}x_s(a)$.

We use the explicit expressions of $\rho_{P, \pi}, \rho_{Q, \pi}$ to decompose the log term:
\begin{align}
    &\log\left[\frac{\rho_{P, \pi}(x,a_1, r_1, \ldots, a_T, r_T)}{\rho_{Q, \pi}(x,a_1, r_1, \ldots, a_T, r_T)}\right]\nonumber\\
=& \sum_{a\in {\cal A}}\sum^{T_\text{S}(a)}_{s=1} \log\left(\frac{P^{\text{(off)}}_a(x_s(a))}{Q^{\text{(off)}}_a(x_s(a))}\right) \nonumber\\
&+ \sum^T_{t=1} \underbrace{\pi_t(a_t | x ,a_1, r_1, \ldots, a_{t-1}, r_{t-1})  \log\left(\frac{\pi_t(a_t |  x ,a_1, r_1, \ldots, a_{t-1}, r_{t-1}) }{\pi_t(a_t |  x ,a_1, r_1, \ldots, a_{t-1}, r_{t-1}) }\right)}_{ = 0} + \sum^T_{t=1} \log\left( \frac{P^{\text{(on)}}_{a_t}(r_t)}{Q^{\text{(on)}}_{a_t}(r_t)} \right)\nonumber\\
=& \sum_{a\in {\cal A}}\sum^{T_\text{S}(a)}_{s=1} \log\left(\frac{P^{\text{(off)}}_a(x_s(a))}{Q^{\text{(off)}}_a(x_s(a))}\right)  + \sum^T_{t=1} \log\left( \frac{P^{\text{(on)}}_{a_t}(r_t)}{Q^{\text{(on)}}_{a_t}(r_t)} \right)\nonumber.
\end{align}
In the second line, the middle sum is equal to zero, which can be interpreted as the fact that we are evaluating the same policy $\pi$ on reward distributions $P, Q$. Finally, marginalizing and making use of (\ref{eq:pdf_P}) gives, for each $a\in {\cal A}$,
\begin{align}
& \int_{x, r_1, \ldots, r_T}  \sum_{a_1, \ldots, a_T\in {\cal A}} \rho_{P, \pi}(x,a_1, r_1, \ldots, a_T, r_T)  \log\left(\frac{P^{\text{(off)}}_a(x_s(a))}{Q^{\text{(off)}}_a(x_s(a))}\right) \text{ d}x\text{ d}r_1\ldots \text{ d}r_T \nonumber\\
= & \int_{x_s(a)} P^{\text{(off)}}_a(x_s(a)) \log\left(\frac{P^{\text{(off)}}_a(x_s(a))}{Q^{\text{(off)}}_a(x_s(a))}\right) 
 \text{ d}x_x(a) = \text{KL}(P^\text{(off)}_a, Q^\text{(off)}_a) \label{eq:chain_off_a},
\end{align}
where the first equality in (\ref{eq:chain_off_a}) follows by integrating with respect to $r_t$ over $\mathbb{R}$ and then summing over $a_t\in {\cal A}$ in the order of $t = T, \ldots, 1$, and then integrating over all variables in $x$ except $x_s(a)$. For each $t\in \{1, \ldots, T\}$ during the online phase we have 
\begin{align}
& \int_{x, r_1, \ldots, r_T}  \sum_{a_1, \ldots, a_T\in {\cal A}} \rho_{P, \pi}(x,a_1, r_1, \ldots, a_T, r_T)  \log\left(\frac{P^{\text{(on)}}_{a_t}(r_t)}{Q^{\text{(on)}}_{a_t}(r_t)}\right) \text{ d}x\text{ d}r_1\ldots \text{ d}r_T \nonumber\\
= & \int_{x, r_1, \ldots, r_t}  \sum_{a_1, \ldots, a_t\in {\cal A}}
 \left[\prod_{a\in {\cal A}} \prod^{T_\text{S}(a)}_{s=1} P^{\text{(off)}}_a(x_s(a))\right] \cdot \prod^t_{\tau=1} \left[  \pi_\tau(a_\tau |  x ,a_1, r_1, \ldots, a_{\tau-1}, r_{\tau-1}) \cdot P^{\text{(on)}}_{a_\tau}(r_\tau)  \right] \nonumber\\
 &\quad \log\left(\frac{P^{\text{(on)}}_{a_t}(r_t)}{Q^{\text{(on)}}_{a_t}(r_t)}\right) \text{ d}x\text{ d}r_1\ldots \text{ d}r_t\nonumber\\
=& \sum_{a_t\in {\cal A}} \left\{\int_{x, r_1, \ldots, r_{t-1}}  \sum_{a_1, \ldots, a_{t-1}\in {\cal A}}\left[\prod_{a\in {\cal A}} \prod^{T_\text{S}(a)}_{s=1} P^{\text{(off)}}_a(x_s(a))\right] \cdot \prod^{t-1}_{\tau=1} \left[  \pi_\tau(a_\tau |  x ,a_1, r_1, \ldots, a_{\tau-1}, r_{\tau-1}) \cdot P^{\text{(on)}}_{a_\tau}(r_\tau)  \right] \right. \nonumber\\
&\quad \times \pi_t(a_t |  x ,a_1, r_1, \ldots, a_{t-1}, r_{t-1})\text{ d}x\text{ d}r_1\ldots \text{ d}r_{t-1} \Big \} \cdot \int_{r_t} P^{\text{(on)}}_{a_t}(r_t) \log\left(\frac{P^{\text{(on)}}_{a_t}(r_t)}{Q^{\text{(on)}}_{a_t}(r_t)}\right)  \text{ d}r_t \nonumber\\
= & \sum_{a_t\in {\cal A}} \Pr_{P, \pi}(A_t = a_t) \int_{r_t} P^{\text{(on)}}_{a_t}(r_t) \log\left(\frac{P^{\text{(on)}}_{a_t}(r_t)}{Q^{\text{(on)}}_{a_t}(r_t)}\right)  \text{ d}r_t \nonumber\\
= & \sum_{a\in {\cal A}}\mathbb{E}_{P, \pi}[\mathbf{1}(A_t = a)] \cdot  \text{KL}(P^\text{(on)}_a, Q^\text{(on)}_a) 
 \label{eq:chain_on_t}.
\end{align}
Summing (\ref{eq:chain_off_a}) over  $s\in \{1, \ldots, T_\text{S}(a)\}$ and $a\in {\cal A}$, as well as summing (\ref{eq:chain_on_t}) over $t\in \{1, \ldots, T\}$ establish the Theorem.    
\endproof

\subsection{Proof for instance dependent regret lower bound for Multi-armed Bandit} \label{sec:app_lower_ins_dep}

\subsubsection{Proof for Theorem \ref{thm:dep_lb}} 
We denote $\mu^\text{(off)}(a), \mu^\text{(on)}(a)$ as the means of the Gaussian distributions $P^{\text{(off)}}_a, P^{\text{(on)}}_a$ respestively, for each arm $a \in {\cal A}$. In addition, recall the notation that $\mu_*^\text{(on)} = \max_{a\in {\cal A}} \mu^\text{(on)}(a)$, $\Delta(a) = \mu_*^\text{(on)} - \mu^\text{(on)}(a)$ and $\omega(a) = V(a) + (\mu^\text{(off)}(a) - \mu^\text{(on)}(a))$.

Consider an arbitrary but fixed arm $a$ with $\Delta(a) > 0$. We claim that
\begin{align}
\mathbb{E}_{P, \pi}[N_T(a)]&\geq \frac{2(1-p)}{(1+\epsilon)^2} \cdot \frac{\log T}{\Delta(a)^2} + \frac{2}{(1+\epsilon)^2 \Delta(a)^2} \cdot\log\left(\frac{\epsilon \Delta(a)}{8 C}\right) \nonumber\\
    & - T_\text{S}(a)\cdot \max\left\{\left(1 - \frac{ \omega(a) }{(1+\epsilon)\Delta(a)}\right), 0\right\}^2\label{eq:dep_lb_step1}
\end{align}
The Lemma then follows by applying (\ref{eq:dep_lb_step1}) on each sub-optimal arm, thus it remains to prove (\ref{eq:dep_lb_step1}). To proceed, we consider another Gaussain instance $Q = (Q^{\text{(off)}}, Q^{\text{(on)}})$, where 
$$
Q^{\text{(off)}}_k = P^{\text{(off)}}_k\text{ for all $k\in {\cal A}\setminus \{a\}$},
$$
but 
$$
Q^{(\text{on})}_a = {\cal N}(\mu^{\text{(on)}}(a) + (1+\epsilon)\Delta(a), 1), $$
\begin{equation}\label{eq:dep_lb_Qoff}
Q^{(\text{off})}_a =   
  \begin{cases} 
   {\cal N}(\mu^{\text{(off)}}(a), 1) & \text{if } \mu^{\text{(off)}}(a) \geq \mu^{\text{(on)}}(a) + (1+\epsilon)\Delta(a) - V(a) \\
    {\cal N}(\mu^{\text{(on)}}(a) + (1+\epsilon)\Delta(a) - V(a), 1)      & \text{if } \mu^{\text{(off)}}(a) < \mu^{\text{(on)}}(a) + (1+\epsilon)\Delta(a) - V(a)
  \end{cases}.
\end{equation}
It is clear that $P\in {\cal I}_V$ implies $Q\in {\cal I}_V$. Indeed, if the second case in (\ref{eq:dep_lb_Qoff}) holds, then the mean of $Q^{(\text{on})}_a - $ the mean of $Q^{(\text{off})}_a$ is equal to $V(a)$, while if the first case hold, then we evidently have
$$
\underbrace{\mu^{\text{(on)}}(a) + (1+\epsilon)\Delta(a)}_{\text{mean of $Q^{(\text{on})}_a$}} + V(a) \geq \mu^{\text{(off)}}(a) \geq \underbrace{\mu^{\text{(on)}}(a) + (1+\epsilon)\Delta(a)}_{\text{mean of $Q^{(\text{on})}_a$}} - V(a),
$$
thus $Q\in {\cal I}_V$.
Consider the event $E = \{N_T(a) \geq T/2\}$. By the assumed consistency of $\pi$, we have
\begin{align}
2 C T^p & \geq  \mathbb{E}[\text{Reg}_T(\pi, P)] + \mathbb{E}[ \text{Reg}_T(\pi, Q) ]\label{eq:by_consistency}\\
\geq & \frac{T}{2}\cdot \epsilon \Delta(a) \cdot\left[\Pr_{P, \pi}(E) + \Pr_{
Q, \pi}(E^c)\right]\label{eq:by_design}\\
\geq & \frac{T}{4}\cdot \epsilon \Delta(a)\cdot\exp\left[ - \mathbb{E}_{\pi, P}[N_T(a)]\cdot \text{KL}(P^{(\text{on})}_a, Q^{(\text{on})}_a) - T_S(a) \text{KL}(P^{(\text{off})}_a, Q^{(\text{off})}_a)\right] \label{eq:by_chain}.
\end{align}
Step (\ref{eq:by_consistency}) is by the definition of consistency. 
Step (\ref{eq:by_design}) is implied by our construction that in instance $P$, arm $a$ is sub-optimal with optiamlity gap $\Delta(a) \geq \epsilon \Delta(a)$, and in instance $Q$, arm $a$ is the unique optimal arm and other arms have optimality gap at least $\epsilon \Delta(a)$. 
Step (\ref{eq:by_chain}) is by the Chain rule Theorem \ref{thm:chain}, as well as the BH inequality. By our set up, we have
\begin{align}
 \text{KL}(P^{(\text{on})}_a, Q^{(\text{on})}_a)  &= \frac{(1+\epsilon)^2 \Delta(a)^2}{2}, \nonumber\\
 \text{KL}(P^{(\text{off})}_a, Q^{(\text{off})}_a)  &= \frac{ \max\{ (1+\epsilon)\Delta(a) - [V(a) + (\mu^\text{(off)}(a) - \mu^\text{(on)}(a))],0 \}^2}{2}. \nonumber
\end{align}
Plugging in we get
\begin{align}
2 C T^p &\geq \frac{T}{4}\cdot \epsilon \Delta(a) \cdot\exp\left[ - \mathbb{E}_{\pi, P}[N_T(a)]\cdot \frac{(1+\epsilon)^2 \Delta(a)^2}{2} \right.\nonumber\\
&\left. - T_S(a) \frac{ \max\{ (1+\epsilon)\Delta(a) - [V(a) + (\mu^\text{(off)}(a) - \mu^\text{(on)}(a))],0 \}^2}{2}\right],
\end{align}
which is equivalent to 
\begin{align*}
&\mathbb{E}_{\pi, P}[N_T(a)]\cdot \frac{(1+\epsilon)^2 \Delta(a)^2}{2} + T_S(a) \frac{ \max\{ (1+\epsilon)\Delta(a) - [V(a) + (\mu^\text{(off)}(a) - \mu^\text{(on)}(a))],0 \}^2}{2}\nonumber\\
\geq & (1-p)\log T+ \log\left(\frac{\epsilon \Delta(a)}{8C}\right).\nonumber
\end{align*}
Rearranging leads to the claimed inequality (\ref{eq:dep_lb_step1}), and the Lemma is proved.

\subsection{Proof for instance independent regret lower bound for Multi-armed Bandits} \label{sec:app_lower_ins_indep}

\subsubsection{Proof for Theorem \ref{thm-lower-independent}}

We proceed by a case analysis:

\textbf{Case 1a: $2\sqrt{KT}> T\cdot ( V_\text{max} + 1/\sqrt{\tau_*})$, and $V_\text{max} \leq 1/\sqrt{\tau_*}$. }We derive a regret lower bound of  $$\Omega \left(\min\left\{\sqrt{KT}, T\cdot \left(\frac{1}{\sqrt{\tau_*}} + V_\text{max}\right)\right\} \right) =\Omega\left(\frac{T}{\sqrt{\tau_*}}\right).$$

\textbf{Case 1b: $2\sqrt{KT}> T\cdot ( V_\text{max} + 1/\sqrt{\tau_*})$, and $V_\text{max} > 1/\sqrt{\tau_*}$. }We derive a regret lower bound of  $$\Omega \left(\min\left\{\sqrt{KT}, T\cdot \left(\frac{1}{\sqrt{\tau_*}} + V_\text{max}\right)\right\} \right) =\Omega(T\cdot V_\text{max}).$$

\textbf{Case 2: $2\sqrt{KT}\leq ( V_\text{max} + 1/\sqrt{\tau_*})\cdot T$.} We derive a regret lower bound of  $$\Omega \left(\min\left\{\sqrt{KT}, T\cdot \left(\frac{1}{\sqrt{\tau_*}} + V_\text{max}\right)\right\} \right) =\Omega(\sqrt{KT}).$$

We establish the three cases in what follows.

\textbf{Case 1a: $2\sqrt{KT}> T\cdot ( V_\text{max} + 1/\sqrt{\tau_*})$, and $V_\text{max} \leq 1/\sqrt{\tau_*}$. }We derive a regret lower bound of  $$\Omega \left(\min\left\{\sqrt{KT}, T\cdot \left(\frac{1}{\sqrt{\tau_*}} + V_\text{max}\right)\right\} \right) =\Omega\left(\frac{T}{\sqrt{\tau_*}}\right).$$
Now, without loss of generality, we assume that $1\in \text{argmax}_{a\in {\cal A}}T_\text{S}(a)$. Consider the following Gaussian reward distributions $P = (P^\text{(on)}, P^\text{(off)})$ with $P^\text{(on)}_a = P^\text{(off)}_a$ (thus $P\in  {\cal I}_V$ for any $V\in \mathbb{R}^{K}_{\geq 0}$), defined as 
$$P^\text{(on)}_a = \begin{cases}
    {\cal N}(\Delta, 1)       & \quad \text{if } a = 1\\
    {\cal N}(0, 1) & \quad \text{if } a\in {\cal A}\setminus \{1\}
  \end{cases}\text{, where }\Delta = \frac{1}{\sqrt{\tau_*}}.$$
Consider the values $(\tilde{\tau}, \tilde{n})$ defined as $\tilde{n}(a) = \mathbb{E}_{P, \pi}[N_T(a)]$ for each $a\in {\cal A}$, and $\tilde{\tau}= \min_{a\in {\cal A}}\{ T_\text{S}(a) + \tilde{n}(a) \}$. Evidently,  $(\tilde{\tau}, \tilde{n})$  is feasible to (LP), and thus we have $\tilde{\tau}\leq \tau_*$, the optimum of (LP). In particular, there exists an arm $a\in {\cal A}$ such that $T_\text{S}(a)+ \mathbb{E}_{P, \pi}[N_T(a)] \leq \tau_*$. To this end, let's consider two situations:

\textbf{Situation (i): $T_\text{S}(a)+ \mathbb{E}_{P, \pi}[N_T(a)] > \tau_*$ for all $a\in {\cal A} \setminus \{1\}$. } The condition immeidately implies $T_\text{S}(1)+ \mathbb{E}_{P, \pi}[N_T(1)] \leq \tau_*$. Then we can deduce that $T_\text{S}(1)+ \mathbb{E}_{P, \pi}[N_T(1)] < T_\text{S}(a)+ \mathbb{E}_{P, \pi}[N_T(a)]$ for all $a\in {\cal A} \setminus \{1\}$, which further implies that $\mathbb{E}_{P, \pi}[N_T(1)] < \mathbb{E}_{P, \pi}[N_T(a)]$ for all $a\in {\cal A} \setminus \{1\}$ since $T_\text{S}(1) = \max_{a\in {\cal A}}T_\text{S}(a)$. The above implies that $\mathbb{E}_{P, \pi}[N_T(1)] < T/K$, which implies that 
$$
\mathbb{E}[\text{Reg}_T(\pi, P)] > \frac{(K-1)T}{K}.
$$

\textbf{Situation (ii): $T_\text{S}(k)+ \mathbb{E}_{P, \pi}[N_T(k)] \leq \tau_*$ for a $k\in {\cal A} \setminus \{1\}$.} In this case, consider the following Gaussian reward distribution $Q =(Q^\text{(on)}, Q^\text{(off)}) $ with $Q^\text{(on)}= Q^\text{(off)}$, defined as 
$$Q^\text{(on)}_a = \begin{cases}
    {\cal N}(\Delta, 1)       & \quad \text{if } a = 1\\
     {\cal N}(2\Delta, 1)       & \quad \text{if } a = k\\
    {\cal N}(0, 1) & \quad \text{if } a\in {\cal A}\setminus \{1, k\}
  \end{cases}\text{, where }\Delta = \frac{1}{\sqrt{\tau_*}}.$$
To this end, note that $P^\text{(off)}_a = Q^{\text{(off)}}_a = P^\text{(on)}_a = Q^{\text{(on)}}_a$ for all $a\in {\cal A}\setminus \{k\}$, but $P^\text{(on)}_{k} = P^\text{(off)}_{k} \neq Q^{\text{(on)}}_{k} = Q^{\text{(off)}}_{k} $. In addition, both $P, Q$ belong to ${\cal I}_V$. Consider the event $E = \{N_T(1) < T/2\}$. We have
\begin{align}
\mathbb{E}[\text{Reg}_T(\pi, P)] +  \mathbb{E}[\text{Reg}_T(\pi, Q)] & \geq  \frac{T}{2}\cdot \Delta \cdot\left[\Pr_{P, \pi}(E) + \Pr_{Q, \pi}(E^c)\right]\label{eq:by_design_case1a}\\
& \geq  \frac{T}{4}\cdot \Delta \cdot\exp\left[ - \mathbb{E}_{P,\pi}[N_T(k)]\cdot \text{KL}(P^{(\text{on})}_{k}, Q^{(\text{on})}_{k}) - T_\text{S}(k)\text{KL}(P^{(\text{off})}_{k}, Q^{(\text{off})}_{k})\right] \label{eq:by_chain_case1a}\\
& =  \frac{T}{4}\cdot \Delta \cdot \exp\left[- \left( T_\text{S}(k)+ \mathbb{E}_{P,\pi}[N_T(k)] \right)\cdot\text{KL}(P^{(\text{on})}_{k}, Q^{(\text{on})}_{k}) 
 \right]\nonumber\\
& =  \frac{T}{4\sqrt{\tau_*}} \cdot \exp\left[- \left( T_\text{S}(k)+ \mathbb{E}_{P,\pi}[N_T(k)]\right) \cdot\frac{1}{2\tau_*} 
 \right]\nonumber\\
& \geq  \frac{T}{4\sqrt{\tau_*}} \cdot \exp\left[- \tau_*\cdot\frac{1}{2\tau_*} 
 \right] = \frac{1}{4\sqrt{e}}\cdot \frac{T}{\sqrt{\tau_*}}\label{eq:by_case1a_k}.
\end{align}
Altogether, in \textbf{Case 1a}, we either have 
$$
\mathbb{E}[\text{Reg}_T(\pi, P)] > \frac{(K-1)T}{K}
= \Omega\left(\frac{T}{\sqrt{\tau_*}}\right),
$$
or 
$$
\max\{\mathbb{E}[\text{Reg}_T(\pi, P)], \mathbb{E}[\text{Reg}_T(\pi, Q)]\} \geq \frac{1}{8\sqrt{e}}\cdot \frac{T}{\sqrt{\tau_*}}.$$

\textbf{Case 1b: $2\sqrt{KT}> T\cdot ( V_\text{max} + 1/\sqrt{\tau_*})$, and $V_\text{max} > 1/\sqrt{\tau_*}$. }We derive a regret lower bound of  $$\Omega \left(\min\left\{\sqrt{KT}, T\cdot \left(\frac{1}{\sqrt{\tau_*}} + V_\text{max}\right)\right\} \right) =\Omega(T\cdot V_\text{max}).$$ 
 Now, set $\Delta = V_\text{max}/2$, and consider the Guassian instance with reward disribution $P$:
$$
P^\text{(off)}_a = N(0, 1) \text{ for all $a\in {\cal A}$, and } P^\text{(on)}_a = \begin{cases}
    {\cal N}(\Delta, 1)       & \quad \text{if } a = 1,\\
    {\cal N}(0, 1) & \quad \text{if } a\in {\cal A}\setminus \{1\}.
  \end{cases}
$$
Now, there exists an arm $a'\in {\cal A}\setminus \{1\}$ such that $\mathbb{E}_{P, \pi}[N_T(a')] \leq T / (K-1)$. Consider the Gaussian instance with 
$$
Q^\text{(off)}_a = {\cal N}(0, 1) \text{ for all $a\in {\cal A}$, and } Q^\text{(on)}_a = \begin{cases}
    {\cal N}(\Delta, 1)       & \quad \text{if } a = 1,\\
    {\cal N}(2\Delta, 1)       & \quad \text{if } a = a',\\
    {\cal N}(0, 1) & \quad \text{if } a\in {\cal A}\setminus \{1, a'\}.
  \end{cases}
$$
To this end, note that $P^\text{(off)}_a = Q^{\text{(off)}}_a$ for all $a\in {\cal A}$, and $P^\text{(on)}_a = Q^{\text{(on)}}_a$ for all $a\in {\cal A}\setminus \{a'\}$, but $P^\text{(on)}_{a'} \neq Q^{\text{(on)}}_{a'}$. In addition, both $P, Q$ belong to ${\cal I}_V$. Consider the event $E = \{N_T(1) < T/2\}$. We have
\begin{align}
\mathbb{E}[\text{Reg}_T(\pi, P)] + \mathbb{E} [\text{Reg}_T(\pi, Q)] \geq & \frac{T}{2}\cdot \Delta \cdot\left[\Pr_{P, \pi}(E) + \Pr_{Q, \pi}(E^c)\right]\label{eq:by_design_case1b} \\
\geq & \frac{T}{4}\cdot \Delta \cdot\exp\left[ - \mathbb{E}_{P, \pi}[N_T(a')]\cdot \text{KL}(P^{(\text{on})}_{a'}, Q^{(\text{on})}_{a'})\right] \label{eq:by_chain_case1b}\\
\geq & \frac{T}{4}\cdot \Delta \cdot\exp\left[ - \frac{T}{K-1}\cdot 2\cdot \Delta^2\right] \nonumber\\
= & \frac{T}{8} \cdot V_\text{max} \cdot \exp\left[ - \frac{T}{K-1}\cdot \frac{V_\text{max}^2}{2}\right]\nonumber\\
> &\frac{T}{8} \cdot V_\text{max} \cdot \exp\left[ - \frac{K}{K-1}\right] \geq  \frac{1}{8 e^2} \cdot T\cdot V_\text{max}\label{eq:by_case1b_aprime}.
\end{align}
Step (\ref{eq:by_design_case1b}) is again by Theorem \ref{thm:BH}. Step (\ref{eq:by_chain_case1b}) is by the Chain rule (Theorem \ref{thm:chain}), and our observations on the constructed $P, Q$. Step (\ref{eq:by_case2_aprime}) is by the choice of arm $a'$ and the KL divergence between $P^{(\text{on})}_{a'}, Q^{(\text{on})}_{a'}$. The strict inequality in (\ref{eq:by_case1b_aprime}) is by the case assumption that  $2\sqrt{KT}> T\cdot ( V_\text{max} + 1/\tau_*) > T\cdot V_\text{max}$, which implies $\frac{T}{K-1}\cdot \frac{V_\text{max}^2}{2} < \frac{K}{K-1}$. The larger than equal in (\ref{eq:by_case1b_aprime}) is by the assumption that $K\geq 2$. Altogether, the desire regret lower bound of 
$$
\max\{\mathbb{E}[\text{Reg}_T(\pi, P)],\mathbb{E}[\text{Reg}_T(\pi, Q)] \}\geq \frac{1}{16 e^2}\cdot T\cdot V_\text{max}
$$
is achieved.

\textbf{Case 2: $2\sqrt{KT}\leq ( V_\text{max} + 1/\sqrt{\tau_*})\cdot T$.} We derive a regret lower bound of  $$\Omega \left(\min\left\{\sqrt{KT}, T\cdot \left(\frac{1}{\sqrt{\tau_*}} + V_\text{max}\right)\right\} \right) =\Omega(\sqrt{KT}), $$ largely by following \textbf{Case 1b} with a different $\Delta$, as well as the proof of Lemma 15.2 in \cite{lattimore2020bandit}. Recall  $\sqrt{KT}\geq T / \sqrt{\tau_*}$ since $\tau^* \geq T / K$, so the case condition implies $\sqrt{KT}\leq T V_\text{max}$, meaning $\sqrt{K/T} \leq  V_\text{max}$. Now, set $\Delta = (1/2)\sqrt{(K-1)/T}$, and consider the Guassian instance with reward disribution $P$:
$$
P^\text{(off)}_a = {\cal N}(0, 1) \text{ for all $a\in {\cal A}$, and } P^\text{(on)}_a = \begin{cases}
    {\cal N}(\Delta, 1)       & \quad \text{if } a = 1,\\
    {\cal N}(0, 1) & \quad \text{if } a\in {\cal A}\setminus \{1\}.
  \end{cases}
$$
Now, there exists an arm $a'\in {\cal A}\setminus \{1\}$ such that $\mathbb{E}_{P, \pi}[N_T(a')] \leq T / (K-1)$. Consider the Gaussian instance with 
$$
Q^\text{(off)}_a = {\cal N}(0, 1) \text{ for all $a\in {\cal A}$, and } Q^\text{(on)}_a = \begin{cases}
    {\cal N}(\Delta, 1)       & \quad \text{if } a = 1,\\
    {\cal N}(2\Delta, 1)       & \quad \text{if } a = a',\\
    {\cal N}(0, 1) & \quad \text{if } a\in {\cal A}\setminus \{1, a'\}.
  \end{cases}
$$
To this end, note that $P^\text{(off)}_a = Q^{\text{(off)}}_a$ for all $a\in {\cal A}$, and $P^\text{(on)}_a = Q^{\text{(on)}}_a$ for all $a\in {\cal A}\setminus \{a'\}$, but $P^\text{(on)}_{a'} \neq Q^{\text{(on)}}_{a'}$. In addition, both $P, Q$ belong to ${\cal I}_V$. Consider the event $E = \{N_T(1) < T/2\}$. We have
\begin{align}
\mathbb{E}[\text{Reg}_T(\pi, P)] +\mathbb{E}[  \text{Reg}_T(\pi, Q)] \geq & \frac{T}{2}\cdot \Delta \cdot\left[\Pr_{P, \pi}(E) + \Pr_{
Q, \pi}(E^c)\right]\label{eq:by_design_case2}\\
\geq & \frac{T}{4}\cdot \Delta \cdot\exp\left[ - \mathbb{E}_{P, \pi}[N_T(a')]\cdot \text{KL}(P^{(\text{on})}_{a'}, Q^{(\text{on})}_{a'})\right] \label{eq:by_chain_case2}\\
\geq & \frac{T}{4}\cdot \Delta \cdot\exp\left[ - \frac{T}{K-1}\cdot 2\cdot \Delta^2\right] \label{eq:by_case2_aprime}.
\end{align}
Step (\ref{eq:by_design_case2}) is again by Theorem \ref{thm:BH}. Step (\ref{eq:by_chain_case2}) is by the Chain rule (Theorem \ref{thm:chain}), and our observations on the constructed $P, Q$. Step (\ref{eq:by_case2_aprime}) is by the choice of arm $a'$ and the KL divergence between $P^{(\text{on})}_{a'}, Q^{(\text{on})}_{a'}$. Finally, putting in $\Delta = (1/2)\sqrt{(K-1)/T}$ gives us
$$
\max\{\text{Reg}_T(\pi, P),  \text{Reg}_T(\pi, Q) \}\geq \frac{1}{8\sqrt{e}}\cdot \sqrt{(K-1)T}.
$$

Altogether, the three cases cover all possibilites and the Theorem is proved.

\subsection{Proof for instance independent bound for Combinatorial Bandits with Linear Rewards}

\label{sec:app_comb_semi_lower_ins_indep}

\subsubsection{Proof for Theorem \ref{thm:comb-instance-independent-lower-bound}}

The proof based on the special $m$-path bandit problem provided by \cite{kveton2015tight}. Specifically, we define the $m$-path bandit problem as follows. The arm set is $\mathcal{A} = [K]$, where $K$ satisfies $K \ge 2m$ and $K / m$ is an integer. The feasible solution set $\mathcal{B}$ is defined as
\begin{equation*}
    \mathcal{B} = \left\{ \left\{\ell + (j-1)m:\ell = 1,\cdots,m \right \}:j=1,\cdots \frac{K}{m}\right\}.
\end{equation*}
In other words, the feasible solution set consists of $K/m$ paths, and each path contains $m$ arms. Both offline and online reward distributions satisfies the property that the distribution for the arms in the same path are identical. Specifically, $P^{\text{(off)}}_{\ell_1 + (j-1)m} = P^{\text{(off)}}_{\ell_2 + (j-1)m}$ and $P^{\text{(on)}}_{\ell_1 + (j-1)m} = P^{\text{(on)}}_{\ell_2 + (j-1)m}$ for any fixed $j$ and for any $\ell_1, \ell_2 \in [m]$. The offline data size $T_{\text{S}}(a)$ also satisfies that the size of dataset for the arms in the same path are the same, i.e. $T_{\text{S}}(\ell_1 + (j-1)m) = T_{\text{S}}(\ell_2 + (j-1)m) = T_{\text{S},j}$ for any fixed $j$ and for any $\ell_1, \ell_2 \in [m]$.

Notice that this problem is equivalent to a mult-armed bandit with offline data problem, with $K/m$ arms, each arm $j$ with $T_{\text{S},j}$ offline data, where $j \in [K/m]$, and the offline and online reward is scaled by $m$. Therefore, apply the result from Theorem \ref{thm-lower-independent}, we obtain a lower bound
\begin{equation*}
    \Omega \left(m \cdot \min\left\{\sqrt{\frac{K}{m}\cdot T}, \left(\frac{1}{\sqrt{\tau_*'}} + V_\text{max}\right)\cdot T\right\} \right) = \Omega \left(\min\left\{\sqrt{mKT}, \left(\frac{1}{\sqrt{\tau_*'}} + V_\text{max}\right)\cdot mT\right\} \right)
\end{equation*}
where
\begin{equation*}
        \begin{aligned}
            \tau_*'= \max_{\tau,n} \quad & \tau \\
            \text{s.t.} \quad & \tau \leq T_{\text{S},j} + n_j \quad \forall j \in [K/m],\\
            & \sum_{j\in [K/m]}n_j =T,\\
            & \tau\geq 0, n_j\geq 0\quad\;\; \forall j \in [K/m].
        \end{aligned}
    \end{equation*}
For each pair of feasible solution of the above optimization problem $(\tau,\{n_j\}_{j=1}^{K/m})$, the solution $(\tau,\{n(a)\}_{a \in \mathcal{A}})$, where
\begin{equation*}
    n(\ell + (j-1)m) = n_j, \ \forall \ell \in [m], \forall j.
\end{equation*}
is a pair of feasible solution for the problem of (\ref{eq:comb-inst-indpt-LP}) . Thus $\tau_*' \le \tau_*^{\text{C}}$, and the lower bound is proved.

\end{document}